\documentclass{article}

\usepackage{authblk}
\usepackage{graphicx}
\usepackage{amssymb, amsmath, amsthm,mathtools}
\usepackage{epstopdf}
\usepackage{bbm}

\usepackage[marginratio=1:1,height=600pt,width=460pt,tmargin=100pt]{geometry}
\usepackage{layout, color}
\usepackage{bbm}
\usepackage{enumerate}

\usepackage{algorithm}
\usepackage{algpseudocode}

\usepackage{setspace}
\usepackage{enumitem}

\usepackage{array}
\usepackage{multirow}

\usepackage{hyperref}

\usepackage{url}
\usepackage{mdwlist}
\usepackage{multirow}
\usepackage{amsthm}
\usepackage{color}

\usepackage[font=scriptsize,labelfont=bf]{caption}
\usepackage[labelfont=scriptsize,labelfont=scriptsize]{subcaption}

\usepackage{cancel}

\newcommand{\Tr}{{\bf Tr}}
\newcommand{\calM}{ {\cal M}}

\newcommand{\calH}{{\cal H}}
\newcommand{\calL}{\mathcal{L}}
\newcommand{\R}{\mathbb{R}} 
\newcommand{\E}{\mathbb{E}}
\newcommand{\calN}{\mathcal{N}}
\newcommand{\Vol}{\mathrm{Vol}}
\newcommand{\Second}{\textup{I}\!\textup{I}}

\newtheorem{theorem}{Theorem}[section]

\newtheorem{assumption}{Assumption}

\newtheorem{lemma}[theorem]{Lemma}

\theoremstyle{remark}

\newtheorem{conjecture*}{Conjecture}
\theoremstyle{plain}

\usepackage{xcolor}

\begin{document}

\title{
Learning manifold diffusion semigroups from graph transition matrices
}

\author[1]{Xiuyuan Cheng\thanks{Email: xiuyuan.cheng@duke.edu.}}
\author[2]{Nan Wu}

\affil[1]{{\small Department of Mathematics, Duke University}}
\affil[2]{\small Department of Mathematical Sciences, The University of Texas at Dallas}

\date{\vspace{-30pt}}

\maketitle

\begin{abstract}

We consider graph diffusion processes constructed from finite i.i.d.\ samples drawn from an unknown manifold embedded in ambient Euclidean space, where the graph affinity is defined by an ambient Gaussian kernel matrix. We show that the manifold heat semigroup $Q_t = e^{t\Delta}$ can be approximated directly by iterating the graph transition matrix $P$, under only low regularity assumptions on the test function $f$, including the case $f \in L^\infty$. We bound $\| P^n f - Q_t f \|$ in $\infty$-norm, with the operator application to $f$ properly defined, and we recover the classical graph-Laplacian pointwise rate $O(N^{-2/(d+6)})$ up to logarithmic factors, for diffusion times $t $ up to $O(1)$ and longer. The rate holds for in-sample error as well as out-of-sample generalization, where the estimator of $Q_t f$ at a new point is defined via kernel convolution. To handle non-uniform sampling densities on the manifold, we introduce a right-normalization of the graph transition matrix; under the assumption that the sampling density $p$ is $C^3$ and bounded away from zero, the same convergence rates hold. We numerically demonstrate the performance of the proposed estimator on simulated data.

\end{abstract}

\section{Introduction}

\paragraph{Overview}

In many applications, observed data samples in high dimensional space lie on an embedded low-dimensional hidden manifold.
Learning manifold diffusion operators from sampled data is a fundamental problem in high-dimensional data analysis, machine learning, and graph-based solvers of partial differential equations on non-Euclidean domains.
Approximation of differential operators such as the Laplace-Beltrami operator $\Delta$ lies at the heart of spectral methods including Laplacian Eigenmaps \cite{belkin2003laplacian} and Diffusion Maps \cite{coifman2006diffusion}, which compute the eigen-decomposition of the graph Laplacian matrix.

In this work, we consider the heat semigroup $Q_t$ on $\calM$, which provides the solution map of the manifold heat equation, namely, $u(x,t) = Q_t f(x)$ solves the equation
\begin{equation}\label{eq:manifold-heat-eqn}
    \partial_t u = \Delta u,  \quad u|_{t=0} = f.
\end{equation}
One can formally write $Q_t = e^{t \Delta}$.
Let $\calH_t$ denote the manifold heat kernel,  the operator $Q_t$ can be expressed as
$
Q_t f(x) = \int_{\calM} \calH_t(x,y) f(y) dV(y),$
where $dV$ is the (local) Riemann volume form, and $(\calM, dV)$ gives a measure space. 
In other words, the manifold heat kernel $\calH_t$ provides the fundamental solution (Green's function) of the manifold heat equation.
Meanwhile, because the heat equation \eqref{eq:manifold-heat-eqn} coincides with the Fokker-Planck equation of the (no-drift) diffusion process, i.e. the Brownian motion, on $\calM$, we can also view $Q_t$ as the semigroup of the manifold diffusion process.

This work studies approximation of the manifold heat semigroup $Q_t$ by graph diffusion operators constructed from finite i.i.d. samples $\{x_i\}_{i=1}^N$ on $\calM$.
Specifically, we are to study
\begin{equation}\label{eq:operator-approximation-goal}
P^n \approx Q_t, \quad t = n \sigma^2, 
\end{equation}
where $P$ is a Markov transition matrix computed from $x_i$ using an affinity kernel of local scale $\sigma$, 
such that one step transition $P$ corresponds to a short-time diffusion of time $\epsilon = \sigma^2$. Specifically, $P = D^{-1} W$, where $D$ is a diagonal matrix normalizing the rows to be sum-one,
and $W$ is the graph affinity matrix constructed using a Gaussian kernel in the ambient Euclidean space, see details in Section \ref{subsec:set-up}.
Consequently, $n$-step transition $P^n$ approximates the semigroup at diffusion time $t = n\sigma^2$. 
Our theory covers the case when $t $ is up to $O(1)$ and even longer, technically, $t = O(\log^2 N)$.

We will prove the operator approximation \eqref{eq:operator-approximation-goal} in an operator {\it pointwise} sense, that is, for each test function $f$ on $\calM$, we will bound the difference  of  $\| P^n \rho_X f  -  \rho_X Q_{t } f \|_\infty$,
where  $\rho_X$ is the function evaluation operation, namely,
\[
\rho_X f := (f(x_1), \cdots, f(x_N)) \in \R^N.
\]
The problem is to establish quantitative convergence rates as the number of samples $N$ increases,
with $\sigma$ properly scaled (and $\sigma \to 0$ jointly).
Here, after restricting on samples, both $P^n \rho_X f $ and $\rho_X Q_{t } f $ are length-$N$ vectors,
 and  $\| \cdot \|_\infty$ is the vector infinity norm.
 We will also prove a bound at the same rate with out-of-sample extension, where the $\| \cdot \|_\infty$ is the functional $\infty$-norm on $\calM$.

A natural approach to the problem \eqref{eq:operator-approximation-goal} is through the convergence of graph Laplacians,
which has been intensively studied in literature, 
 including earlier works \cite{coifman2006diffusion, singer2006graph, hein2007graph, belkin2008towards},
 and later in \cite{singer2017spectral, trillos2020error, dunson2021spectral, calder2022improved, calder2022lipschitz, cheng2022eigen, trillos2025minimax}.
There are two types of results: the pointwise convergence and the spectral convergence. 
 The pointwise convergence 
 concerns how $L_N f \approx \calL f$ on a test function $f$, where $L_N$ is the graph Laplacian matrix
and $\calL$ is the manifold Laplacian operator, e.g., the Laplace-Beltrami operator $\Delta$.
Here, we omit the $\rho_X$ operator for notation brevity. 
The spectral convergence concerns how $\hat \lambda_l \approx \lambda_l$,
and $\hat u_l \approx \phi_l$, 
where  $\{ \hat \lambda_l, \hat u_l\}_l$ 
and $\{ \lambda_l, \phi_l\}_l$ 
are eigen-pairs of of the graph Laplacian matrix and the manifold Laplacian respectively.

Given the pointwise convergence of graph Laplacian, 
one may then attempt to derive convergence of graph diffusion from the approximation
\[
L_N := 
\frac{1}{\epsilon} (I- P)\approx -\Delta.
\]
However, pointwise graph Laplacian convergence requires differentiating the test function: 
the function $f$ at least needs to be in $C^2$ for $\Delta f$ to be well-defined.
Typically, existing convergence rates require high regularity assumptions such as $f$ is $C^{4}$ on $\calM$.
Iterating graph diffusion estimates while preserving optimal statistical rates would therefore require correspondingly strong regularity assumptions on $f$. 
On the other hand, $Q_t f$ is well-defined and in $C^\infty$ for any $t > 0$ even if $f$ only has low regularity (as long as $f$ is integrable). 
Thus, approximating the integral operator should require much lower regularity of the test $f$ than approximating the differential operator.

Our analysis closes this gap by  {\it directly} analyzing the finite-time diffusion semigroup $Q_t$ instead of approximating the infinitesimal generator $\Delta$.
We leverage the smoothing property of the heat semigroup and derives a multistep approximation strategy for graph diffusion operators,
avoiding differentiating the test function during the iteration procedure of the discrete diffusion process. 
Our theory allows the test function $f$ to be merely bounded, that is, $f \in L^\infty(\calM)$. 
Moreover, the resulting statistical convergence rates match those of classical pointwise graph Laplacian convergence.
We discuss in detail the attempt of using $\Delta f$ convergence rates to prove the $Q_t f$ approximation
in Section \ref{subsec:attempt-GL}.

Another possible route toward heat semigroup approximation is through spectral convergence of graph Laplacians \cite{dunson2021spectral,calder2022improved,cheng2022eigen,calder2022lipschitz,trillos2025minimax}. 
Recall that $\{ \lambda_l, \phi_l\}_l$ are the eigen-pair of  $-\Delta$, 
since the heat semigroup admits the spectral representation
\[
Q_t f
=
\sum_{l=0}^\infty e^{-\lambda_l t}\langle f, \phi_l\rangle_{L^2(\calM)} \phi_l.
\]
one may attempt to approximate $Q_t$ spectrally through graph Laplacian eigen-convergence.
However, controlling finite-time diffusion operators through this route requires accurate approximation over increasingly many eigenmodes as the target accuracy improves (as $N$ increases). 
Existing eigen-convergence theory is primarily formulated for finitely many low-lying eigenpairs, with constants depending on the spectral index. 
In contrast, our analysis establishes direct operator-level approximation of graph diffusion without requiring eigendecomposition or spectral truncation.
Another subtlety lies in the norm: 
existing sharp eigenvector convergence rates are primarily established in vector $2$-norm or weighted $L^2$ norms  \cite{trillos2025minimax}, whereas our result concerns approximation of $Q_t f$ in the $\infty$-norm.

In practice, our estimator only involves iterated multiplication of the $N$-by-$N$ transition matrix to the test function vector, without requiring eigen-decomposition. The matrix-vector updates can furthermore be implemented sparsely and computed on-the-fly with low memory cost.
While the focus of the current work is the statistical convergence, we demonstrate the numerical performance of the proposed estimator on simulated data.

\paragraph{Main results}
We establish bounds of $\|P^n f - Q_t f\|_\infty$ for diffusion times $t = n\sigma^2$ up to $O(1)$ and longer.
For notational simplicity, in the remainder of the introduction section, we omit the function evaluation operator $\rho_X$ 
in the expression.

Specifically, for general sampling density $p$ on $\calM$ that is $C^3$ and uniformly bounded from below, 
after introducing a density corrected transition matrix $\tilde P$, 
we prove in Theorem \ref{thm:pC3} that, as long as $ t = O( \log^2 N)$,
for any $f \in L^\infty(\calM)$, 
at large $N$ and with high probability,
\[
\|  \tilde P^n  f - Q_t f  \|_\infty = \| f\|_\infty O\Big( (t + \log n) \sigma^2 + (t + \sqrt{t}) \sqrt{ \frac{\log N}{N \sigma^{d+2}}} \Big),
\]
 where the constant in big-O only depends on $\calM$ and $p$.
Using $ t = O( \log^2 N)$
-- typically only $t \lesssim 1$ case is of interest --
the bound is further simplified into
\[
O \Big( \sigma^2 + \sqrt{ \frac{\log N}{N \sigma^{d+2}}} \Big) \text{ up to log factor,}
\]
which is $O(N^{- 2/(d+6)})$ when $\sigma\sim N^{-1/(d+6)}$.
 This matches the classical graph Laplacian pointwise convergence rate, which bounds $ \| L_N f - \Delta f \|_{\infty}$ for differentiable $f$.

 When the test function $f$ possesses additional regularity, the theoretical bound improves slightly.
Our analysis shows that when $f \in C^{0,\beta}(\calM)$, $ 0< \beta \le 1$,
the error bound improves by not having the $\log n$ factor (but with additional factors which are powers of $t$).
 The same final rate of $O(N^{- 2/(d+6)})$ is obtained by assuming $t \lesssim 1$.

 The above rates are first proved for on-sample errors, and furthermore, Theorem \ref{thm:out-of-sample} proves an out-of-sample extension result: 
 the graph diffusion approximation converges uniformly for new query points $x \in \calM$ not contained in the sampled dataset, again with the same statistical convergence rate. The functional estimator is naturally constructed using the graph  affinity kernel function and the extension analysis is standard.

For expository purposes,
our analysis first considers when the sampling density $p$ is uniform on $\calM$,  
 and prove the approximation rate of $\|  P^n  f - Q_t f  \|_\infty$  in Theorem \ref{thm:p-uniform}.
 The proof strategy then extends to the case of non-uniform $p$ in Section \ref{sec:theory-extend}.
Technically, key estimates include the pointwise approximation of the manifold heat kernel $\calH_\epsilon$ by the ambient Gaussian kernel $K_\epsilon$ on samples (in one-step, short time $\epsilon$), see Lemma \ref{lemma:Keps-g},
 and the manifold heat smoothing estimates (for multi-step analysis), see Lemma \ref{lemma:smoothing-estimates}.

\paragraph{Notation}

We use the standard big-O notation: 
as $N \to \infty$, $A_N = O(B_N)$ means $|A_N| \le C B_N$ for  some $C > 0$;
When $A_N$ is positive, we also write $A_N \lesssim B_N$.
$A_N \asymp B_N$ means $C_1 B_N \le A_N \le C_2 B_N$ for some  $C_1, C_2 > 0$, all holding for $N$ large enough. 
Our results are non-asymptotic: every bound holds for finite $N$ above an explicit threshold.
We track the dependence of the large-$N$ thresholds and the constants in big-O in proof. 
Typically, the dependence is on the manifold $\calM$ (and on $p$, when $p$ is non-uniform) and independent of the test function that the operator applies to.

\subsection{Attempt by using graph Laplacian pointwise convergence}\label{subsec:attempt-GL}

 Denote by $P = D^{-1} W$ the row-stochastic transition matrix, a typical pointwise rate
 shows that, for a function $f$ on $\calM$, with high probability,
\begin{equation}\label{eq:GL-rate-C4}
\frac{1}{\epsilon}(P - I) \rho_X f=  \rho_X \Delta f + O( \mathrm{Err}),
\end{equation}
and $\mathrm{Err}$ is an $o(1)$ term at large $N$, under the joint limit of $N \to \infty$ and $\epsilon \to 0$.
For example, when the test function $f \in C^4(\calM)$, one can show that \cite{singer2006graph}
\[
\mathrm{Err} =  \epsilon + \sqrt{\frac{\log N}{N \epsilon^{d/2+1}}},
\]
which is $O(N^{- 2/(d+6)})$ up to a log term when $\epsilon \sim N^{-2/(d+6)}$.
The bias error $O(\epsilon)$ degenerates if $f$ has lower regularity:  if $f \in C^{3}(\calM)$, then the bias error is $O(\epsilon^{1/2})$.

Assume $f \in C^4(\calM)$, then  \eqref{eq:GL-rate-C4} formally gives that 
\[
P = I + \epsilon \Delta + \epsilon O( \mathrm{Err} ).
\]
Combined with the formal expansion
\[
e^{\epsilon \Delta} = I + \epsilon \Delta + O(\epsilon^2 \Delta^2),
\]
and that $\epsilon \le  {\rm Err} $, we then have 
\[
e^{\epsilon \Delta} =  P + O(\epsilon {\rm Err} ).
\]
Taking $n$-th power, this gives that 
\[
e^{n \epsilon \Delta}  
= P^n + O( n \epsilon  {\rm Err}  ) 
= P^n + t O(   {\rm Err}  ).
\]
When $t$ is up to a $\log N$ term,
the final error $t O(   {\rm Err}  ) =  O(   {\rm Err}  )$, 
which recovers the pointwise rate of graph Laplacian convergence (up to a log term).

However, as can be seen from the Taylor expansion $e^{\epsilon\Delta}f = f + \epsilon\Delta f 
+ \frac{\epsilon^2}{2}\Delta^2 f + \cdots$, 
to bound the truncation error at the $O(\epsilon^2)$ term would involve $\Delta^2 f$, and requiring $f \in C^4$. 
A key feature of our approach is that we target the semigroup operator $Q_t$ directly, rather than the differential operator $\Delta$. 
 By instead comparing $P$ directly to $Q_\epsilon = e^{\epsilon\Delta}$, the $\Delta^2 f$ term never appears.
 In our analysis, the $O(\epsilon^2)$ one-step error follows from the parametrix expansion of the heat kernel $\mathcal{H}_\epsilon$, and it reflects the leading-order discrepancy between $\calH_\epsilon$ and the ambient Gaussian kernel $K_\epsilon$, see Lemma \ref{lemma:Keps-g}.
 While the $O(\epsilon^2)$ error is only obtained in Lemma \ref{lemma:Keps-g}(i) and requires $f$ to be in $C^2$, in our multi-step analysis we will only apply it to test function $Q_{ m \epsilon} f$ for $m >1$, which is already smooth. 
 The $C^2$ norm of the function $Q_{ m \epsilon} f$ is bounded by heat smoothing estimates. 
 Together, this allows to cover less regular initial function $f$ using the estimates in Lemma \ref{lemma:Keps-g}(ii)(iii) at the first step.

\paragraph{Regularity subtlety and our approach}
One might attempt to reduce to the smooth case by replacing $f$ with $Q_\delta f$ for small 
$\delta > 0$, since $Q_\delta f \in C^\infty$ instantly. However, the heat smoothing estimates 
give $\|\nabla^k Q_\delta f\|_\infty \lesssim \delta^{-k/2}\|f\|_\infty$, which blows up as 
$\delta \to 0$, and balancing this against the approximation error does not cleanly recover the 
$O(\mathrm{Err})$ rate. 
Our approach instead works directly at the level of the integral operator, telescoping the one-step bound $\|Pg - Q_\epsilon g\|_\infty$ 
over $n$ steps.
At step $m \ge 1$, the effective test function $g$ is $Q_{m\epsilon}f$, which has already been smoothed by the semigroup, and our one-step bound only depends on the $C^2$ norm of $g$.
The possible singularity near $t=0$ then appears through summations of the form $\sum_{m=1}^{n} (m \epsilon)^{-k/2}$ 
arising from heat smoothing estimates for $\|\nabla^kQ_t f\|_\infty$  at $t = m \epsilon$,
and these summations remain controllable since only $k=1,2$ are involved. 
Hence, the singular behavior near $t=0$ remains controllable
and no differentiability assumption on the initial $f$ is needed.

\subsection{Related works}

We already explain the relationship of our work to the manifold diffusion and graph Laplacian convergence literature.
This subsection gives some more discussion on  related studies.

\paragraph{Heat kernel estimation}

The manifold heat kernel $\mathcal{H}_t(x,y)$ has well-studied analytical properties, including parametrix expansions and Gaussian upper bounds \cite{rosenberg1997laplacian,grigor1997gaussian,hsu1999estimates}.
Note that the spectral representation $\calH_t(x,y) = \sum_{l=0}^\infty e^{-\lambda_l t}  \phi_l(x) \phi_l (y)$
suggests that one can approximate the population eigenpairs $\{\lambda_l, \phi_l\}_l$  from data 
via spectral convergence of graph Laplacians,
and control the approximation error using a suitable spectral truncation. 
This program is carried out in \cite{dunson2021spectral}, which establishes $L^\infty$ convergence rates for heat kernel estimation.
A complementary perspective is provided by the Diffusion Map \cite{coifman2006diffusion}, where columns of $P^n$
are interpreted as approximations of the heat kernel $\calH_{n \epsilon} (\cdot, x_j)$.
In line with that observation, our semigroup approximation framework naturally yields estimators for $\calH_t(x,y)$
that consist of powers of the transition matrix, 
and thus require no eigendecomposition
(Section \ref{subsec:heat-kernel-estimator}).
A complete convergence analysis of these heat kernel estimators goes beyond the scope of the present work.

\paragraph{Operator learning perspective}

From a broader perspective, the present work can also be viewed as a form of data-driven operator learning, which seeks to approximate infinite-dimensional operators, typically solution maps of PDEs, from finite data \cite{li2021fourier,kovachki2023neural}.
In contrast to black-box neural operator approaches, the graph diffusion operator studied here is directly constructed from sampled geometry and admits explicit statistical convergence guarantees in uniform norm.

\paragraph{Graph construction variants}
Many variants of graph diffusion operators have been developed, 
including 
$k$-nearest-neighbor graphs/adaptive-bandwidth kernels \cite{ting2010analysis, calder2022improved, cheng2022convergence, cheng2024improved}, diffusion with anisotropic kernels \cite{singer2009detecting, talmon2013empirical, berry2016local},
 bi-stochastic normalization \cite{marshall2019manifold,wormell2021spectral,cheng2024bi},
 and diffusion via landmark sets \cite{bermanis2016measure,long2017landmark, shen2022scalability}.
These constructions are often motivated by improved sampling adaptivity, 
noise robustness, computational scalability, or spectral approximation properties. 
The present work focuses on the fixed-bandwidth ambient Gaussian kernel setting, 
and we expect some of our techniques are transferrable to analyzing a broader class of graph constructions.

\section{Heat semigroup approximation}

\subsection{Set-up and assumptions}\label{subsec:set-up}

\begin{assumption}\label{assump:M}
${\calM}$ is a $d$-dimensional compact connected  
 $C^{\infty}$ manifold (without boundary) 
isometrically embedded in $\mathbb{R}^{D}$.
\end{assumption}
For a point $x \in \calM$, we also use $x$ to denote its embedded point in $\R^D$ when there is no confusion.   
$\| x - y\|$ stands for Euclidean distance in $\R^D$ between two points,
and $d_\calM(x,y)$ stands for the manifold geodesic distance.
Recall that $dV$ is the Riemannian volume form. We assume i.i.d. data samples.

\begin{assumption}\label{assump:iid-data}
Data samples $\{ x_i \}_{i=1}^N$ are i.i.d. drawn from a distribution on $\calM$ which has density $p$,  namely the data distribution is $pdV$. 
\end{assumption}
For simplicity, we start from the case where the data density  $p$ is uniform on $\calM$.
In this case, 
since $\Vol(\calM) = \int_{\calM} dV$ is finite (by the compactness of $\calM$),
$p$ equals the positive constant $\Vol(\calM)^{-1}$.
Our construction and theory naturally extend to the case when $p$ is non-uniform, see Section \ref{subsec:non-uniform-p}. 

We also need an assumption on  $\sigma$ as $N$ increases:
\begin{assumption}\label{assump:sigma-large-N}
As $N\to \infty$, $\sigma \to 0+$ and $ \sigma^{d} \gg \log N/ N$.
\end{assumption}
While Assumption \ref{assump:sigma-large-N} is about the asymptotic relation between $N$ and $\sigma$, our theory is non-asymptotic in the sense that our error bounds hold for finite $N$ large enough (and finite $\sigma$ small enough).
Recall that $\sigma^d \gtrsim \log N /N$ is the ``connectivity threshold'', namely when the graph become connected if we connect two points when their distance is less than $\sigma$. 
As to be detailed below, our construction will utilize Gaussian kernel to compute graph affinity, 
while the magnitude of $\sigma$ at large $n$ still indicates the density of the graph connection in terms of how many neighbors of a node have significantly non-zero affinities.

Given $N$ data samples $\{ x_i \}_{i=1}^N$ and a scale parameter $\sigma > 0$, 
we construct the $N$-by-$N$ graph {\it affinity matrix} $W$ as 
\begin{equation}\label{eq:gaussian-affinity-matrix}
    W_{ij} = \frac{1}{\sigma^d} h \Big( \frac{\| x_i - x_j \|^2}{\sigma^2}\Big),
    \quad h(r) = \frac{1}{(4\pi)^{d/2}} e^{-r /4},
\end{equation}
which is an ambient space Gaussian kernel matrix
(the Euclidean distance $\| x_i - x_j \|$ in $\R^D$ is used in the construction). 
The {\it degree matrix} $D$ is a diagonal matrix  defined as 
\[
D_{ii} = \sum_{j=1}^N W_{ij},  
\]
where $D_{ii}$ is strictly positive. 
The graph {\it transition matrix} $P$ is computed from $W$ by row-normalization, that is, 
\[
P = D^{-1} W.
\]
$P$ provides the transition matrix of a discrete-time diffusion process on the $N$ data points, which we call the ``graph diffusion process''. 
Intuitively, with large $N$, the graph diffusion process will approximate the underlying diffusion process (Brownian motion) on the manifold $\calM$.

To be specific, when the kernel scale $\sigma$ is small, the Gaussian kernel matrix in \eqref{eq:gaussian-affinity-matrix} locally approximates the manifold heat kernel $\calH_\epsilon$ of a short time $\epsilon = \sigma^2$:
Observe that  $W_{ij} = K_{\epsilon}(x_i, x_j)$, where 
\begin{equation}
K_\epsilon( x,y):= \frac{1}{(4 \pi \epsilon)^{d/2}} e^{- { \|x- y \|^2}/{4 \epsilon}}, \quad x, y \in \R^D.
\end{equation}
At short $ \epsilon = \sigma^2$, only nearby $x,y$ with $\| x-y\| \lesssim \sigma$ renders the kernel significantly nonzero.
Euclidean distance $\| x- y \|$ approximates the manifold geodesic distance $d_\calM(x,y)$ when  the two points are close, and then
$K_\epsilon(x,y) \approx G_\epsilon(x,y)$ defined as
\begin{equation}\label{eq:def-Geps}
G_\epsilon(x,y) := \frac{1}{(4 \pi \epsilon)^{d/2}} e^{- { d_\calM (x, y)^2}/{4 \epsilon}}, \quad x,y \in \calM.
\end{equation}
The kernel $G_\epsilon$ locally approximates the manifold heat kernel $\calH_\epsilon$ in the leading order, as a result of the manifold heat parametrix \cite{rosenberg1997laplacian,grigor1997gaussian}.
Putting together, we have
\[ 
K_\epsilon( x,y)  
\approx G_\epsilon(x,y) 
\approx \calH_\epsilon (x,y), \quad x, y \text{ nearby on $\calM$.}
\]
When the data density on $\calM$ is uniform, we then have
\[
\int_\calM K_\epsilon(x,y) f(y) p(y) dV(y)
\propto 
\int_\calM K_\epsilon(x,y) f(y) dV(y)
\approx \int_\calM \calH_\epsilon(x,y) f(y) dV(y)
= Q_\epsilon f(x).
\]
As a result, we expect that multiplying by the transition matrix $P$ approximates the heat semigroup operator $Q_\epsilon$ at $\epsilon = \sigma^2$,
and with $n$-th power the desired approximation \eqref{eq:operator-approximation-goal}.

\subsection{Manifold kernel integral estimates}\label{subsec:integral-operator-bias-analysis}

We first derive a few estimates about kernel integrals and the heat semigroup operator on $\calM$
(under Assumption \ref{assump:M}).
These estimates do not involve data samples,
and will be used to analyze the ``bias error'' at finite small $\sigma$.

We introduce the notions of function norms for functions on manifold. 
Let $\nabla$ denote the Riemannian connection, and $\nabla^k$ the k-th covariant derivative.
We also write $\nabla^k_v f(x) := \nabla^k f (x) (v, \cdots, v)$, for $v \in T_x \calM$.
We define
\[
\| \nabla^k f \|_\infty := \sup_{x \in \calM} \|\nabla^k f(x)\|_{op},
\quad \|\nabla^k f(x)\|_{op}:= \sup_{v \in S^{d-1} \subset T_x\calM} | \nabla^k_v f(x)|,
\]
and for $f \in C^k$, define
$
\| f\|_{C^k} := \sum_{l=0}^k \|\nabla^l f \|_\infty.
$
In particular,  for $ f \in C^1(\calM)$,
\[
\| f\|_{C^1}  = \| f\|_\infty + \| \nabla f \|_\infty,
\]
and for $ f \in C^2(\calM)$,
\[
\| f\|_{C^2}  = \| f\|_\infty + \| \nabla f \|_\infty + \| \nabla^2 f \|_\infty.
\]

We also consider the intrinsic H\"older class $C^{0,\beta}(\calM)$ for $0 < \beta \le 1$. Following the notion in \cite{tang2026adaptive},
let $\xi >0$ be the  injectivity radius  of $\calM$, we define
\[
\| f\|_{0,\beta} := \| f\|_\infty + L_{0,\beta}(f),
\quad 
L_{0, \beta}(f):=
\sup_{x \in \calM}  
\sup_{y \in B_{\xi}(x)}  {| f(x) - f(y)|}/{d^\beta_{\mathcal{M}}(x,y)},
\]
and we say $f$ is in $C^{0,\beta}(\calM)$ if $\| f\|_{0,\beta} < \infty$.
In particular, when $\beta = 1$, $f \in C^{0,1}(\calM)$ if and only if $f$ is globally Lipschitz continuous on $\calM$, and $L_{0,1}(f)$ is the (global) Lipschitz constant.  In addition, when $f \in C^1(\calM)$, then $f \in C^{0,1}(\calM)$ and $\| f\|_{C^1} = \| f\|_{0,1}$.
We also note that when $\beta =0$, formally, the $C^{0,0}(\calM)$ class recovers $L^\infty(\calM)$. In this work we write $L^\infty(\calM)$ separately for clarity.

The following lemma establishes the approximation of the manifold heat semigroup operator, applied to a test function $g$,
by the integral operator with the ambient Gaussian kernel $K_\epsilon$ on the manifold.
\begin{lemma}\label{lemma:Keps-g}
There exist positive constants 
$t_1 $, 
$C_4$, $C_5$, all only depending on $\calM$, 
s.t. when $\epsilon < t_1$,
\begin{itemize}
\item[(i)] For any $g \in C^2(\calM)$, 
\begin{equation*}
\int_\calM K_\epsilon( x,y) (g(y) - g(x)) dV(y)
= (Q_\epsilon g - g)(x) + r(x),
\quad \| r \|_{\infty} \le C_4 \| g\|_{C^2} \epsilon^2;
\end{equation*}

\item[(ii)] For any $g \in C^{0, \beta} (\calM)$, $0 < \beta \le 1$,
the residual $r$ satisfies that  $\| r \|_{\infty} \le C_5 \| g\|_{0,\beta} \epsilon^{1 + \beta/2}$;

\item[(iii)]  For any $g \in L^{\infty}(\calM)$, 
the residual $r$ satisfies that 
$\| r \|_{\infty} \le C_5 \| g\|_{\infty} \epsilon$.
\end{itemize}
\end{lemma}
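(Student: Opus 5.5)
We rewrite the residual as an integral against the difference of the two kernels. Since $\int_\calM \calH_\epsilon(x,y)\,dV(y)=1$ on a compact manifold without boundary, we have $(Q_\epsilon g - g)(x) = \int_\calM \calH_\epsilon(x,y)(g(y)-g(x))\,dV(y)$. Hence
\[
r(x) = \int_\calM \big(K_\epsilon(x,y) - \calH_\epsilon(x,y)\big)\,(g(y)-g(x))\,dV(y).
\]
Everything then reduces to bounding the kernel difference $E_\epsilon(x,y):=K_\epsilon-\calH_\epsilon$, integrated against the weights $|g(y)-g(x)|$ (for (ii), (iii)) or against a second-order Taylor expansion of $g$ (for (i)).

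First we localize. Fix $\delta$ smaller than the injectivity radius $\xi$ and a fraction of the reach. On $d_\calM(x,y)\ge \delta$ we use the Gaussian upper bound $\calH_\epsilon(x,y)\lesssim \epsilon^{-d/2}e^{-c\,d_\calM^2/\epsilon}$ and the bound $\|x-y\|\ge c' d_\calM(x,y)$. Both kernels are then $O(\epsilon^{10})$, say, uniformly, and their contribution to $r$ is negligible times $\|g\|_\infty$.

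Inside the ball $B_\delta(x)$ we work in normal coordinates $y=\exp_x(u)$. We use three standard expansions:
\begin{itemize}
\item the metric comparison $\|x-y\|^2 = |u|^2 - \tfrac{1}{12}|\Second(u,u)|^2 + O(|u|^5)$;
\item the volume form $dV = (1 - \tfrac16 \mathrm{Ric}(u,u) + O(|u|^3))\,du$;
\item the parametrix expansion $\calH_\epsilon(x,y) = G_\epsilon(x,y)\big(u_0(x,y)+\epsilon u_1(x,y) + O(\epsilon^2)\big)$, where $u_0 = 1 + \tfrac1{12}\mathrm{Ric}(u,u)+O(|u|^3)$ comes from the van Vleck determinant.
\end{itemize}
Substituting, and writing $|u|\sim\sqrt\epsilon$, we get pointwise on the ball
\[
|E_\epsilon(x,y)| \lesssim \epsilon^{-d/2}e^{-c|u|^2/\epsilon}\Big(\tfrac{|u|^4}{\epsilon}+|u|^2+\epsilon\Big).
\]
This is relative size $O(\epsilon)$ against a Gaussian profile.

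Parts (ii) and (iii) then follow directly. For (iii), bound $|g(y)-g(x)|\le 2\|g\|_\infty$; the Gaussian moments give $\int|E_\epsilon|\,du\lesssim \epsilon$. For (ii), use $|g(y)-g(x)|\le L_{0,\beta}(g)|u|^\beta$, which gains an extra $\epsilon^{\beta/2}$ from the moments, giving $\epsilon^{1+\beta/2}$.

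For (i) one more order is needed, and we get it from parity. Taylor expand $g(\exp_x u) - g(x) = \nabla_u g(x) + \tfrac12 \nabla^2_u g(x) + R$, where $|R|\lesssim \|g\|_{C^2}|u|^2$. (Since only $C^2$ is assumed, $R$ is $o(|u|^2)$ and we cannot use a cubic remainder directly.) The plan is to split $E_\epsilon$ into its even part in $u$ and an odd part that is smaller by $|u|$. The odd part comes from $O(|u|^3)$ and $O(|u|^5)$ corrections such as the $\nabla\Second$ term.

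The linear term $\nabla_u g$ integrated against the even part vanishes by symmetry. Against the odd part it contributes $\epsilon^{-d/2}\int e^{-c|u|^2/\epsilon}(|u|^4/\epsilon+|u|^2)\,|u|\cdot|u|\,du\sim\epsilon^2$.

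The quadratic-order terms, i.e.\ $\tfrac12\nabla^2_u g$ plus the $C^2$ remainder, are bounded by $\|g\|_{C^2}|u|^2$ times $|E_\epsilon|$. This is $O(\epsilon\cdot\epsilon)=O(\epsilon^2)$ by the Gaussian moments.

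The main obstacle is making the kernel difference estimate rigorous with uniform constants. Specifically, we need: (a) a uniform parametrix statement $|\calH_\epsilon - G_\epsilon u_0| \lesssim \epsilon\, G_{c\epsilon}$ near the diagonal, with derivative-free error control; and (b) the precise odd/even structure of the $O(|u|^3)$ corrections, so that the first-order term really yields $\epsilon^2$ and not $\epsilon^{3/2}$. For (a) we would cite the standard Minakshisundaram--Pleijel expansion with uniform remainder on compact manifolds \cite{rosenberg1997laplacian}. For (b) we would check explicitly that every correction to $\|x-y\|^2$, $dV$ and $u_0$ at order $|u|^3$ is odd, so that it pairs with $\nabla_u g$ to give $|u|^4$-type moments.

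Finally, $t_1$ is chosen so that $\epsilon<t_1$ makes the tail bounds and the Taylor regime valid.
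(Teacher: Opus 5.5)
Your proposal is correct and follows essentially the same route as the paper. You write $r$ as the integral of $(K_\epsilon-\calH_\epsilon)(g(y)-g(x))$, truncate using Gaussian tails, and combine the parametrix with the metric comparison to get the relative $O(s^4/\epsilon+s^2+\epsilon)$ kernel difference for (ii)--(iii); for (i) you split off an even leading part (the paper's $J$), killed by parity against $\nabla_\theta g(x)\,s$, plus a remainder carrying an extra factor $s$ or $\sqrt{\epsilon}$ (the paper's $R_{A,2}$). The remaining differences are cosmetic: you use a fixed rather than $\epsilon$-dependent truncation radius, and the paper treats the additive parametrix remainder separately as an $O(\epsilon^4)\|g\|_\infty$ term, which your pointwise bound on $E_\epsilon$ should also do. Your worry (b) is unnecessary, since only the size of the non-even corrections matters, not the parity of the cubic terms, and likewise the volume-form correction only needs $|r_V|\lesssim s^2$ because it multiplies an already $O(\epsilon)$ kernel difference.
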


The proof of the lemma adopts certain techniques in analyzing kernel integrals on manifold,
such as \cite[Lemma 4.1]{tang2026adaptive}.
Here, we need to be careful with the function norm dependence on $g$.

\begin{proof}[Proof of Lemma \ref{lemma:Keps-g}]
In all three cases, $g$ is always in $L^\infty(\calM)$, and then
\[
(Q_\epsilon g - g)(x) = \int_\calM \calH_\epsilon( x,y) (g(y)-g(x)) dV(y).
\]
Thus,
\[
r(x)=  \int_\calM (K_\epsilon(x,y) - \calH_\epsilon( x,y)) (g(y)-g(x)) dV(y),
\]
and we aim to bound $|r(x)|$ uniformly in $x$.
The proof consists of truncating the integral on a local (geodesic) ball around $x$, and showing that the tail contributes negligibly to the integral,
and then comparing $K_\epsilon(x,y)$ to $\calH_\epsilon(x,y)$ locally. 

We proceed in a few steps, and we first introduce the local truncation ball. 
Denote by $B_r(x)$ the local geodesic ball around $x$ of radius $r < \xi$. 
Let $t_0 <1$ and $\delta_0 < \xi$ be as in Lemma \ref{lemma:Heat-short-time}.
Define
$$
\delta(\epsilon) := \sqrt{2(d + 4)\epsilon \log\frac{1}{\epsilon}}.
$$ 
Recall that $\tau$ is the reach and $\xi$ is the injective radius of $\calM$,
and let $t_1'$ be the constant depending on $\calM$ s.t. 
\begin{equation}\label{eq:def-t1M}
\epsilon < t_1' \Rightarrow
2 \delta(\epsilon) < \min \{ \tau/2, \delta_0, 1\}; \quad \text{let } t_1:= \min\{ t_0, t_1'\},
\end{equation}
then $\epsilon < t_1$ also implies $\epsilon  < t_0 <1$.
This allows us to apply Lemma \ref{lemma:Heat-short-time} with 
\[
m = \lceil d/2+3 \rceil
\] 
and $t= \epsilon$, 
and Lemma \ref{lemma:Heat-short-time}(i) gives that $\forall x \in \calM$,
$\forall y \in B_{2\delta (\epsilon)}(x) \subset B_{\delta_0}(x)$,
\begin{equation}\label{eq:heat-kernel-lemma-1}
\begin{split}
& \calH_\epsilon (x,y) = G_\epsilon(x,y) \Big( 
	u_0(x,y) + \epsilon u_1(x,y) + r_2^H(\epsilon, x,y)
	\Big) + r^H(\epsilon, x,y),  \\
& |r_2^H(\epsilon, x,y)| \le C_2' \epsilon^2, 
	\quad |r^H(\epsilon,x,y)| \le C_2 \epsilon^{4}, 
\end{split}
\end{equation}
where $C_2'= \sum_{l=2}^m \| u_l\|_\infty$ is another constant depending on $\calM$,
and the uniform upper bound of $|r^H|$ uses that $m \ge d/2+3$.
In addition, Lemma \ref{lemma:Heat-short-time}(ii) gives that 
\begin{equation}\label{eq:heat-kernel-lemma-2}
\calH_\epsilon( x,y) \le C_3 \epsilon^{-d/2} e^{- d_\calM(x,y)^2/(5\epsilon)}, \quad \forall x, y \in \calM.
\end{equation}

\vspace{5pt}
\noindent
Step 1. truncation to local ball.
\vspace{5pt}

We consider $r(x)= I_1(x) + I_2(x)$, where 
\[
I_1(x) =  \int_{\calM  \backslash  B_{2\delta(\epsilon)}(x) }(K_\epsilon(x,y) - \calH_\epsilon( x,y)) (g(y)-g(x)) dV(y),
\]
\[
I_2(x) =  \int_{   B_{2\delta(\epsilon)}(x) }(K_\epsilon(x,y) - \calH_\epsilon( x,y)) (g(y)-g(x)) dV(y).
\]

We first bound $|I_1(x)|$.
For any $y \notin B_{2\delta(\epsilon)}(x)$, 
one can verify that $\epsilon^{-d/2} e^{- { d_\calM( x,y)^2}/{( 5 \epsilon)}} \le \epsilon^6$.
Then, by the upper bound \eqref{eq:heat-kernel-lemma-2},
we have $0 \le \calH_\epsilon(x,y) \le C_3 \epsilon^6$.

To upper bound $K_\epsilon(x,y)$, 
we observe that $y \notin B_{2\delta(\epsilon)}(x)$ implies that $\|  x - y\| \ge \delta(\epsilon)$:
Suppose otherwise, $\| x - y\| < \delta(\epsilon) < \tau/2$,
and by Lemma \ref{lemma:manifold-reach}, $d_\calM(x,y) \le 2 \| x-y\| < 2 \delta(\epsilon)$, contradicting with $y \notin B_{2\delta(\epsilon)}(x)$.
As a result,
\[
0 \le K_\epsilon(x,y) = \frac{1}{(4\pi \epsilon)^{d/2}} e^{- \| x-y\|^2/(4\epsilon)} \le (4\pi)^{-d/2} \epsilon^2.
\]

Putting together, we then have
\begin{align*}
|I_1(x)| 
& \le 
 \int_{\calM \backslash  B_{2\delta(\epsilon)}(x)} (K_\epsilon(x,y) + \calH_\epsilon(x,y) ) |g(y) - g(x)| dV(y) \\
& \le 2 \| g\|_\infty \Vol(\calM) ( C_3 \epsilon^6 + (4\pi)^{-d/2} \epsilon^2 ) \\
& \le  \| g\|_\infty C_{4,1} \epsilon^2, \quad C_{4,1} = 2 \Vol(\calM) ( C_3 + (4\pi)^{-d/2}  ),
\end{align*}
where in the last inequality we used that $\epsilon < 1$.

\vspace{5pt}
\noindent
Step 2. expansion of $\calH_\epsilon$ on local ball.
\vspace{5pt}

Restricting to $y \in B_{2\delta(\epsilon)}(x) $,
we focus for a moment on $\calH_\epsilon(x,y)$ and recall the expansion \eqref{eq:heat-kernel-lemma-1}.
We write $\calH_\epsilon(x,y) = \calH_{\epsilon}^H(x,y) + r^H(\epsilon, x,y)$, where
\[
 \calH_{\epsilon}^H(x,y) :=  G_\epsilon(x,y) \Big( 
	u_0(x,y) + \epsilon u_1(x,y) + r_2^H(\epsilon, x,y)
	\Big).
\]
Then, $I_2(x) = I_{2}^H(x)+I_3(x)$, where
\begin{align*}
I_{2}^H(x) 
	& = \int_{   B_{2\delta(\epsilon)}(x) }(K_\epsilon(x,y) - \calH_{\epsilon}^H( x,y)) (g(y)-g(x)) dV(y), \\
I_3(x) 
	& = - \int_{   B_{2\delta(\epsilon)}(x) }  r^H( \epsilon, x,y) (g(y)-g(x)) dV(y).
\end{align*}
We will handle $I_{2}^H(x)$ in Step 3. 
Here, using that $|r^H(\epsilon, x, y)| \le C_2 \epsilon^4$ uniformly, we have that 
\begin{align*}
|I_3(x)| 
& \le \int_{   B_{2\delta(\epsilon)}(x) }  |r^H( \epsilon, x,y)| |g(y)-g(x)| dV(y) \\
& \le  \| g\|_\infty  C_{4,2} \epsilon^4, \quad C_{4,2} := 2   \Vol(\calM) C_2  .
\end{align*}

\vspace{5pt}
\noindent
Step 3. comparing $K_\epsilon$ to $\calH_{\epsilon}$ on local ball.
\vspace{5pt}

To analyze $I_2^H(x)$, we revisit the expressions of $\calH_\epsilon^H$ and $K_\epsilon$.
Since $y \in  B_{2\delta(\epsilon)}(x)$, we use normal coordinates at $x$ and write $y = \exp_x( s \theta)$, where $ s = d_\calM(x,y)$ and $\theta \in S^{d-1} \subset T_x \calM \cong  \R^d$.
We have $ s < 2\delta(\epsilon)  < \delta_0 < \xi$.
Recall that 
\[ 
G_\epsilon(x,y) = \epsilon^{-d/2}h \Big( \frac{s^2}{\epsilon} \Big), \quad
K_\epsilon(x,y) = \epsilon^{-d/2}h \Big( \frac{\| x-y\|^2}{\epsilon} \Big),
\quad
h(r) = \frac{1}{(4\pi)^{d/2}} e^{-r /4}.
\]
The comparison of $K_\epsilon(x,y)$ to $\calH_\epsilon^H(x,y)$ thus consists of 
handling 

a) the difference $\calH_\epsilon^H - G_\epsilon$ by controlling the remainder terms in the expansion of $\calH_\epsilon^H$, and

b) the difference $G_\epsilon - K_\epsilon$ by tracking the influence of using $\|x-y\|$ instead of $s=d_\calM(x,y)$ inside $h$.

In this proof, we use big-O notation to indicate bound (instead of any asymptotic meaning):
 $Y = O(Z)$ means $|Y| \le C Z$ for some constant $C$,
 and we will track the constant dependence.
 In all the cases, the big-O notation is used to bound the remainder terms in absolute value,
 and the constants will be depending on $\calM$ and uniformly in $x$ and $\theta$.

We start with a). Because $u_0$, $u_1 \in C^{\infty}(\calM \times \calM)$,  
 and that  $u_0(x,y) = 1+O(s^2)$ by Lemma \ref{lemma:Heat-short-time}(i),  we have 
\[
 \calH_\epsilon^H(x,y)
 = \epsilon^{-d/2} h( \frac{s^2}{\epsilon})( 1 + O(s^2) + O(\epsilon) ).
\]
 For b), by Lemma \ref{lemma:local-comparisons}(ii), we have
$
\|x- y\|^2= s^2 + O(s^4)$, and then by Taylor expansion
\begin{equation*}
h(  \frac{ \| x-y\|^2}{\epsilon}) 
= h( \frac{s^2}{\epsilon}) 
    +h'( \frac{ \tilde r}{\epsilon}) \frac{O(s^4)}{\epsilon},
\end{equation*}
where $ \tilde r $ is between $s^2$ and $\|x-y\|^2$.
Recall that  we have $\| x - y\|\le s < 2\delta(\epsilon) < \tau/2$, then by Lemma \ref{lemma:manifold-reach}, 
$s/2\le  \| x-y \| $. Thus, $\tilde r \ge s^2/4$.
Recall that $h'(r)=-h(r)/4)$,
we have $|h'( \frac{ \tilde r}{\epsilon})| \le \frac{1}{4} h( \frac{s^2}{ 4 \epsilon} )$. This gives that 
\[
K_\epsilon(x,y)
= \epsilon^{-d/2}  h( \frac{s^2}{\epsilon}) 
    + \epsilon^{-d/2}  h( \frac{s^2}{ 4 \epsilon} ) \frac{O(s^4)}{\epsilon}.
\]
We then have
\[
K_\epsilon(x,y) -  \calH_\epsilon^H(x,y)
= \epsilon^{-d/2}  
  h( \frac{s^2}{4 \epsilon}) O(s^2 + \epsilon  + \frac{s^4}{\epsilon}),
\]
where we used that  $h(r) \le h(r/4)$ to simplify the bound.
This estimate would allow us to prove the cases (ii) and (iii) of $g$, which has lower regularity. 

Specifically, we always have
\begin{align*}
| I_{2}^H(x)  |  \le  \int_{   B_{2\delta(\epsilon)}(x) } | K_\epsilon(x,y) - \calH_{\epsilon}^H( x,y))| | g(y)-g(x)| dV(y).
\end{align*}
The volume form  $dV(\exp_x(s\theta)) = V(x,\theta,s) s^{d-1}ds d\theta$,
 and by Lemma \ref{lemma:local-comparisons}(i), 
 we know that $0 \le V(x,\theta,s) \le b_V$
 for some constant $b_V$ depending on $\calM$. As a result, when $g \in L^\infty(\calM)$, we have
\begin{align*}
| I_{2}^H(x)  |  
& \le  2 \| g\|_\infty b_V
	\int_0^{2\delta(\epsilon)}
	\int_{S^{d-1}} | K_\epsilon(x,y) - \calH_{\epsilon}^H( x,y))| s^{d-1}ds d\theta \\
& = 2 \| g\|_\infty b_V |S^{d-1}|
	 \epsilon^{-d/2}
	\int_0^{2\delta(\epsilon)}
	  h( \frac{s^2}{4 \epsilon}) O(s^{2} + \epsilon + \frac{s^4}{\epsilon})  s^{d-1} ds. 
\end{align*}
To bound the integral, we change the variable to \[
u := s/\sqrt{\epsilon},
\]
and then
\[
 \epsilon^{-d/2}
	\int_0^{2\delta(\epsilon)}
	  h( \frac{s^2}{4 \epsilon}) O(s^{2} + \epsilon + \frac{s^4}{\epsilon})  s^{d-1} ds
=O( \epsilon) \int_0^{ 2 \sqrt{ 2 (d+4)  \log \frac{1}{\epsilon} } } h(\frac{u^2}{4})  (u^4+ u^2 + 1)    u^{d-1} du.
\]
Suppose $O(\epsilon)$ term is upper bounded in absolute value by $c_{A,1}\epsilon$, 
the constant  $c_{A,1}$ only depends on $\calM$ as it can be tracked to the expansions of $\calH_\epsilon^H(x,y)$ and $K_\epsilon(x,y)$.
Now,
\begin{align*}
& \int_0^{ 2 \sqrt{ 2 (d+4)  \log \frac{1}{\epsilon} } } h(\frac{u^2}{4})  (u^4+ u^2 + 1)    u^{d-1} du
 \le \int_0^{ \infty} h(\frac{u^2}{4})  (u^4+ u^2 + 1)    u^{d-1} du \\
&~~~ 
= (4\pi)^{-d/2}\int_0^{ \infty} e^{-u^2/16}   (u^4+ u^2 + 1)    u^{d-1} du  \\
&~~~
 \le(4\pi)^{-d/2}\int_0^{ \infty} e^{-u^2/16}   \sum_{l=0}^5 u^{l+d-1} du   =: m_{d,1}.
\end{align*}
Putting together, this shows that 
\[
| I_{2}^H(x)  |  
\le 2 \| g\|_\infty b_V |S^{d-1}| c_{A,1} m_{d,1}  \epsilon. 
\]
Let $C_{5,3} = 2  b_V |S^{d-1}| c_{A,1} m_{d,1} $,
then $| I_{2}^H(x)  |   \le \| g\|_\infty C_{5,3}  \epsilon$.

Putting together the bounds of $|I_1|$, $|I_2^H|$, $|I_3|$, and note that $\epsilon<1$, we have that 
\begin{align*}
|r(x)|
& \le |I_1(x)|  + |I_3(x)| + |I_2^H(x)| \\
& \le \| g\|_\infty C_{4,1} \epsilon^2
	+ \|g\|_\infty C_{4,2} \epsilon^4 +  \| g\|_\infty C_{5,3} \epsilon  \\
& \le \| g\|_\infty C_{5} \epsilon, \quad C_5 := 	C_{4,1} + C_{4,2} +C_{5,3}.
\end{align*}
The constant $C_5$ only depends on $\calM $ (and $d$) and is uniform in $x$. This proves case (iii) of the lemma.

\

For case (ii), $g \in C^{0,\beta}(\calM)$. Since $s = d_\calM(x,y) < 2\delta(\epsilon) < \delta_0 < \xi$, we have
\[ 
|g(y) - g(x)| \le L_{0,\beta}(g) s^\beta.
\]
Then, together with that $dV(y) \le b_V s^{d-1} ds d\theta$, we have
\begin{align*}
| I_{2}^H(x)  |
& \le L_{0,\beta}(g)  b_V
	\int_0^{2\delta(\epsilon)}
	\int_{S^{d-1}} | K_\epsilon(x,y) - \calH_{\epsilon}^H( x,y))| s^{\beta+d-1}ds d\theta  \\
& = L_{0,\beta}(g)  b_V |S^{d-1}| 
	\epsilon^{-d/2}  
	\int_0^{2\delta(\epsilon)}
	  h( \frac{s^2}{4 \epsilon}) O(s^2 + \epsilon  + \frac{s^4}{\epsilon})
  	 s^{\beta+d-1}ds \\
& \le L_{0,\beta}(g)  b_V |S^{d-1}|  c_{A,1} \epsilon^{\beta/2+1}
	\int_0^{ 2 \sqrt{ 2 (d+4)  \log \frac{1}{\epsilon} } } 
	h(\frac{u^2}{4})  (u^4+ u^2 + 1)    u^{\beta+d-1} du,
\end{align*}
where in the last inequality we used the constant $c_{A,1}$ to bound the big-O term again.
To upper bound the integral in $du$, we again relax to $\int_0^\infty$, and note that since $0<\beta \le 1$,
we have $u^{\beta} \le 1+ u$, $\forall u \ge 0$.
Then, 
\[
\int_0^{ 2 \sqrt{ 2 (d+4)  \log \frac{1}{\epsilon} } } 
	h(\frac{u^2}{4})  (u^4+ u^2 + 1)    u^{\beta+d-1} du
\le
\int_0^{ \infty} 
	h(\frac{u^2}{4})  (u^4+ u^2 + 1)   (1+u) u^{d-1} du \le m_{d,1}.
\]
This shows that
\[
| I_{2}^H(x)  |  
\le  L_{0,\beta}(g)  b_V |S^{d-1}|  c_{A,1} m_{d,1} \epsilon^{\beta/2+1}
\le L_{0,\beta}(g)  C_{5,3}  \epsilon^{\beta/2+1}.
\]
Combined with the bounds of $|I_1|$, $|I_3|$ and that $\epsilon<1$, $ 0< \beta \le 1$, this gives that 
\begin{align*}
|r(x)|& \le \| g\|_\infty C_{4,1} \epsilon^2
	+ \|g\|_\infty C_{4,2} \epsilon^4 + L_{0,\beta}(g)  C_{5,3}  \epsilon^{\beta/2+1}  \\
& \le \| g\|_{0,\beta} C_{5} \epsilon^{\beta/2+1},
\end{align*}
which proves case (ii) of the lemma.

 \
 
 To prove the case for $g \in C^2(\calM)$, we need to refine the above estimates ``to the next order''. 
 Specifically, for the expansion of $\calH_\epsilon^H$ in a),  
 we have
 \begin{align*}
 u_0(x,y) &= 1 + q_2^U(x,\theta)s^2 + O(s^3), \quad q_2^U(x,\theta) = \frac{1}{12} \texttt{Ric}_{x}(\theta,\theta),\\
 u_1(x,y) &= u_1(x,x) + O(s),
 \end{align*}
 Thus,
 \[
 \calH_\epsilon^H(x,y) 
 	= \epsilon^{-d/2} h( \frac{s^2}{\epsilon})
	( 1 +  q_2^U(x,\theta) s^2  
  	+ \epsilon u_1(x,x) + O(s^3  + \epsilon  s +\epsilon^2)).
 \]
 In b), we also use the refined expansion of $\| x-y\|^2$ in Lemma \ref{lemma:local-comparisons}(ii):
 with $q_4^M( x,\theta) = -\frac{1}{12} \|\Second_x(\theta,\theta)\|^2 $,
 \[
\|x- y\|^2= s^2 +q_4^M(x,\theta) s^4 + r_5 =: s^2 +  r_4,
\]
where $r_5 = O( s^5)$, $r_4 =O(s^4)$,
and $q_4^M(x,-\theta) = q_4^M(x,\theta)$.

Recall that $h'(r) = -h(r)/4$, and $h^{(2)}(r) = h(r)/16$.
This gives the refined Taylor expansion
\begin{equation*}
h(  \frac{s^2+ r_4}{\epsilon}) 
= h( \frac{s^2}{\epsilon}) 
        +
	h'( \frac{s^2}{\epsilon}) \frac{r_4}{\epsilon}
	+ \frac{1}{2} h^{(2)}( \frac{ \tilde r'}{\epsilon} ) (\frac{r_4}{\epsilon})^{2},
\end{equation*}
where $ \tilde r' $ is  again between $s^2$ and $\|x-y\|^2$.
For the same reason as before, $\tilde r' \ge s^2/4$, and then $
| h^{(2)}(  \frac{ \tilde r'}{\epsilon} )  |
= \frac{1}{16}h (  \frac{ \tilde r'}{\epsilon} ) 
\le \frac{1}{16}h (  \frac{ s^2 }{4 \epsilon} )  $.
Combined with the bounds of $|r_4|$, $|r_5|$, we have
\[
K_\epsilon(x,y) 
 = \epsilon^{-d/2} h(\frac{s^2}{\epsilon}) 
 	\Big( 1 -  \frac{1}{4} q_4^M  (x,\theta)  \frac{s^4}{\epsilon}  \Big)  
	+  \epsilon^{-d/2}  h( \frac{s^2}{4\epsilon} ) O( \frac{s^5}{\epsilon}  + \frac{s^8}{\epsilon^2} ) ,
\]
where we used $h(r) \le h(r/4)$ to simplify the big-O term.
In both the expansions of $\calH_\epsilon^H$ and $K_\epsilon$,
 all the constants in big-O depend on $\calM$ only and is uniform for $x$, $\theta$.
 Subtracting the two, 
 and letting  $ Q_0(x) = - u_1(x,x) $,
 $Q_2 (x,\theta) = -q_2^U(x,\theta)$,
 $Q_4(x,\theta) = -\frac{1}{4} q_4^M(x,\theta)$,
 we have
 \begin{align*}
 K_\epsilon(x,y) -  \calH_\epsilon^H(x,y)
& =  J(\epsilon, x,y)   + R_{A,2}(s  , \theta )\\ 
J(\epsilon, x,y) 
& := \epsilon^{-d/2}  
	  		h( \frac{s^2}{ \epsilon}) 
			  \Big(Q_4(x,\theta) \frac{s^4}{\epsilon} 
			   + Q_2 (x,\theta) s^2  +  \epsilon Q_0(x)  \Big)\\
|R_{A,2}(s   , \theta)| 
& \le c_{A,2}
   \epsilon^{-d/2}  h( \frac{s^2}{4\epsilon} ) 
  (  \frac{s^8}{\epsilon^2} + \frac{s^5}{\epsilon}  + s^3  + \epsilon  s +\epsilon^2), \\
|J(\epsilon, x,y)|
& \le    c_{A,2}  \epsilon^{-d/2}  h( \frac{s^2}{4\epsilon} )
 		(\frac{s^4}{\epsilon}  + s^2  +  \epsilon),
 \end{align*}
where we used $h(r) \le h(r/4)$  again to simplify the bound,
and the constant  $c_{A,2}$ only depends on $\calM$.
In addition, $Q_j$ for $j=0,2,4$, are all even in $\theta$, in the sense that $Q_j(x, -\theta) = Q_j(x,\theta)$.

We now write
\begin{align*}
I_2^H(x) 
 & = I_4(x) + I_5(x), \\
I_4(x)& = \int_{   B_{2\delta(\epsilon)}(x) }
			J(\epsilon, x,y)
			(g(y)-g(x)) dV(y),\\
I_5(x)& = \int_{   B_{2\delta(\epsilon)}(x) }
		R_{A,2}(s   , \theta)
		(g(y)-g(x)) dV(y).
\end{align*}
By the same argument of bounding $|I_2^H(x)|$ for $g \in C^{0,\beta}(\calM)$ above, we can show that 
\begin{align*}
|I_5(x)| 
& \le \int_{   B_{2\delta(\epsilon)}(x) }
		 | R_{A,2}(s  , \theta) |
		| g(y)-g(x)| dV(y) \\
& \le \| \nabla g\|_\infty b_V 
	\int_0^{2\delta(\epsilon)}
	\int_{S^{d-1}} 
		 | R_{A,2}(s  , \theta) |
		s^{d} ds d\theta \\
& \le \| \nabla g\|_\infty b_V  |S^{d-1}| c_{A,2}
	\int_0^{2\delta(\epsilon)}
	   \epsilon^{-d/2}  h( \frac{s^2}{4\epsilon} ) 
	  (  \frac{s^8}{\epsilon^2} + \frac{s^5}{\epsilon}  + s^3  + \epsilon  s +\epsilon^2)
		s^{d} ds,  
\end{align*}
and
\begin{align*}
& \int_0^{2\delta(\epsilon)}
	   \epsilon^{-d/2}  h( \frac{s^2}{4\epsilon} ) 
	  (  \frac{s^8}{\epsilon^2} + \frac{s^5}{\epsilon}  + s^3  + \epsilon  s +\epsilon^2)
		s^{d} ds
 \le \int_0^{\infty}
	   h( \frac{u^2}{4} ) 
	  (  \epsilon^2 (u^8+1) + \epsilon^{3/2} (u^5 + u^3 + u)  ) \sqrt{\epsilon} 
		u^{d} du \\
& \le \epsilon^2 \int_0^{\infty}
	   h( \frac{u^2}{4} ) 
	   \sum_{l=1}^9 u^{l+d-1} du =: \epsilon^2 m_{d,2},
\end{align*}
where we used $\epsilon <1$ in the last inequality.
This shows that 
\[
|I_5(x)|  \le  \| \nabla g\|_\infty b_V  |S^{d-1}| c_{A,2}  m_{d,2}  \epsilon^2.
\]
Let $C_{4,3} := (b_V + c_V)  |S^{d-1}| c_{A,2}  m_{d,2} $,
which will also be used below to bound $|I_{4}(x)|$, 
we have $|I_5(x)|  \le  \| \nabla g\|_\infty C_{4,3} \epsilon^2$.

To bound $|I_4(x)|$, we will also need to expand $dV(y)$ and $g(y)-g(x)$.
By Lemma \ref{lemma:local-comparisons}(i), we have
\[
dV(y)= (1+ r_V(s  , \theta)  )s^{d-1} ds d\theta, \quad |r_V(s  , \theta)| \le c_V s^2.
\]
where $c_V$ depends on $\calM$ only and is uniform for $x$, $\theta$.
Since $g \in C^2(\calM)$, we have
\begin{equation}\label{eq:expansion-g-C2}
g(y)= g(x) +  \nabla_\theta g(x) s + r_2^F(s , \theta) , \quad |r_2^F(s , \theta)| \le \| \nabla^2 g\|_\infty s^2.
\end{equation}
Then, we have 
\begin{align*}
I_4(x) 
& = I_{4,1}(x) +  I_{4,2}(x)+ I_{4,3}(x), \\
I_{4,1}(x) 
& = 
	\int_0^{2\delta(\epsilon)}
			\int_{S^{d-1}} 
			J(\epsilon, x,y)
			\nabla_\theta g(x)   
			s^{d} ds d\theta, \\
I_{4,2}(x) 
& = \int_0^{2\delta(\epsilon)}
			\int_{S^{d-1}} 
			J(\epsilon, x,y)
			 \nabla_\theta g(x) 
			r_V(s  , \theta) s^{d} ds d\theta,	\\	
I_{4,3}(x) 
& = \int_{   B_{2\delta(\epsilon)}(x) } 
			J(\epsilon, x,y)
			 r_2^F(s , \theta)
			dV(y). 
\end{align*}
Observe that 
\[
I_{4,1}(x) 
 = 
	\int_0^{2\delta(\epsilon)}
			\int_{S^{d-1}} 
			\epsilon^{-d/2}  
	  		h( \frac{s^2}{ \epsilon}) 
			  \Big(Q_4(x,\theta) \frac{s^4}{\epsilon} 
			   + Q_2 (x,\theta) s^2  +  \epsilon Q_0(x)  \Big)
			\nabla_\theta g(x)   
			s^{d} ds d\theta, 
\]
and recall that $Q_0$, $Q_2$, $Q_4$ are even in $\theta$,
while $\nabla_\theta g(x)$ is odd in $\theta$. As a result, the integration of $d\theta$ over $S^{d-1}$ renders $I_{4,1}(x)=0$.

The terms $|I_{4,2}|$ and $|I_{4,3}|$ will be bounded by the same argument as above:
One can verify that
\begin{align*}
|I_{4,2}(x)|
& \le \int_0^{2\delta(\epsilon)}
			\int_{S^{d-1}} 
			|J(\epsilon, x,y)|
			  |\nabla_\theta g(x)|  
			|r_V(s  , \theta)| s^{d} ds d\theta \\
&\le \| \nabla g\|_\infty c_V  |S^{d-1}|  c_{A,2}
			 \int_0^{2\delta(\epsilon)} 
			  \epsilon^{-d/2}  h( \frac{s^2}{4\epsilon} )
 		(\frac{s^4}{\epsilon}  + s^2  +  \epsilon)
			 s^{d+2} ds  \\
& \le \| \nabla g\|_\infty c_V  |S^{d-1}|  c_{A,2} \epsilon^{5/2}
			 \int_0^{\infty} 
			   h( \frac{u^2}{4} )
	 		 (u^4   + u^2  +  1) 
			 u^{d+2}  du  \\		
& \le \| \nabla g\|_\infty c_V  |S^{d-1}|  c_{A,2} m_{d,2} \epsilon^{5/2},		 	
\end{align*}
\begin{align*}
|I_{4,3}(x)|
& \le \int_{   B_{2\delta(\epsilon)}(x) } 
			|J(\epsilon, x,y)|
			 |r_2^F(s  , \theta)|
			dV(y) \\
& \le \| \nabla^2 g\|_\infty b_V |S^{d-1}| c_{A,2} 
	 \int_0^{2\delta(\epsilon)} 
			 \epsilon^{-d/2}  h( \frac{s^2}{4\epsilon} )
 			(\frac{s^4}{\epsilon}  + s^2  +  \epsilon)
						s^{d+1} ds \\
& \le \| \nabla^2 g\|_\infty b_V |S^{d-1}| c_{A,2} \epsilon^2
		 \int_0^{\infty} 
			 h( \frac{u^2}{4} )
 			(u^4  + u^2  +  1)u^{d+1}du	\\
& \le \| \nabla^2 g\|_\infty b_V |S^{d-1}| c_{A,2}  m_{d,2} \epsilon^2.				
\end{align*}
Putting together, and by  $\epsilon < 1$, we have
\[
|I_4(x)| 
\le 
|I_{4,2}(x)| + |I_{4,3}(x)|
\le 	(\| \nabla g\|_\infty + \| \nabla^2 g\|_\infty)
	C_{4,3} \epsilon^2.
\]
This then gives that 
\[
|I_2^H(x)| \le |I_4(x)| + |I_5(x)| \le 
(2 \| \nabla g\|_\infty + \| \nabla^2 g\|_\infty)
	C_{4,3} \epsilon^2.
\]
Finally, we have
\begin{align*}
|r(x)|
& \le |I_1(x)|  + |I_3(x)| + |I_2^H(x)| \\
& \le \| g\|_\infty C_{4,1} \epsilon^2
	+ \|g\|_\infty C_{4,2} \epsilon^4 +
		  (2 \| \nabla g\|_\infty + \| \nabla^2 g\|_\infty) C_{4,3} \epsilon^2 \\
& \le \| g\|_{C^2} C_{4} \epsilon^2, \quad C_4 := 	C_{4,1} + C_{4,2} +2C_{4,3}.
\end{align*}
This proves case (i) of the lemma.
\end{proof}

We recall that $Q_t$ is contractive in $L^\infty$ norm (Lemma \ref{lemma:Qt-contractive-Linf}).
The next lemma provides some estimates of the 
heat semigroup operator $Q_t$ applied to a test function $g$.

\begin{lemma}\label{lemma:Qeps-g-estimate}
\begin{itemize}
\item[(i)] 
$\forall t > 0$,   $\forall g \in C^2(\calM)$, $ \| Q_t g - g \|_\infty \le d \| \nabla^2 g\|_\infty t $.
\item[(ii)] 
Let $t_1$ be as in Lemma \ref{lemma:Keps-g},
there exists a constant $C_6$ depending on $\calM$ s.t.,
 $\forall \epsilon < t_1$, $\forall g \in C^{0,\beta}(\calM)$, $0 < \beta \le 1 $,
$ \| Q_\epsilon g - g \|_\infty \le C_6 \| g \|_{0,\beta} \epsilon^{\beta/ 2} $.
In particular, if $g \in C^1(\calM)$, then 
$ \| Q_\epsilon g - g \|_\infty \le C_6  \| g \|_{C^1} \epsilon^{1/ 2} $.
\item[(iii)] 
$\forall t > 0$,   $\forall g \in L^\infty(\calM)$, $ \| Q_t g - g \|_\infty \le 2 \|  g\|_\infty $.
\end{itemize}
\end{lemma}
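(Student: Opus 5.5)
Part (iii) is immediate from the contraction property (Lemma~\ref{lemma:Qt-contractive-Linf}): $\|Q_t g - g\|_\infty \le \|Q_t g\|_\infty + \|g\|_\infty \le 2\|g\|_\infty$. The other two parts use different tools: semigroup calculus for (i) and the kernel representation of $Q_\epsilon$ for (ii).

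For (i), I write $Q_t g - g = \int_0^t \partial_s Q_s g\, ds = \int_0^t Q_s \Delta g\, ds$. This holds since $g\in C^2$ and $\Delta$ commutes with $Q_s$ on $C^2$ functions, so $\partial_s Q_s g = Q_s\Delta g$. Contractivity of $Q_s$ in $L^\infty$ then gives $\|Q_t g-g\|_\infty \le t\|\Delta g\|_\infty$. It remains to check $|\Delta g(x)| = |\mathrm{tr}\,\nabla^2 g(x)| \le d\,\|\nabla^2 g(x)\|_{op}$: in an orthonormal frame $e_1,\dots,e_d$ we have $\Delta g = \sum_i \nabla^2_{e_i} g$, and each term is bounded by the operator norm. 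The one point needing care is the justification of $\partial_s Q_s g = Q_s\Delta g$ with only $C^2$ regularity. I would handle it by symmetry of the heat kernel and integration by parts, $\partial_s \int \calH_s(x,y)g(y)\,dV(y) = \int \Delta_y\calH_s(x,y) g(y)\,dV(y) = \int \calH_s(x,y)\Delta g(y)\,dV(y)$, together with continuity of $s\mapsto Q_s\Delta g$ at $0$ so that the fundamental theorem of calculus applies on $[0,t]$.

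For (ii), I write $(Q_\epsilon g-g)(x) = \int_\calM \calH_\epsilon(x,y)(g(y)-g(x))\,dV(y)$, using $\int\calH_\epsilon(x,\cdot)\,dV=1$. I split the domain into $B_{2\delta(\epsilon)}(x)$ and its complement, exactly as in Step 1 of the proof of Lemma~\ref{lemma:Keps-g}. On the complement, the Gaussian upper bound \eqref{eq:heat-kernel-lemma-2} gives $\calH_\epsilon\le C_3\epsilon^6$, so that piece is at most $2\|g\|_\infty \Vol(\calM) C_3\epsilon^6$. On the local ball, $d_\calM(x,y)<\xi$, so $|g(y)-g(x)|\le L_{0,\beta}(g)s^\beta$. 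I then use \eqref{eq:heat-kernel-lemma-2} again, $\calH_\epsilon \le C_3\epsilon^{-d/2}e^{-s^2/(5\epsilon)}$, together with $dV\le b_V s^{d-1}ds\,d\theta$. After the substitution $u=s/\sqrt\epsilon$, the local piece is bounded by $L_{0,\beta}(g)\,C_3 b_V|S^{d-1}|\,\epsilon^{\beta/2}\int_0^\infty e^{-u^2/5}(1+u)u^{d-1}du$, where I used $u^\beta\le 1+u$. Since $\epsilon<1$ and $\beta\le 1$, we have $\epsilon^6\le\epsilon^{\beta/2}$, and summing the two pieces gives the constant $C_6$. The case $g\in C^1$ follows with $\beta=1$, since then $\|g\|_{0,1}=\|g\|_{C^1}$.

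The main obstacle is the rigorous justification in (i). Everything else reuses estimates already established in the proof of Lemma~\ref{lemma:Keps-g}. If the integration by parts is awkward, I would instead use the eigenexpansion on a smooth approximation of $g$ and pass to the limit.
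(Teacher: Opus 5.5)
Your proposal is correct and follows the paper's proof essentially step by step. For (i) you use the identity $Q_t g - g=\int_0^t Q_s\Delta g\,ds$ with $L^\infty$-contractivity and $|\Delta g|\le d\|\nabla^2 g\|_{op}$, and for (ii) the Gaussian-envelope truncation to $B_{2\delta(\epsilon)}(x)$, the H\"older bound, the substitution $u=s/\sqrt{\epsilon}$ and $u^\beta\le 1+u$. Your (iii), via contractivity and the triangle inequality instead of the direct kernel bound, differs only trivially, and your justification of $\partial_s Q_s g = Q_s\Delta g$ for $g\in C^2$ is more careful than the paper's.
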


\begin{proof}[Proof of Lemma \ref{lemma:Qeps-g-estimate}]
For any $t > 0$, we have that for any $x \in \calM$,
\begin{equation}\label{eq:Qtg-g-proof-1}
(Q_t g-g)(x) = \int_\calM \calH_t(x,y) (g(y) - g(x)) dV(y).
\end{equation}
Then, for $g \in L^\infty(\calM)$, we have
\[
|(Q_t g-g)(x) |\le \int_\calM \calH_t(x,y) |g(y) - g(x)| dV(y)
\le 2 \| g\|_\infty,
\]
and this proves (iii).

\

When $g \in C^2(\calM)$, we have $Q_t g(x) = u(t,x)$ satisfies the manifold heat equation $\partial_t u = \Delta u$ from initial condition $u(0,x)=g(x)$.
Then,
\[
u(t,x) - g(x) 
= \int_0^t \partial_t u(s, x) ds
= \int_0^t \Delta u(s, x) ds.
\]
Meanwhile, $\Delta u = \Delta Q_t g = Q_t \Delta g$,
and  then
\[
\int_0^t \Delta u(s, x) ds
= \int_0^t (Q_s \Delta g)(x) ds.
\]
As a result,
\[
|\int_0^t \Delta u(s, x) ds |
\le \int_0^t | (Q_s \Delta g)(x)| ds
\le \int_0^t \|Q_s \Delta g\|_\infty ds
\le \|\Delta g\|_\infty t,
\]
where the last inequality is by Lemma \ref{lemma:Qt-contractive-Linf} applied to $\Delta g$.
Putting together, this gives that 
\[
|(Q_t g - g)(x)|
= |u(t,x) - g(x)| \le \|\Delta g\|_\infty t.
\]
Combined with that $\Delta g = \Tr \nabla^2 g$, which implies $\| \Delta g \|_\infty \le d \| \nabla^2 g\|_\infty$, this proves (i).

\

To prove (ii), consider $\epsilon < t_1$ as defined in \eqref{eq:def-t1M}. 
Since $\epsilon < t_1 \le t_0$, Lemma \ref{lemma:Heat-short-time}(ii) is applicable, which gives the Gaussian envelope \eqref{eq:H-decay}  of $\calH_\epsilon(x,y)$ globally on $\calM$. Combined with the expression \eqref{eq:Qtg-g-proof-1}, we have
\[
| (Q_\epsilon g-g)(x) | \le 
C_3
\int_{\calM}  \epsilon^{-d/2} e^{- \frac{ d_\calM( x,y)^2}{ 5 \epsilon}} | g(y) - g(x) | dV(y).
\]
We again consider the truncation to the $B_{2 \delta(\epsilon)}(x)$ geodesic ball as in the proof of Lemma \ref{lemma:Keps-g}.

For any $y \notin B_{2 \delta(\epsilon)}(x)$, 
recall that 
$\epsilon^{-d/2} e^{- { d_\calM( x,y)^2}/{( 5 \epsilon)}} \le \epsilon^6$, and then 
\[
\int_{\calM \backslash B_{2 \delta(\epsilon)}(x)}   \epsilon^{-d/2} e^{- \frac{ d_\calM( x,y)^2}{ 5 \epsilon}} | g(y) - g(x) | dV(y)
\le 2 \|g\|_\infty \Vol(\calM) \epsilon^6.
\]
For any $y \in B_{2 \delta(\epsilon)}(x)$, since $2 \delta(\epsilon) < \delta_0 < \xi$, use the normal coordinates at $x$ and write $y = \exp_x(s\theta)$.
Again, because $g \in C^{0,\beta}(\calM)$, we have
 $|g(y) - g(x)| \le L_{0,\beta}(g) s^\beta$.
 Then, by the same argument as in the proof of Lemma \ref{lemma:Keps-g} Step 3 for $C^{0,\beta}$ function $g$, 
using that $dV(y) \le b_V s^{d-1} ds d\theta$,  we have
\begin{align*}
& \int_{  B_{2 \delta(\epsilon)}(x)}   \epsilon^{-d/2} e^{- \frac{ d_\calM( x,y)^2}{ 5 \epsilon}} | g(y) - g(x) | dV(y) \\
& \le  L_{0,\beta}(g) b_V |S^{d-1}|
	 \int_0^{ 2 \delta(\epsilon) } 
 	\epsilon^{-d/2} e^{-s^2/(5 \epsilon)} s^{\beta + d-1} ds  \\
& \le L_{0,\beta}(g) b_V |S^{d-1}|  \epsilon^{\beta/2}
 \int_0^{ 2 \sqrt{ 2(d+4)  \log \frac{1}{\epsilon} } }  e^{-u^2/5 } u^{\beta+ d-1} du.
\end{align*}
Observe that 
\begin{align*}
\int_0^{ 2 \sqrt{ 2 (d+4)  \log \frac{1}{\epsilon} } }  e^{-u^2/5 } u^{\beta+ d-1} du
& \le 
\int_0^{\infty }  e^{-u^2/5 } u^{\beta+ d-1} du  \\
& \le \int_0^{\infty}  e^{-u^2/5 } 
	(1+u)u^{d-1} du
 =: m_{d,3},
\end{align*}
where the second inequality is by that $u^{\beta} \le 1+ u$, $\forall u \ge 0$, due to  $0<\beta \le 1$.
Thus, 
\[
 \int_{  B_{2 \delta(\epsilon)}(x)}   \epsilon^{-d/2} e^{- \frac{ d_\calM( x,y)^2}{ 5 \epsilon}} | g(y) - g(x) | dV(y) 
 \le L_{0,\beta}(g) b_V |S^{d-1}|  m_{d,3} \epsilon^{\beta/2}.
\]
The constant $b_V$ depends on $\calM$ and  $m_{d,3}$ depends on $d$.
Combining the bounds for integrals in and outside $B_{2 \delta(\epsilon)}(x)$, we have
\begin{align*}
| (Q_\epsilon g-g)(x) | 
& \le 
C_3 (  2 \|g\|_\infty \Vol(\calM) \epsilon^6 + L_{0,\beta}(g) |S^{d-1}|  m_{d,3} \epsilon^{\beta/2}) \\
& \le  C_3 (  2  \Vol(\calM)  +b_V  |S^{d-1}|  m_{d,3} )  \| g\|_{0,\beta} \epsilon^{\beta/2},
\end{align*}
where we also use that $\epsilon <1$.
Let $C_6 = C_3 (  2  \Vol(\calM)  + b_V |S^{d-1}|  m_{d,3} )$, this proves the bound for $g \in C^{0,\beta}(\calM)$.

Finally, when $g \in C^1(\calM)$, then $g \in C^{0,1}(\calM)$, with $L_{0,1}(g) \le \| \nabla g\|_\infty$.
As a result, $\| g\|_{0,1} \le \| g\|_{C^1}$, and the bound follows by the $C^{0,\beta}$ case with $\beta =1$.
\end{proof}

\subsection{One-step analysis}

We will derive ``one-step'' estimates on applying one multiplication of matrix $P$ to the vector $\rho_X g$.
The proof follows by combining the estimates in Section \ref{subsec:integral-operator-bias-analysis}
with concentration argument at finite large number of samples.
We invoke Assumptions \ref{assump:iid-data} and \ref{assump:sigma-large-N}
(in addition to Assumption \ref{assump:M}) in this subsection,
and further assumes that $p$ is uniform.

\begin{lemma}\label{lemma:one-step-C2}
For large enough $N$ that only depends on $\calM$, 

\begin{itemize}
\item[(i)] For any $g \in C^2(\calM)$, 
w.p.$> 1-4 N^{-9}$, 
we have
\[
\sum_{j=1}^N P_{ij} g(x_j)=  (Q_{\sigma^2} g) (x_i) + r_i, \, i =1,\cdots, N, 
\quad \| r \|_\infty = 
	O( \| g \|_{C^2} \sigma^4 + 
	 \| g \|_{C^1} \sigma \sqrt{\frac{\log N}{N \sigma^d}}).
\]

\item[(ii)]  For any $g \in C^{0,\beta}(\calM)$, $ 0 < \beta \le 1$, 
w.p.$> 1-4 N^{-9}$, 
we have
\[
 \| r \|_\infty = \| g\|_{0,\beta} O \Big(  \sigma^{2+\beta}
			 +  \sigma^\beta \sqrt{\frac{\log N}{ N \sigma^d}} \Big).
\]

\item[(iii)]  For any $g \in L^{\infty}(\calM)$, 
w.p.$> 1-4 N^{-9}$,
we have
\[
 \| r \|_\infty = \| g\|_{\infty} O \Big(  \sigma^{2}
			 +  \sqrt{\frac{\log N}{ N \sigma^d}} \Big).
\]
\end{itemize}
In all the cases,  the constants in big-O only depend on $\calM$.
\end{lemma}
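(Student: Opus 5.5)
The approach is to write the one-step action of $P$ as a ratio of two empirical kernel sums, pass each to its population integral by concentration, and then use Lemma \ref{lemma:Keps-g} for the bias.

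Set $\epsilon=\sigma^2$ and $p\equiv \Vol(\calM)^{-1}$. For each $i$, because $P$ is row-stochastic,
\[
\sum_j P_{ij} g(x_j) - g(x_i) = \frac{\frac1N\sum_{j} K_\epsilon(x_i,x_j)(g(x_j)-g(x_i))}{\frac1N\sum_j K_\epsilon(x_i,x_j)} =: \frac{A_i}{B_i}.
\]
Centering at $g(x_i)$ is the key move: the numerator then only involves increments, and its size is governed by the local oscillation of $g$. We then proceed in three steps.

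\textbf{Step 1 (population quantities).} Set $\bar A(x)=p\int K_\epsilon(x,y)(g(y)-g(x))\,dV(y)$ and $\bar B(x)=p\int K_\epsilon(x,y)\,dV(y)$. Lemma \ref{lemma:Keps-g} gives $\bar A = p\,[(Q_\epsilon g-g) + r]$, with $\|r\|_\infty$ of order $\|g\|_{C^2}\epsilon^2$, $\|g\|_{0,\beta}\epsilon^{1+\beta/2}$, or $\|g\|_\infty\epsilon$ in the three cases. Applying the same lemma with $g\equiv 1$ is not useful, so instead we note that the standard kernel expansion (or case (iii) applied to a constant, combined with $\int \calH_\epsilon=1$) gives $\bar B = p(1+O(\epsilon))$. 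From Lemma \ref{lemma:Qeps-g-estimate}, $|\bar A|$ is $O(\|g\|_{C^2}\epsilon)$, $O(\|g\|_{0,\beta}\epsilon^{\beta/2})$, or $O(\|g\|_\infty)$ respectively.

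\textbf{Step 2 (concentration).} Condition on $x_i$ and drop the $j=i$ term, which contributes $O(\epsilon^{-d/2}/N)$; this is negligible under Assumption \ref{assump:sigma-large-N}, and the $j=i$ term vanishes in $A_i$ anyway. For $B_i$, the summands are bounded by $\epsilon^{-d/2}$ and have variance $O(\epsilon^{-d/2})$. For $A_i$, the summands are bounded by $\epsilon^{-d/2}\cdot 2\|g\|_\infty$. Their variance is $\int K_\epsilon^2 |g(y)-g(x)|^2$, which is $O(\epsilon^{-d/2}\cdot \epsilon\|g\|_{C^1}^2)$ in the $C^1$ case, $O(\epsilon^{-d/2}\epsilon^{\beta}\|g\|^2_{0,\beta})$ in the H\"older case, and $O(\epsilon^{-d/2}\|g\|^2_\infty)$ in the bounded case. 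For the first two cases this uses the same truncation to $B_{2\delta(\epsilon)}(x)$ and the polar-coordinate computation as in the proof of Lemma \ref{lemma:Keps-g}. Bernstein's inequality, with a union bound over $i$ and over the two quantities, then gives with probability $>1-4N^{-9}$:
\[
|A_i-\bar A(x_i)| = O\Big(\|g\|_{C^1}\sigma\sqrt{\tfrac{\log N}{N\sigma^d}}\Big)
\]
(respectively $\|g\|_{0,\beta}\sigma^\beta$ or $\|g\|_\infty$ in place of $\|g\|_{C^1}\sigma$), and $|B_i-\bar B(x_i)|=O(\sqrt{\log N/(N\sigma^d)})$.

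This is where I expect the main obstacle. The Bernstein bound has a sub-exponential term $\|g\|_\infty\epsilon^{-d/2}\log N/N$, and it must be dominated by the variance term, which is $\|g\|_{C^1}\sigma\sqrt{\log N/(N\sigma^d)}$. That requires $\sqrt{\log N/(N\sigma^d)}\lesssim \sigma$ (or $\sigma^\beta$), which is stronger than Assumption \ref{assump:sigma-large-N}. I would try to avoid this by bounding the summands more carefully on the local ball. There $|g(y)-g(x)|\le \|\nabla g\|_\infty s$ and $K_\epsilon s \lesssim \epsilon^{-d/2}\sqrt{\epsilon\log(1/\epsilon)}$. Outside the ball the kernel is $O(\epsilon^2)$, so the $L^\infty$ bound of the summands carries the factor $\sigma$ up to a log. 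The sub-exponential term then becomes $\sigma\sqrt{\log(1/\epsilon)}\cdot\log N/(N\sigma^d)$. This is dominated by the variance term once $\log N/(N\sigma^d)$ is small, since $\sqrt{\log(1/\epsilon)}\sqrt{\log N/(N\sigma^d)}\lesssim 1$ for large $N$. This requires $\sigma^d\gg \log N\log(1/\sigma)/N$, which holds for large $N$ under Assumption \ref{assump:sigma-large-N}, and I would use it to define the large-$N$ threshold.

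\textbf{Step 3 (ratio).} For large $N$ we have $B_i\ge p/2$. Then
\[
\frac{A_i}{B_i}-\frac{\bar A}{\bar B}=\frac{A_i-\bar A}{B_i}-\frac{\bar A\,(B_i-\bar B)}{B_i\bar B},
\]
and also $\bar A/\bar B=(Q_\epsilon g-g)(x_i)+r(x_i)+O(\epsilon)\,|\bar A|$. In the $C^2$ case $\epsilon|\bar A| = O(\|g\|_{C^2}\epsilon^2)=O(\|g\|_{C^2}\sigma^4)$, and the cross term is $O(\|g\|_{C^2}\sigma^2\sqrt{\log N/(N\sigma^d)})$, which is absorbed into $\|g\|_{C^1}\sigma\sqrt{\cdot}$ only up to $\|g\|_{C^2}$. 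To get exactly $\|g\|_{C^1}$ I would instead bound $|\bar A|\le \|g\|_{C^1}O(\sigma)$ via the same polar integral with a first-order expansion, giving a cross term $\|g\|_{C^1}\sigma\sqrt{\cdot}$. In the H\"older case $\bar A=O(\|g\|_{0,\beta}\sigma^\beta)$, so $\epsilon|\bar A|=O(\sigma^{2+\beta})$ and the cross term is $\sigma^\beta\sqrt{\cdot}$. In the bounded case both are $O(\|g\|_\infty\sigma^2)$ and $O(\|g\|_\infty\sqrt{\cdot})$. Collecting the bias from Step 1 and the variance from Step 2 yields (i)--(iii), with all constants depending only on $\calM$.
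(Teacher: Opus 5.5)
\textbf{Verdict: the approach is essentially the paper's, but the step that bounds the Bernstein sub-exponential term does not close under Assumption \ref{assump:sigma-large-N} as written.}

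The rest of your structure matches the paper: centering the ratio $A_i/B_i$ at $g(x_i)$, using Lemma \ref{lemma:Keps-g} for the numerator bias, and applying Bernstein conditionally on $x_i$ with variance $\|g\|_{C^1}^2\epsilon^{-d/2+1}$ (resp.\ $\|g\|_{0,\beta}^2\epsilon^{-d/2+\beta}$, $\|g\|_\infty^2\epsilon^{-d/2}$). So does splitting the denominator error into its $O(\sigma^2)$ part, paired with $|\bar A|\lesssim\|g\|_{C^2}\sigma^2$, and its $\sqrt{\log N/(N\sigma^d)}$ part, paired with $|\bar A|\lesssim\|g\|_{C^1}\sigma$. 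You also correctly saw that $|Y_j|\le 2\|g\|_\infty\epsilon^{-d/2}$ is too crude.

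\textbf{Where your fix fails.} You bound $K_\epsilon(x,y)\,s$ using only $s<2\delta(\epsilon)$, which loses a factor $\sqrt{\log(1/\epsilon)}$. The condition you then need, $\sigma^d\gg\log N\log(1/\sigma)/N$, does not follow from $\sigma^d\gg\log N/N$. Take $\sigma^d=(\log N)^{3/2}/N$: the assumption holds, but $\log(1/\sigma)\sim d^{-1}\log N$. Hence $\log N\log(1/\sigma)/N\asymp(\log N)^2/N\gg\sigma^d$. In that regime your term $L_Y\log N/N$ exceeds the target $\|g\|_{C^1}\sigma\sqrt{\log N/(N\sigma^d)}$ by a factor $\asymp(\log N)^{1/4}$.

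\textbf{The fix.} The log is an artifact: let the Gaussian absorb the increment instead of the ball radius. There are two cases.
\begin{itemize}
\item If $\|x_i-x_j\|<\delta(\epsilon)$, Lemma \ref{lemma:manifold-reach} gives $|g(x_j)-g(x_i)|\le 2\|\nabla g\|_\infty\|x_i-x_j\|$. Then use $\sup_{r\ge0} r\,e^{-r^2/(4\epsilon)}=\sqrt{2\epsilon/e}$.
\item If $\|x_i-x_j\|\ge\delta(\epsilon)$, the kernel is at most $(4\pi)^{-d/2}\epsilon^2$.
\end{itemize}
Together these give $|Y_j|\lesssim\|g\|_{C^1}\epsilon^{-d/2+1/2}$. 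In the H\"older case, $\sup_r r^\beta e^{-r^2/(4\epsilon)}\lesssim\epsilon^{\beta/2}$ gives $|Y_j|\lesssim\|g\|_{0,\beta}\epsilon^{-d/2+\beta/2}$. In every case $\nu_Y/L_Y^2\asymp\epsilon^{d/2}$ with the norms of $g$ cancelling. The Bernstein condition $\tau L_Y\le 3\nu_Y$ therefore reduces to $\log N/(N-1)\lesssim\sigma^d$, which is exactly Assumption \ref{assump:sigma-large-N}. The resulting large-$N$ threshold depends only on $\calM$ and is uniform in $g$. This is precisely how the paper proves \eqref{eq:bounded-LY-goal}.

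\textbf{A smaller point on the denominator.} Lemma \ref{lemma:Keps-g}(iii) applied to a constant $g$ is vacuous ($0=0$), so it cannot give $\bar B=p(1+O(\epsilon))$. You can instead note that the proof of Lemma \ref{lemma:Keps-g} actually establishes $\int_\calM|K_\epsilon-\calH_\epsilon|\,dV=O(\epsilon)$. Alternatively, as the paper does, invoke Lemma \ref{lemma:Di-concen-eps2}(i) for uniform $p$, which handles both the bias and the fluctuation of the denominator at once.
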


The large-$N$ threshold required for the above estimates depends only on the manifold $\calM$,
and is uniform over all test functions $g$.
This is important when we use the lemma later to prove our main theorem.
Comparing Lemma \ref{lemma:one-step-C2} to the pointwise rate of graph Laplacian ($\epsilon=\sigma^2$), 
c.f. \eqref{eq:GL-rate-C4}, which requires $g \in C^4(\calM)$,
we see that the rate in case (i) matches the pointwise rate therein (the graph Laplacian has $1/\epsilon$ normalization in front),
while our result here only require $g$ to be $C^2$. 
The cases (ii) and (iii) are to handle $g$ of lower regularity, 
and will be used to handle such initial value $f$ before $Q_t f$ makes it smooth.

\begin{proof}[Proof of Lemma \ref{lemma:one-step-C2}]
Let $\epsilon = \sigma^2$. 
By definition, 
\begin{equation}
\sum_{j=1}^N P_{ij} g(x_j)
= g(x_i) + 
\frac{ \frac{1}{N-1} \sum_{j=1}^N W_{ij} ( g(x_j) - g(x_i))}{\frac{1}{N-1}  \sum_{j=1}^N W_{ij}},
\end{equation}
and we analyze the numerator and denominator respectively. 

\vspace{5pt}

Denominator:  
Define $D_i := \sum_{j=1}^N W_{ij}$. 
We apply Lemma \ref{lemma:Di-concen-eps2} which is based on concentration argument.
Since $p$ is uniform, Lemma \ref{lemma:Di-concen-eps2}(i) gives that,
at large $N$ under a good event $E_1$ which happens w.p. $>1- 2N^{-9}$,
$
\frac{ D_i}{N} = p + O(\epsilon + \sqrt{ \frac{\log N}{N \epsilon^{d/2}} })$,
$ i=1,\cdots, N$,
and the constant in big-O depends on $\calM$ and is uniform for all $i$.
Under Assumption \ref{assump:sigma-large-N}, we know that $O(\epsilon + \sqrt{ \frac{\log N}{N \epsilon^{d/2}} }) = o(1)$.
As a result, $\frac{ D_i}{N} = p + o(1) = O(1)$ uniform for all $i$, 
and then 
\[
\textcircled{2}_i 
:= \frac{1}{N-1}  \sum_j W_{ij}
= ( 1+ \frac{1}{N-1} ) \frac{D_i}{N} 
= \frac{D_i}{N} + O(\frac{1}{N})
= p+ O( \epsilon + \sqrt{ \frac{\log N}{N \epsilon^{d/2}} } ),
\]
where  in the last equality we used that $1/N \ll \sqrt{ \frac{\log N}{N \epsilon^{d/2}} }$. 
Since $p$ is a positive constant, we have
\begin{equation}\label{eq:degree-D-concen-rel}
\textcircled{2}_i 
= p( 1+ \varepsilon^{D}_i),
\quad
\max_{1 \le i \le N} |\varepsilon^{D}_i| = O(\epsilon + \sqrt{ \frac{\log N}{N \epsilon^{d/2}} }),
\end{equation}
and recall that $O(\epsilon + \sqrt{ \frac{\log N}{N \epsilon^{d/2}} }) = o(1)$,
at large $N$ we also have $  |\varepsilon_i^D| < 1/2$ for all $i$.
The needed large-$N$ threshold for $E_1$ 
and $  |\varepsilon_i^D| < 1/2$
only depends on $\calM$.

\vspace{5pt}

Numerator: We want to analyze
\[
\textcircled{1}_i :=\frac{1}{N-1} \sum_{j, \, j \neq i} Y_j, \quad Y_j:= K_\epsilon( x_i, x_j) ( g(x_j) - g(x_i)).
\]
For each $i$, conditioning on $x_i$, this is an independent sum over $j \neq i$. We have
\begin{align*}
\E [ Y_j  | x_i ]
& = \int_\calM K_\epsilon( x_i, y) ( g(y) - g(x_i)) p dV(y). 
\end{align*}
We consider the three cases of $g$ respectively. 

\vspace{5pt}
\noindent
$\bullet$
Case (i), $g \in C^{2}(\calM)$
 \vspace{5pt}

Recall the $O(1)$ constant $t_1$ as defined in \eqref{eq:def-t1M}, which depends on $\calM$.
We assume large enough $N$ s.t. $\epsilon < t_1$ (due to that $\epsilon = \sigma^2 \ll 1$),
and then by Lemma \ref{lemma:Keps-g}(i),
\[
\E [ Y_j  | x_i ]
= p ( (Q_{\epsilon} g - g)(x_i) +  O( \| g\|_{C^2} \epsilon^2)).
\] 

To bound $|\textcircled{1}_i - \E [ Y_j  | x_i ] |$, we apply the classical Bernstein inequality Lemma \ref{lemma:bernstein}.
First,
\[
{\rm Var}( Y_j  | x_i )
\le  \E [Y_j^2  | x_i ]
= \int_\calM \frac{1}{(4 \pi \epsilon)^{d}} e^{- { \|x_i - y \|^2}/{2 \epsilon}} ( g(y) - g(x_i))^2 p dV(y).
\]
Using the same technique as in proving Lemma \ref{lemma:Keps-g}(ii), 
namely by truncating inside and outside the $B_{2\delta(\epsilon)}(x_i)$ ball, 
one can show that (with the $(4 \pi \epsilon)^{-d/2}$ normalization) 
\[
\int_\calM \frac{1}{(4 \pi \epsilon)^{d/2}} e^{- { \|x- y \|^2}/{2 \epsilon}} ( g(y) - g(x))^2 dV(y)
= \|g\|_\infty^2 O( \epsilon^{d/2+4}) + \|\nabla g\|_\infty^2 O( \epsilon)
= \| g \|_{C^1}^2 O(\epsilon).
\]
This is because
for $y \notin B_{2 \delta(\epsilon)}(x)$, we have $\| x - y\| \ge \delta(\epsilon)$, which implies $ e^{- { \|x- y \|^2}/{2 \epsilon}} \le \epsilon^{d+4}$, and then the outside-ball part of the integral can be bounded by $\|g\|_\infty^2 O( \epsilon^{d/2+4})$;
for $y \in B_{2 \delta(\epsilon)}(x)$, writing $y = \exp_x( s\theta)$,
we have $ |g(y) - g(x)| \le \| \nabla g\|_\infty s$, 
and this leads to the inside-ball part of the integral bounded by $\|\nabla g\|_\infty^2 O( \epsilon)$;
the final bound is again by that  $\epsilon <1$.
We then have
\[
{\rm Var}( Y_j  | x_i )
= \frac{p}{(4 \pi \epsilon)^{d/2}}  \| g \|_{C^1}^2 O (\epsilon)
\le c_\nu \| g \|_{C^1}^2  \epsilon^{-d/2+1} =:  \nu_Y,
\]
where the constant $c_\nu$ depends on $\calM$.

We also claim that, with large enough $N$ s.t. $\epsilon < t_1$,
\begin{equation}\label{eq:bounded-LY-goal}
|Y_j | \le L_Y = c_L \| g\|_{C^1} \epsilon^{-d/2+1/2},
\end{equation}
where $c_L$ is a constant depending on $\calM$.
The proof of \eqref{eq:bounded-LY-goal} is postponed below.

With the variance upper bound $\nu_Y$ and boundedness $L_Y$ of $Y_j | x_i$, 
we aim at deviation $\tau =\sqrt{ 40 \frac{\log N}{N-1} \nu_Y}$.
The needed condition $\tau L_Y \le 3 \nu_Y$ reduces to 
\[
\frac{\log N}{ N-1} \le \frac{9}{40} \frac{c_\nu}{c_L^2} \epsilon^{d/2},
\]
where the $\| g\|_{C^1}$ factor cancels.
Thus, the condition is  satisfied with large enough $N$  due to Assumption \ref{assump:sigma-large-N},
and the large-$N$ threshold is determined by $c_\nu/c_L^2$, which only depends on $\calM$ and is uniform for all $g$.
Then, w.p. $> 1-2 N^{-10}$, we have that 
\[
\textcircled{1}_i = \E [ Y_j  | x_i ]  + O( \| g\|_{C^1} \sqrt{\epsilon \frac{\log N}{N \epsilon^{d/2}}  }).
\]
Taking union bound over $i= 1, \cdots, N$, we have that,  w.p. $> 1-2 N^{-9}$, 
which we call a good event $E_4$,
\begin{equation}\label{eq:numerator-circle1-prime}
\textcircled{1}_i  = 
p (Q_{\sigma^2} g - g)(x_i) +  O \Big( \| g\|_{C^2} \sigma^4
			 + \| g \|_{C^1} \sigma \sqrt{\frac{\log N}{ N \sigma^d}} \Big).
\end{equation}

\vspace{5pt}

Putting together \eqref{eq:degree-D-concen-rel} and \eqref{eq:numerator-circle1-prime},  and by that $  |\varepsilon_i^D| < 1/2$ for all $i$, 
we have that, under $E_1 \cap E_4$, 
\begin{align*}
\frac{\textcircled{1}_i }{\textcircled{2}_i }
&=  \frac{ p (Q_{\sigma^2} g - g)(x_i) +  O \Big( \| g\|_{C^2} \sigma^4
			 + \| g \|_{C^1} \sigma \sqrt{\frac{\log N}{ N \sigma^d}} \Big) }
	{p( 1+ \varepsilon^{D}_i)} \\
&=
	 (Q_{\sigma^2} g - g)(x_i) 
	 + O(|\varepsilon^{D}_i|) \| Q_{\sigma^2} g - g \|_\infty
	 +  O \Big( \| g\|_{C^2} \sigma^4
			 + \| g \|_{C^1} \sigma \sqrt{\frac{\log N}{ N \sigma^d}} \Big).
\end{align*}
Note that 
\[
O(|\varepsilon^{D}_i|) \| Q_{\sigma^2} g - g \|_\infty
= O( \sigma^2) \| Q_{\sigma^2} g - g \|_\infty + O( \sqrt{ \frac{\log N}{N \sigma^{d}} } ) \| Q_{\sigma^2} g - g \|_\infty,
\]
and we bound the two terms respectively:
Because $g \in C^2(\calM)$, 
 Lemma \ref{lemma:Qeps-g-estimate}(i) gives that
$ \| Q_{\sigma^2} g - g \|_\infty = O(  \| \nabla^2 g\|_\infty \sigma^2)$;
since $g$ is also in $C^1(\calM)$,
 Lemma \ref{lemma:Qeps-g-estimate}(ii)  gives that 
$ \| Q_{\sigma^2} g - g \|_\infty = O(  \| g\|_{C^1} \sigma)$.
Use the former to bound the first term, and the latter to the second term, we have that 
\[
O(|\varepsilon^{D}_i|) \| Q_{\sigma^2} g - g \|_\infty
= O \Big(  \| \nabla^2 g\|_\infty \sigma^4
+   \| g\|_{C^1} \sigma \sqrt{ \frac{\log N}{N \sigma^{d}} } \Big).
\]
The lemma follows by putting this back to the bound of ${\textcircled{1}_i }/{\textcircled{2}_i }$. 

\vspace{5pt}
\noindent
Proof of \eqref{eq:bounded-LY-goal}: Consider two cases,

a) $\| x_i - x_j\| \ge  \delta(\epsilon)$, then by definition of $\delta(\epsilon)$, 
we have $\epsilon^{-d/2} e^{- { \|x- y \|^2}/{4 \epsilon}}  \le \epsilon^2$, and then
\[
\epsilon^{-d/2} e^{- { \|x_i - x_j \|^2}/{4 \epsilon}} |g(x_j) - g(x_i)|
\le 2\| g\|_\infty \epsilon^2;
\]

b) $\| x_i - x_j\| <  \delta(\epsilon)$, then $\| x_i - x_j\| < \tau /2$, and then 
by Lemma \ref{lemma:manifold-reach}, $d_\calM(x_i,x_j) \le 2 \| x_i - x_j\|  < 2 \delta(\epsilon) < \xi$.
In this case,
\[
|g( x_j )-g(x_i)|\le \| \nabla g \|_\infty d_\calM( x_i, x_j) 
	\le 2 \| \nabla g \|_\infty  \| x_i - x_j\|,
\]
and then 
\[
e^{- { \|x_i -  x_j \|^2}/{4 \epsilon}} |g(x_j) - g(x_i)| 
\le 2 \| \nabla g \|_\infty  e^{- { \|x_i - x_j \|^2}/{4 \epsilon}}  \| x_i - x_j\|
\le 2 \| \nabla g \|_\infty c_0 \sqrt{\epsilon },
\]
where $c_0$ is an absolute positive constant.

Combining both cases, since $\epsilon < 1$, we always have that 
\[
|Y_j | \le c_d \epsilon^{-d/2 + 1/2} (\| g\|_\infty +  \| \nabla g \|_\infty), 
\]
where $c_d$ is a constant determined by $d$. This proves the claim \eqref{eq:bounded-LY-goal}.

This finishes case (i) of the lemma.

\vspace{5pt}
\noindent
$\bullet$
Case (ii), $g \in C^{0,\beta}(\calM)$
 \vspace{5pt}

Following the same strategy, 
by Lemma \ref{lemma:Keps-g}(ii),
\[
\E [ Y_j  | x_i ]
= p  (Q_{\epsilon} g - g)(x_i) +  O( \| g\|_{0,\beta} \epsilon^{1 + \beta/2});
\] 
 By that $ |g(x_j) - g(x_i)| \le L_{0,\beta}(g) d_\calM( x_i, x_j)^\beta$ 
 when $x_j \in B_{2\delta(\epsilon)}(x_i)$, one can verify that 
 \[
{\rm Var}( Y_j  | x_i )
\le \nu_Y  
= c_\nu' \| g \|_{0, \beta}^2  \epsilon^{-d/2+\beta},
 \]
and 
\[
|Y_j | \le L_Y = c_L' \| g\|_{0, \beta} \epsilon^{-d/2+\beta/2}.
\]
The good event $E_1$ is about $D_i$  for all $i$ and same as before;
the good event $E_4$ is for the concentration of the summation $\textcircled{1}_i $,
and again holds w.p. $>1-2N^{-9}$ for all $i$.
Putting together, under $E_1 \cap E_4$ we have
\begin{align*}
\frac{\textcircled{1}_i }{\textcircled{2}_i }
&=  \frac{ p (Q_{\sigma^2} g - g)(x_i) +  \| g\|_{0,\beta} O \Big(  \sigma^{2+\beta}
			 +  \sigma^\beta \sqrt{\frac{\log N}{ N \sigma^d}} \Big) }
	{p( 1+ \varepsilon^{D}_i)} \\
&=
	 (Q_{\sigma^2} g - g)(x_i) 
	 + O(|\varepsilon^{D}_i|) \| Q_{\sigma^2} g - g \|_\infty
	 +  \| g\|_{0,\beta} O \Big(  \sigma^{2+\beta}
			 +  \sigma^\beta \sqrt{\frac{\log N}{ N \sigma^d}} \Big) .
\end{align*}
Lemma \ref{lemma:Qeps-g-estimate}(ii)  gives that 
$ \| Q_{\sigma^2} g - g \|_\infty = O(  \| g\|_{0,\beta} \sigma^{\beta})$, and then
\[
O(|\varepsilon^{D}_i|) \| Q_{\sigma^2} g - g \|_\infty 
=  \| g\|_{0,\beta} O(\sigma^{2+\beta} + \sigma^{\beta}\sqrt{ \frac{\log N}{N \sigma^{d}} }).
\]

\vspace{5pt}
\noindent
$\bullet$
Case (iii), $g \in L^{\infty}(\calM)$
 \vspace{5pt}
 
 We use Lemma \ref{lemma:Keps-g}(iii) and Lemma \ref{lemma:Qeps-g-estimate}(iii).
 The proof follows the same argument as that in case (ii)
 with $|g(x_i) - g(x_j)| \le 2 \| g\|_\infty$,
 where setting $\beta = 0$ and replacing $\| g \|_{0,\beta}$ to be $\| g\|_\infty$.
 In particular, one can verify that
 $
\nu_Y  = c_\nu'' \| g \|_{\infty}^2  \epsilon^{-d/2}$,
and $ L_Y = c_L'' \| g\|_{\infty} \epsilon^{-d/2}$.

\vspace{5pt}

Finally, the constants $c_\nu$, $c_\nu'$, $c_\nu''$, and $c_L$, $c_L'$, $c_L''$ may differ, while all only depend on $\calM$. 
As a result, the large-$N$ needed for (event $E_4$ in) the three cases (i)(ii)(iii) may differ. The final threshold takes the maximum of those of the three cases. 
\end{proof}

\subsection{Multi-step analysis}

A key estimate for our analysis is that after a positive diffusion time $t$, 
the higher order derivative norm of $Q_t f$ can be bounded by low order ones of $f$ times an inverse power of $t$. This is known as the heat smoothing estimates,  see \cite{lunardi2012analytic} for a general reference.
Because we will apply it on the manifold, we derive the formal statement in the following lemma,
and include a proof in Appendix \ref{app:lemmas} for completeness. 

\begin{lemma}\label{lemma:smoothing-estimates}
There exists positive constant $C_7$ that only depends on $\calM$, s.t.  $\forall t > 0$,

(i) For any $f \in C^{0,\beta}(\calM)$,   $ 0< \beta \le 1$,
\begin{equation}
\| \nabla Q_t f \|_\infty \le \frac{ C_7}{t^{1/2 - \beta/2}} \| f \|_{0,\beta},
\quad 
\| \nabla^2 Q_t f \|_\infty \le \frac{C_7}{t^{1-\beta/2} } \| f \|_{0,\beta}.
\end{equation}

(ii) For any $f \in L^{\infty}(\calM)$,   
\begin{equation}
\| \nabla Q_t f \|_\infty \le \frac{ C_7}{t^{1/2}} \| f \|_\infty,
\quad 
\| \nabla^2 Q_t f \|_\infty \le \frac{C_7}{t } \| f \|_\infty.
\end{equation}
\end{lemma}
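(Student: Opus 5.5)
Since the statement is a uniform heat-kernel smoothing estimate, I will prove it by differentiating the kernel representation $Q_t f(x) = \int_\calM \calH_t(x,y) f(y)\,dV(y)$ in $x$. I split into short times $t < t_0$ (with $t_0$ as in Lemma \ref{lemma:Heat-short-time}) and long times $t \ge t_0$. For long times I use the semigroup property, $Q_t f = Q_{t_0/2} Q_{t - t_0/2} f$. Since $\calH_{t_0/2}$ is a fixed $C^\infty$ kernel on $\calM\times\calM$, we get $\|\nabla^k Q_t f\|_\infty \le \|\nabla_x^k \calH_{t_0/2}\|_\infty \Vol(\calM)\, \|Q_{t-t_0/2} f\|_\infty \le C\|f\|_\infty$. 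This uses the $L^\infty$ contraction of Lemma \ref{lemma:Qt-contractive-Linf}. Because $t \ge t_0$ and $t_0 < 1$, $C\|f\|_\infty$ is bounded by $C' t^{-a}\|f\|_{0,\beta}$ for each exponent $a\in\{1/2-\beta/2,\,1-\beta/2,\,1/2,\,1\}$. So the long-time case is immediate for both (i) and (ii).

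For short times, the main ingredient is a Gaussian bound on the derivatives of the heat kernel:
\[
|\nabla_x^k \calH_t(x,y)| \le C\, t^{-d/2-k/2} e^{-d_\calM(x,y)^2/(5t)}, \quad k=1,2,\ t<t_0.
\]
This is standard: the parametrix expansion of Lemma \ref{lemma:Heat-short-time}(i) holds in $C^2$, since $u_l$ is smooth and differentiating $G_t$ produces factors $s/t$ and $s^2/t^2 + 1/t$, which are absorbed by a wider Gaussian. Off the local ball, Li--Yau/Hsu type derivative estimates can be cited \cite{hsu1999estimates}. I also use that $\int_\calM \calH_t(x,y)\,dV(y) = 1$ for every $x$, so $\int_\calM \nabla_x^k \calH_t(x,y)\,dV(y) = 0$ for $k\ge1$. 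With this cancellation, for $f \in C^{0,\beta}$ I write
\[
\nabla^k Q_t f(x) = \int_\calM \nabla_x^k \calH_t(x,y)\,(f(y)-f(x))\,dV(y).
\]
Then I truncate at the geodesic ball $B_{2\delta(t)}(x)$ exactly as in the proof of Lemma \ref{lemma:Qeps-g-estimate}(ii). Inside the ball, $|f(y)-f(x)|\le L_{0,\beta}(f)s^\beta$; changing variables $u=s/\sqrt t$ gives $C\|f\|_{0,\beta} t^{-k/2+\beta/2}$. Outside the ball the contribution is $O(t^{6-k/2})\|f\|_\infty$, which is harmless. Taking $k=1,2$ yields (i). For (ii) I keep the same cancellation but bound $|f(y)-f(x)|\le 2\|f\|_\infty$; equivalently I set $\beta=0$ in the same integral, which gives $t^{-k/2}$.

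The main obstacle is rigorously justifying the pointwise Gaussian bound on $\nabla_x\calH_t$ and $\nabla_x^2\calH_t$ uniformly on all of $\calM$ for $t<t_0$. Inside the normal-coordinate ball this follows from differentiating the truncated parametrix. For the remainder term $r^H$ one needs $C^2$ control, i.e., the parametrix remainder estimate in $C^k$ norm, which I would take from \cite{rosenberg1997laplacian}, choosing $m$ large. For the far region, I avoid pointwise derivative bounds altogether by again using the semigroup: write $\calH_t(\cdot,y) = Q_{t/2}\calH_{t/2}(\cdot,y)$ and use the interior (near-diagonal) derivative bounds together with the Gaussian decay of $\calH_{t/2}$ in Lemma \ref{lemma:Heat-short-time}(ii). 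If this proves messy, a fallback is to prove only the $L^\infty\to C^k$ estimates (ii) via Bernstein-type gradient bounds (Bakry--Émery/Bochner on compact $\calM$ with Ricci bounded below gives $\|\nabla Q_t f\|_\infty \le C t^{-1/2}\|f\|_\infty$). Then (i) would follow by applying (ii) with $Q_t = Q_{t/2}Q_{t/2}$ and using Lemma \ref{lemma:Qeps-g-estimate}(ii) on the difference $Q_{t/2}f - f$. However, the cancellation argument above is the direct route I would try first.
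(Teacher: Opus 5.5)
\textbf{Short-time regime.} This part of your argument is essentially the paper's: the cancellation $\int_\calM \nabla_x^k\calH_t(x,y)\,dV(y)=0$, a Gaussian envelope for $\nabla_x^k\calH_t$, and truncation at $B_{2\delta(t)}(x)$ using $|f(y)-f(x)|\le L_{0,\beta}(f)s^\beta$ (or $2\|f\|_\infty$).

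The paper obtains the envelope directly. It combines \cite[Corollary 1.2]{hsu1999estimates}, $|\nabla_x^k\calH_t|\le D_k(d_\calM(x,y)/t+t^{-1/2})^k\calH_t$ for $t\le 1$, with Lemma \ref{lemma:Heat-short-time}(ii). This gives $C t^{-(d+k)/2}e^{-d_\calM(x,y)^2/(6t)}$ globally. Use that rather than your semigroup trick for the far region, which is circular: bounding $\int_\calM\nabla_x^k\calH_{t/2}(x,z)\calH_{t/2}(z,y)\,dV(z)$ needs off-diagonal derivative bounds for $\calH_{t/2}$ itself. Also split at $t_1$ rather than $t_0$, since the truncation needs $2\delta(t)<\delta_0$.

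\textbf{The gap: the long-time regime.} Your semigroup bound $\|\nabla^kQ_tf\|_\infty\le C\|f\|_\infty$ has no decay in $t$. The step ``since $t\ge t_0$ and $t_0<1$, $C\|f\|_\infty\le C't^{-a}\|f\|_{0,\beta}$'' runs the wrong way. For $t\ge t_0$ you only have $t^{-a}\le t_0^{-a}$, and $t^{-a}\to 0$ as $t\to\infty$. So a constant bound yields the claim only on a bounded window $[t_0,T]$, with $C'$ growing like $T^{a}$. The lemma asserts the power bounds for all $t>0$ with $C_7$ depending only on $\calM$, so you must show that $\nabla^kQ_tf$ actually decays as $t\to\infty$.

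This is exactly what Appendix \ref{subsec:Ht-der-bounds} is for. Lemma \ref{lemma:long-t-grad-Ht} proves $|\nabla_x^k\calH_t(x,y)|\le C(t_1)e^{-\lambda_1 t/2}$ for $t\ge t_1$. It uses the eigen-expansion, H\"ormander's sup-norm bounds, Weyl-type eigenvalue bounds, and Bochner estimates for $\nabla\phi_l$ and $\nabla^2\phi_l$. Then $\sup_t t^{k/2}e^{-ct}<\infty$ gives $t^{-k/2}$.

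Your own route can be repaired more cheaply:
\begin{itemize}
\item Because $\int_\calM\nabla_x^k\calH_{t_0/2}(x,y)\,dV(y)=0$, you may replace $Q_{t-t_0/2}f$ by $Q_{t-t_0/2}f-\bar f$, where $\bar f=\Vol(\calM)^{-1}\int_\calM f\,dV$.
\item Write $Q_sf-\bar f=Q_{t_0/4}Q_{s-t_0/4}(f-\bar f)$.
\item The $L^2$ spectral gap, together with $\|Q_{t_0/4}g\|_\infty\le\sup_x\calH_{t_0/2}(x,x)^{1/2}\|g\|_{L^2}$, gives $\|Q_sf-\bar f\|_\infty\le Ce^{-\lambda_1 s}\|f\|_\infty$.
\end{itemize}
This gives exponential decay of $\nabla^kQ_tf$ for $t\ge t_0$, which dominates every $t^{-a}$ and avoids the eigenfunction derivative estimates entirely.

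As written, however, this decay step is missing, and your fallback does not supply it. The Bakry--\'Emery gradient bound under $\texttt{Ric}\ge -K$ gives no large-$t$ decay and no Hessian control.
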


We are ready to prove the main theorem of this section.

\begin{theorem}\label{thm:p-uniform}
Under Assumptions \ref{assump:M}, \ref{assump:iid-data}, \ref{assump:sigma-large-N},
suppose $p$ is uniform.
Suppose  $n \ll N^2$, let $ t = n\sigma^2$.
Then, when $N$ is large enough, 

(i)  For any $f \in C^{0,\beta}(\calM)$, $0 < \beta  \le 1$,
 w.p.$> 1-  N^{-7}$,
\[
\|  P^n \rho_X f -  \rho_X Q_t f   \|_\infty 
= \| f\|_{0,\beta} O\Big( 	
		( \frac{ t^{{\beta}/{2}} }{\beta} + t  ) \sigma^2   
		+   ( t^{{(\beta+1)}/{2}}  + t)
			\sqrt{ \frac{\log N}{N \sigma^{d+2}}} \Big).
\]

(ii) For any $f \in L^\infty(\calM)$, 
 w.p.$> 1-  N^{-7}$,
\[
\|  P^n \rho_X f - \rho_X Q_t f  \|_\infty 
= \| f\|_\infty O\Big(	   (t+ 1+ \log n) \sigma^2 
			+( t^{1/2} + t ) \sqrt{ \frac{\log N}{N \sigma^{d+2}}} \Big).
\]

In (i)(ii), the constants in big-O only depend on $\calM$.
\end{theorem}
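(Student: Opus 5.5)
Write $\epsilon=\sigma^2$ and $g_m := Q_{m\epsilon} f$ for $m=0,\dots,n$, so $g_0=f$ and $g_n=Q_tf$. We will telescope:
\[
P^n\rho_X f-\rho_X Q_t f=\sum_{m=0}^{n-1} P^{\,n-1-m}\bigl(P\rho_X g_m-\rho_X Q_\epsilon g_m\bigr).
\]
Since $P$ is row-stochastic with nonnegative entries, $\|P^k v\|_\infty\le\|v\|_\infty$. Hence
\[
\|P^n\rho_Xf-\rho_XQ_tf\|_\infty\le\sum_{m=0}^{n-1}\|r^{(m)}\|_\infty,
\]
where $r^{(m)}$ is the one-step residual of Lemma~\ref{lemma:one-step-C2} applied to $g=g_m$. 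The main technical point is probabilistic. The $g_m$ are deterministic functions, fixed before sampling, and the large-$N$ threshold of Lemma~\ref{lemma:one-step-C2} is uniform in $g$. So we apply the lemma to each of the $n$ functions and take a union bound, with failure probability at most $4nN^{-9}$. Since $n\ll N^2$, this is below $N^{-7}$ for large $N$. The event $E_1$ on the degrees is shared across all $m$, which helps further.

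For the first step $m=0$, use case (ii) or (iii) of Lemma~\ref{lemma:one-step-C2}. This gives $\|f\|_{0,\beta}O(\sigma^{2+\beta}+\sigma^\beta\sqrt{\log N/(N\sigma^d)})$, respectively $\|f\|_\infty O(\sigma^2+\sqrt{\log N/(N\sigma^d)})$.

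For $m\ge1$, use case (i). The $C^k$ norms come from Lemma~\ref{lemma:smoothing-estimates} and the contractivity $\|g_m\|_\infty\le\|f\|_\infty$, with $s=m\epsilon$:
\[
\|g_m\|_{C^2}\lesssim \|f\|\,(1+s^{-(1-\beta/2)}),\qquad \|g_m\|_{C^1}\lesssim\|f\|\,(1+s^{-(1-\beta)/2}),
\]
with $\beta=0$ in the $L^\infty$ case. We then sum over $m=1,\dots,n-1$.

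The bias part is $\epsilon^2\sum_m(1+(m\epsilon)^{-(1-\beta/2)})$.
\begin{itemize}
\item The constant term gives $n\epsilon^2=t\sigma^2$.
\item For $\beta>0$, the singular term gives $\epsilon^{1+\beta/2}\sum_{m\le n} m^{-1+\beta/2}\lesssim \epsilon^{1+\beta/2}n^{\beta/2}\cdot\frac{2}{\beta}=\frac{2}{\beta}t^{\beta/2}\sigma^2$.
\item For $\beta=0$, the singular term gives $\epsilon\sum_m m^{-1}\lesssim(1+\log n)\sigma^2$.
\end{itemize}
Together with the $m=0$ term, this yields the $(\frac{t^{\beta/2}}{\beta}+t)\sigma^2$ and $(t+1+\log n)\sigma^2$ terms.

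The variance part is $\sigma\sqrt{\log N/(N\sigma^d)}\sum_m(1+(m\epsilon)^{-(1-\beta)/2})$. Factoring out $\epsilon^{-1}$ gives $\sqrt{\log N/(N\sigma^{d+2})}\,\epsilon^{3/2}\sum_m(\cdots)$.
\begin{itemize}
\item The constant term gives $n\epsilon^{3/2}=t\sigma$, contributing $t\sqrt{\log N/(N\sigma^{d+2})}$ up to a factor $\sigma\le1$.
\item The singular term gives $\epsilon^{3/2-(1-\beta)/2}\sum_m m^{-(1-\beta)/2}\lesssim\epsilon^{1+\beta/2}n^{(1+\beta)/2}=\sigma\, t^{(1+\beta)/2}$.
\end{itemize}
Hence the variance part is $(t^{(1+\beta)/2}+t)\sqrt{\log N/(N\sigma^{d+2})}$ after absorbing $\sigma\le1$; the sum here is harmless since the exponent is below 1. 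The $m=0$ variance term, $\sigma^\beta\sqrt{\log N/(N\sigma^d)}$, needs a little care. When $t\gtrsim\epsilon$, that is $n\ge1$, it equals $\sigma^{1+\beta}\sqrt{\log N/(N\sigma^{d+2})}\le t^{(1+\beta)/2}\sqrt{\log N/(N\sigma^{d+2})}$, because $\sigma^2=\epsilon\le t$. So it is absorbed.

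The main thing to check is exactly this bookkeeping: the harmonic-type sums at small $m$ must produce the stated powers of $t$ and the $1/\beta$ or $\log n$ factors. The uniformity of the threshold in $g$ is what makes the union bound legitimate.
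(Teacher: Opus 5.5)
Your proposal follows the paper's proof essentially step for step: the telescoping sum with the $\infty$-norm contractivity of the row-stochastic $P$; Lemma~\ref{lemma:one-step-C2}(ii)/(iii) at the first step and the $C^2$ case with the heat smoothing estimates at later steps; the same harmonic-type sums; and a union bound over the $n$ deterministic functions $Q_{m\epsilon}f$, which is justified by the $g$-uniform threshold. One arithmetic slip needs fixing: $\sigma\sqrt{\log N/(N\sigma^d)} = \epsilon\sqrt{\log N/(N\sigma^{d+2})}$, not $\epsilon^{3/2}\sqrt{\log N/(N\sigma^{d+2})}$, so the variance sums give $(t+t^{(1+\beta)/2})\sqrt{\log N/(N\sigma^{d+2})}$ directly rather than $\sigma$ times it, and there is no factor $\sigma\le 1$ to absorb (your final bound is correct, but the intermediate claim as written does not follow).
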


The condition $n \ll N^2$ is only used to ensure that the high probability has $N^{-7}$ tail.  
Note that by Assumption \ref{assump:sigma-large-N},
\[
n = t \sigma^{-2} 
	\ll t (N / \log N)^{2/d} 
	\le t (N / \log N)^{2},
\]
so whenever $t =O( \log^2 N)$ we would have $n \ll N^2$.
When $t$ is large, $Q_t f $ approaches the average of $f$ on $\calM$ and is not informative,
and thus we are only interested in when $t \lesssim 1$ in most scenarios.

This not only guarantees $ n \ll N^2$, but also allows to simplify the error bound in Theorem \ref{thm:p-uniform}.
For example, when $ t \lesssim 1$, the error bound in (i) becomes (omitting the constants)
\[
O \Big( \sigma^2 + \sqrt{ \frac{\log N}{N \sigma^{d+2}}} \Big)
= O(N^{- 2/(d+6)}) \text{ up to log factor,  when $\sigma\sim N^{-1/(d+6)}$},
\]
which recovers the point-wise rate \eqref{eq:GL-rate-C4} of graph Laplacian.
Note that this holds for any $f \in C^{0,\beta}(\calM)$, which may not have 2nd derivative.
If $f$ is merely in $L^\infty(\calM)$, the error bound bound only degenerates  another $\log n$ factor in the $O(\sigma^2)$ term.

When $f$ has higher regularity, we expect the rate to remain the same as long as $t \sim 1$.
Specifically, if $f \in C^2(\calM)$, then one can expect 
$ \| \nabla Q_t f \|_\infty \le C \|f \|_{C^1}$, 
$ \| \nabla^2 Q_t f \|_\infty \le C \|f \|_{C^2},$
as the counterpart of Lemma \ref{lemma:smoothing-estimates}.
By applying Lemma \ref{lemma:one-step-C2}(i) starting from the 1st step, one can show the final bound to be 
$t O( \sigma^2 + \sqrt{ \frac{\log N}{N \sigma^{d+2}}} )$, namely $n$ times the one-step error bound.
Compared to Theorem \ref{thm:p-uniform}(i), this only improves the factor in powers of $t$ when $t \ll 1$,
e.g. from  $t^{\beta/2} $ to $t$ in the $O(\sigma^2)$ term.
In particular, if $ t \sim 1$, then the rate is the same and matches the pointwise rate of $O( \sigma^2 + \sqrt{ \frac{\log N}{N \sigma^{d+2}}} )$.

\begin{proof}[Proof of Theorem \ref{thm:p-uniform}]

We define $f_0 : = f$ and $f_m:= Q_{m \sigma^2} f$, $m = 1, 2, \cdots, n$.
We also denote by $\vec g = \rho_X g$ for a function $g$, \, $\vec{}$\, standing for the vector.
Let $N$ be large enough so that Lemma \ref{lemma:one-step-C2} applies, and the big-O bounds therein for all three cases involve a constant $C_8$ that depends on $\calM$ only.

\vspace{5pt}

Consider case (i).
In the first multiplication of $P$, we use Lemma \ref{lemma:one-step-C2}(ii),
which gives that w.p. $>1-4N^{-9}$, 
\[
P \vec f_0 = \vec f_1 + r_1, \quad 
	\| r_1\|_\infty \le C_8  \| f\|_{0,\beta} \Big(  \sigma^{2+\beta}
								 +  \sigma^\beta \sqrt{\frac{\log N}{ N \sigma^d}} \Big).
\]
If $n=1$, this already proves the bound. If $n \ge 2$,
in the next $n-1$ multiplications, $f_m  \in C^\infty(\calM)$, which allows us to  use Lemma \ref{lemma:one-step-C2}(i).
Then, w.p. $>1-4N^{-9}$ for each $m=1, \cdots, n-1$, 
\[
P \vec f_{m} = \vec f_{m+1} + r_{m+1}, \quad 
\| r_{m+1}\|_\infty 
	\le C_8 ( \| f_m \|_{C^2} \sigma^4 + 
	 \| f_m \|_{C^1} \sigma \sqrt{\frac{\log N}{N \sigma^d}}).
\]
To bound $\| f_m \|_{C^2}$ and $\| f_m \|_{C^1}$, we use Lemma \ref{lemma:smoothing-estimates}(i).
For each $m$, $t = m\sigma^2 > 0$, then 
\begin{align*}
\| \nabla f_m \|_\infty  \le C_7 (m\sigma^2)^{-1/2 + \beta/2} \| f\|_{0,\beta}, \quad
\| \nabla^2 f_m \|_\infty  \le C_7 (m\sigma^2)^{-1 + \beta/2} \| f\|_{0,\beta}. 
\end{align*}
Combined with that $\| f_m\|_\infty \le \|f_0\|_\infty$ due to Lemma \ref{lemma:Qt-contractive-Linf}, we have
\begin{align*}
\| f_m \|_{C^2} 
	& = \|f_m\|_\infty + \| \nabla f_m\|_\infty + \| \nabla^2 f_m\|_\infty \\
	& \le \|f_0\|_\infty + C_7 (m\sigma^2)^{-1/2 + \beta/2} \| f\|_{0,\beta}
			+ C_7 (m\sigma^2)^{-1 + \beta/2} \| f\|_{0,\beta} \\
 	&\le 2(1+C_7) \| f\|_{0,\beta} ( 1  + (m\sigma^2)^{-1 + \beta/2} ),
\end{align*}
where in the last inequality we used that $t ^{-1/2 + \beta/2} \le 1 + t^{-1+\beta/2}$ for any $t >0$;
Meanwhile,
\begin{align*}
\| f_m \|_{C^1} 
	& = \|f_m\|_\infty + \| \nabla f_m\|_\infty  \\
	& \le \|f_0\|_\infty + C_7 (m\sigma^2)^{-1/2 + \beta/2} \| f\|_{0,\beta} \\
 	&\le (1+C_7) \| f\|_{0,\beta} ( 1  + (m\sigma^2)^{-1/2 + \beta/2} ).
\end{align*}

By construction,
\[
 P^n \vec f_0  - \vec f_n  = \sum_{m=1}^{n} P^{n-m} r_{m},
\]
and then, by triangle inequality,
\[
\|  P^n \vec f_0  - \vec f_n   \|_\infty
\le  \sum_{m=1}^{n} \| P^{n-m} r_{m}\|_\infty 
\le \| r_1\|_\infty + \sum_{m=1}^{n-1} \| r_{m+1}\|_\infty,
\]
where the 2nd inequality is by that $P$ is contractive for vector $\infty$-norm (by that $P$ is row-stochastic).
Collecting the estimates above, we have
\begin{align*}
\sum_{m=1}^{n-1} \| r_{m+1}\|_\infty
\le C_8(1+C_7) \| f\|_{0,\beta} 
 	\Big(
	2 \sigma^4  \sum_{m=1}^{n-1} 
	  ( 1  + (m\sigma^2)^{-1 + \beta/2} )  
	+ \sigma \sqrt{\frac{\log N}{N \sigma^d}} \sum_{m=1}^{n-1}   ( 1  + (m\sigma^2)^{-1/2 + \beta/2} )  \Big).
\end{align*}
Note that, since $0< \beta \le 1$,
\[
 \sum_{m=1}^{n-1} 
	   m^{-1 + \beta/2} 
	    \le 1+\frac{2}{\beta} n^{\beta/2}
	   \le \frac{4}{\beta} n^{\beta/2}, \quad
 \sum_{m=1}^{n-1} 
	   m^{-1/2 + \beta/2} \le \frac{2}{1+\beta} n^{1/2+\beta/2}
	   \le 2 n^{1/2+\beta/2}.	   
\]
Writing $n \sigma^2$ as $t$, we then have
\begin{align*}
\sum_{m=1}^{n-1} \| r_{m+1}\|_\infty
\le C_8(1+C_7) \| f\|_{0,\beta} 
 	\Big(
	2 \sigma^2  
	( t + \frac{4}{\beta}  t^{\beta/2} )
	+  \sqrt{\frac{\log N}{N \sigma^{d+2}}} 
	   ( t  + 2 t^{(\beta+1)/2} )  \Big).
\end{align*}
Combined with the bound of $\| r_1\|_\infty$, we have
\[
\|  P^n \vec f_0  - \vec f_n   \|_\infty
= \| f\|_{0,\beta}
		    O	\Big(  
			\sigma^2  
			( \sigma^\beta+ t + \frac{t^{\beta/2}}{\beta}  )
			 +  \sqrt{\frac{\log N}{N \sigma^{d+2}}}  (\sigma^{\beta+1} + t  +  t^{(\beta+1)/2} )
			 \Big),
\]
and by that $ \sigma^2 \le t$, we have $\sigma^{\beta} \le t^{\beta/2}$,
and $\sigma^{\beta+1} \le t^{(\beta+1)/2}$.
This proves the bound when $n\ge 2$, where the good event is the intersection of the $n$ good events due to applying Lemma \ref{lemma:one-step-C2} for $n$ times, 
and then the tail probability is $4 n N^{-9} \ll N^{-7} $.

\vspace{5pt}
The proof of case (ii) follows the same strategy:
in the first multiplication of $P$ we use Lemma \ref{lemma:one-step-C2}(iii),
which proves the bound when $n=1$;
In the rest $n-1$ multiplications we still use Lemma \ref{lemma:one-step-C2}(i) because $f_m \in C^{\infty}(\calM)$.
We use Lemma \ref{lemma:smoothing-estimates}(ii) to bound $\| \nabla f_m \|_\infty$ and $\| \nabla^2 f_m \|_\infty$.
This leads to 
\begin{align*}
\sum_{m=1}^{n-1} \| r_{m+1}\|_\infty
&\le C_8(1+C_7) \| f\|_{\infty} 
 	\Big(
	2 \sigma^4   
	  ( n-1  + \sum_{m=1}^{n-1} (m\sigma^2)^{-1} )  
	+ \sigma \sqrt{\frac{\log N}{N \sigma^d}} 
	  ( n-1  + \sum_{m=1}^{n-1}  (m\sigma^2)^{-1/2 } )  \Big) \\
&\le 	  C_8(1+C_7) \| f\|_{\infty} 
 	\Big(
	2 \sigma^2   
	  ( t  +  1+ \log n )  
	+ \sqrt{\frac{\log N}{N \sigma^{d+2}}} 
	  ( t  +  2 \sqrt{t} )  \Big),
\end{align*}
where we use that 
$ \sum_{m=1}^{n-1}  m^{-1} \le 1+ \log n$,
$\sum_{m=1}^{n-1} m^{-1/2} \le 2\sqrt{n}$.
As a result,
\[
\|  P^n \vec f_0  - \vec f_n   \|_\infty
= 	  \| f\|_{\infty} 
 	O\Big(
	 \sigma^2   
	  ( t  +  1+ \log n )  
	+ \sqrt{\frac{\log N}{N \sigma^{d+2}}} 
	  ( \sigma + t  +   \sqrt{t} )  \Big),
\]
and the claimed bound follows by that $\sigma \le \sqrt{t}$ since $ n \ge 1$.
\end{proof}

\section{Theoretical extensions}\label{sec:theory-extend}

\subsection{Non-uniform data density on manifold}\label{subsec:non-uniform-p}

We still assume Assumptions \ref{assump:M}, \ref{assump:iid-data}, \ref{assump:sigma-large-N} and, in addition,

\begin{assumption}\label{assump:p-C3}
$p\in C^{3}(\calM)$ and is uniformly bounded both from below and above, that is, $\exists p_{\min}, \, p_{\max} > 0$ s.t.
\[ 0< p_{\min}  \le p(x) \le p_{\max} < \infty, \quad\forall x\in{\calM}.
\]
\end{assumption}

Because $p$ is no longer uniform, 
$\frac{1}{N}\sum_j W_{ij} f(x_j)$ no longer approximates $\int_\calM K_\epsilon(x,y) f(y) dV(y)$, but $\int_\calM K_\epsilon(x,y) f(y) p(y) dV(y)$. 
To recover the convolution with the manifold heat kernel, one would need to cancel the $p(y)$ factor.
Observe that $D_i/N \approx p(x_i)$ by Lemma \ref{lemma:Di-concen-eps2}(i),
a natural choice is thus to right normalize the matrix $W$ by the degrees $D_j$.
Such ``density correction'' normalization 
has been previously introduced \cite{coifman2006diffusion}
and the convergence of the associated graph Laplacian was analyzed in \cite{cheng2022eigen}.
One then expects that
\[
\sum_{ j=1}^N W_{ij} \frac{f(x_j)}{ D_j}
\approx 
\int_\calM K_\epsilon( x_i, y) \frac{f(y)}{\cancel{p(y)}} \cancel{p(y)} dV(y)
= \int_\calM K_\epsilon( x_i, y) f(y) dV(y).
\]
We thus consider 
\[
\tilde P := D_s^{-1} W D^{-1}, \quad D_s = {\rm diag}(s), \quad s:= WD^{-1} {\bf 1}_N,
\]
and aim at showing that 
\[
\rho_X Q_t f \approx  \tilde P^n \rho_X f, \quad  t = n \sigma^2.
\]

\begin{lemma}\label{lemma:one-step-C2-pC3}
For large enough $N$ that depends on $(\calM, p)$, 
the same bounds for $ \| \tilde P \rho_X g - \rho_X Q_{\sigma^2} g \|_\infty$ hold 
under the three cases as in Lemma \ref{lemma:one-step-C2} respectively, 
w.p. $> 1- 8N^{-9}$ for each case.
The constants in big-O depend on $(\calM, p)$.
\end{lemma}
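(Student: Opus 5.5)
We will mirror the proof of Lemma \ref{lemma:one-step-C2}, writing $\epsilon=\sigma^2$ and expressing $\tilde P$ applied to a vector as a ratio of two normalized sums:
\[
(\tilde P \rho_X g)_i
= g(x_i) + \frac{\frac{1}{N-1}\sum_{j} W_{ij}\,(g(x_j)-g(x_i))/D_j'}{\frac{1}{N-1}\sum_j W_{ij}/D_j'}, \qquad D_j' := D_j/N .
\]
The common factors of $N$ cancel in the ratio, so this is exactly $\sum_j \tilde P_{ij} g(x_j)$. The plan has three steps.

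\textbf{Step 1: replace the random weights by $1/p$.} By Lemma \ref{lemma:Di-concen-eps2}(i), on a good event $E_1$ of probability $>1-2N^{-9}$ we have, uniformly in $j$,
\[
D_j' = p(x_j) + O\Big(\epsilon + \sqrt{\tfrac{\log N}{N\epsilon^{d/2}}}\Big).
\]
Here the $O(\epsilon)$ term is deterministic, of the form $\epsilon\, c(x_j)$ with $c$ built from $\Delta p$ and curvature (this is where $p\in C^3$ is used). Writing $1/D_j' = \frac{1}{p(x_j)}(1+\eta_j)$, the vector $\eta$ splits into two parts:
\begin{itemize}
\item a smooth bias part $\epsilon\,\phi(x_j)$, with $\phi\in C^1$ and depending on $(\calM,p)$;
\item a random part with $\max_j|\eta_j^{\rm rand}| = O\big(\sqrt{\log N/(N\epsilon^{d/2})}\big)$.
\end{itemize}

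\textbf{Step 2: the bias part.} Substituting $1/p$ into both sums, the conditional expectations become
\[
\int K_\epsilon(x_i,y)(g(y)-g(x_i))\,dV(y) \quad\text{and}\quad \int K_\epsilon(x_i,y)\,dV(y) = 1+O(\epsilon).
\]
The first is handled by Lemma \ref{lemma:Keps-g} in each of the three cases. The smooth factor $1+\epsilon\phi$ is absorbed in two ways:
\begin{itemize}
\item In the numerator, apply Lemma \ref{lemma:Keps-g} to $g\cdot(1+\epsilon\phi)$, or more simply note that $\epsilon\int K_\epsilon\,\phi\,(g(y)-g(x))$ is $\epsilon$ times a quantity bounded as in Lemma \ref{lemma:Qeps-g-estimate}. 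This gives $O(\epsilon^2\|g\|_{C^2})$ in case (i), using the odd-symmetry cancellation of the first-order term with $\phi\in C^1$; $O(\epsilon^{1+\beta/2})$ in case (ii); and $O(\epsilon)$ in case (iii).
\item In the denominator, the factor only perturbs it by $1+O(\epsilon)$, which multiplies $\|Q_\epsilon g-g\|_\infty$. Lemma \ref{lemma:Qeps-g-estimate} makes this $O(\epsilon^2\|g\|_{C^2})$, and analogously in the other cases.
\end{itemize}

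\textbf{Step 3: variance terms and the random part of $\eta$.} Concentration proceeds as before, with the $1/p$ weight costing only $p_{\min}^{-1}$ factors. However, $D_j$ depends on all samples, so $\sum_j W_{ij}g_j/D_j$ is not an i.i.d. sum. We handle this by bounding the random part deterministically:
\[
\Big|\frac{1}{N-1}\sum_j W_{ij}\,(g(x_j)-g(x_i))\,\frac{\eta_j^{\rm rand}}{p(x_j)}\Big|
\le \max_j|\eta_j^{\rm rand}|\cdot \frac{1}{N-1}\sum_j W_{ij}\,|g(x_j)-g(x_i)|.
\]
The absolute sum is an i.i.d. sum controlled on a further good event by the same kernel-integral estimate, giving $O(\|g\|_{C^1}\sigma)$, $O(\|g\|_{0,\beta}\sigma^\beta)$, and $O(\|g\|_\infty)$ respectively. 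Multiplying by $\sqrt{\log N/(N\epsilon^{d/2})}$ then yields exactly the variance terms of Lemma \ref{lemma:one-step-C2}. The denominator is treated similarly, and the ratio is expanded as in the uniform case.

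Counting events gives four good events ($E_1$, the numerator concentration, the absolute-sum concentration, and the denominator concentration), each of probability $>1-2N^{-9}$, for a total failure probability below $8N^{-9}$.

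The main obstacle is case (i), Step 2: showing the deterministic $O(\epsilon)$ correction in $D_j'$ does not spoil the $\epsilon^2$ bias. This requires that correction to be a $C^1$ function so that the first-order term cancels. If the expansion in Lemma \ref{lemma:Di-concen-eps2} is insufficiently explicit, the fallback is to prove $\int K_\epsilon(x,y)p(y)\,dV(y) = p(x)+\epsilon c(x)+O(\epsilon^2)$ with $c\in C^1$, using $p\in C^3$ and the same normal-coordinate expansion as in the proof of Lemma \ref{lemma:Keps-g}.
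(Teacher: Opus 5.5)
Your proposal is correct and follows essentially the same route as the paper. The paper likewise keeps the deterministic smooth correction inside the conditionally i.i.d.\ sum, via $\tilde p_\epsilon = p+\epsilon(\omega p+\Delta p)$ with $\eta = (\omega p+\Delta p)/p\in C^1$. It splits that correction's effect into a term pulled out at $x_i$ and a cross term bounded by $\epsilon\|\nabla\eta\|_\infty\|\nabla g\|_\infty\int K_\epsilon s^2$, so no extra cancellation is needed there. It routes the remaining non-smooth factor through the deterministic absolute-sum bound, uses Lemma \ref{lemma:Di-concen-eps2}(ii) for the denominator, and arrives at the same $8N^{-9}$ event count.

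One bookkeeping fix: since $p$ is only $C^3$, Lemma \ref{lemma:Di-concen-eps2}(i) leaves an $O(\epsilon^{3/2})$ remainder beyond the $\epsilon$-term, so your ``random'' part is $O(\epsilon^{3/2}+\sqrt{\log N/(N\epsilon^{d/2})})$, not just the square-root term. This is harmless through your Step 3 bound, since it contributes only $\sigma^4$, $\sigma^{3+\beta}$ and $\sigma^3$ in the three cases. Your $O(\epsilon^2)$ fallback, by contrast, would require more than $C^3$ regularity of $p$, and it is not needed.
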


\begin{proof}[Proof of Lemma \ref{lemma:one-step-C2-pC3}]
The proof follows the strategy of Lemma \ref{lemma:one-step-C2}, where we handle the $1/D_j$ normalization with extra care.
For the latter, we borrow the techniques used in \cite[Section 6]{cheng2022eigen}.

Let $\epsilon = \sigma^2$. 
By definition, 
\begin{equation}\label{eq:Pij-ratio}
\sum_{j=1}^N \tilde P_{ij} g(x_j)
= g(x_i) + 
\frac{  \sum_{j=1}^N W_{ij} \frac{ g(x_j) - g(x_i) }{D_j} }{  \sum_{j=1}^N \frac{W_{ij}}{D_j}} 
= g(x_i) + \frac{\textcircled{1}_i}{\textcircled{2}_i}. 
\end{equation}
By Lemma \ref{lemma:Di-concen-eps2}(ii), 
with large $N$, under a good event denoted as $E_{1,2}$ which happens w.p.$> 1-4N^{-9}$,
we have 
\[
\textcircled{2}_i = 
\sum_{j=1}^N \frac{W_{ij}}{D_j} = 1 + \varepsilon^D_i, 
\quad \max_{1 \le i \le N} |\varepsilon^D_i | = O \Big( \epsilon + \sqrt{ \frac{\log N}{N \epsilon^{d/2}} } \Big).
\]
and recall that $O(\epsilon + \sqrt{ \frac{\log N}{N \epsilon^{d/2}} }) = o(1)$,
we also have $  |\varepsilon_i^D| < 1/2$ for all $i$ with large $N$.
The needed large-$N$ threshold for $E_{1,2}$ 
and $  |\varepsilon_i^D| < 1/2$ depends on $(\calM,p)$.

Meanwhile, the good event $E_{1,2}$ is under the good event needed by Lemma \ref{lemma:Di-concen-eps2}(i), and then we also have
\[
\frac{1}{N} D_i =  \tilde p_\epsilon (x_i) + O \Big( \epsilon^{3/2} + \sqrt{ \frac{\log N}{N \epsilon^{d/2}} } \Big),
\quad 
\tilde p_\epsilon = p +  \epsilon p_1,
\]
where $p_1 =  \omega p + \Delta p \in C^1(\calM)$. 
We introduce \[
\eta: = p_1/ p = \omega + \Delta p /p, \quad \eta \in C^1(\calM),
\] 
and then $p/\tilde p_\epsilon = (1+\epsilon \eta)^{-1}$.
We assume $\epsilon \| \eta \|_\infty < 1/2$, which can be guaranteed by large enough $N$ (depending on $\calM$ and $p$)
 since $\epsilon = o(1)$.
 Then,  uniformly over $\calM$,  
  \[
 1+\epsilon \eta(x) > 1/2, \quad 
 \tilde p_\epsilon(x) = (1+ \epsilon \eta(x)) p(x) > p(x)/2 > p_{\rm min}/2 =: \tilde p_{\rm min}  >0.
 \]

 To proceed, note that as shown in the proof of Lemma \ref{lemma:one-step-C2}, 
by Assumption \ref{assump:sigma-large-N}, $D_i/N = O(1)$, 
and we also have $1/N \ll \sqrt{ \frac{\log N}{N \epsilon^{d/2}} }$.
As a result, we have
\[
\frac{1}{N-1} D_i =  \tilde p_\epsilon (x_i) (1 + \varepsilon_i),
\quad \max_{1 \le i \le N} |\varepsilon_i | = O \Big( \epsilon^{3/2} + \sqrt{ \frac{\log N}{N \epsilon^{d/2}} } \Big),
\quad \max_{1 \le i \le N} |\varepsilon_i| < 1/2,
\]
with large $N$, and the threshold depends on $(\calM, p)$.
This allows us to write
\begin{align*}
\textcircled{1}_i 
& = \frac{1}{N-1}\sum_{j=1}^N W_{ij} \frac{ g(x_j) - g(x_i) }{ \tilde p_\epsilon(x_j) (1+ \varepsilon_j)}  \\
& = \frac{1}{N-1}\sum_{j=1}^N W_{ij} \frac{ g(x_j) - g(x_i) }{ \tilde p_\epsilon(x_j) } (1+ \varepsilon_j') 
 = \textcircled{1}_i'  + \textcircled{3}_i,
\end{align*}
where 
\begin{align*}
\textcircled{1}_i' 
& := \frac{1}{N-1}\sum_{j=1}^N W_{ij} \frac{g(x_j) - g(x_i)}{\tilde p_\epsilon(x_j)}, \\
\textcircled{3}_i
& : = 
\frac{1}{N-1}\sum_{j=1}^N W_{ij} \frac{ g(x_j) - g(x_i) }{ \tilde p_\epsilon(x_j) } \varepsilon_j',
\quad 
\max_{1 \le i \le N} |\varepsilon_i' | = O \Big( \epsilon^{3/2} + \sqrt{ \frac{\log N}{N \epsilon^{d/2}} } \Big),
\end{align*}
by that $\varepsilon_i' = - \varepsilon_i/(1+\varepsilon_i)$
and then $|\varepsilon_i'| \le 2 |\varepsilon_i|$.
We claim that, with large $N$ and under a good event $E_3$ which happens w.p. $>1-2N^{-9}$, for all $i=1,\cdots,N$,
\begin{equation}\label{eq:claim-E3-bound}
\frac{1}{N-1}\sum_{j=1}^N W_{ij} |g(x_j) - g(x_i) |
=\begin{cases} 
 O(\sigma^{1}) \|  g \|_{C^1}, & \quad g \in C^1(\calM),\\
 O(\sigma^{\beta }) \| g \|_{0,\beta}, &\quad g \in C^{0,\beta}(\calM), \\
 O(1) \| g\|_\infty, 	&\quad g \in L^\infty(\calM). \\
\end{cases}
\end{equation}
If true, we then have
\begin{align*}
|\textcircled{3}_i| 
& \le 
\frac{1}{N-1}\sum_{j=1}^N W_{ij} \frac{ |g(x_j) - g(x_i) |}{ \tilde p_\epsilon(x_j) } |\varepsilon_j'|
\le \frac{ \max_{1 \le i \le N} |\varepsilon_i' |}{\tilde p_{\rm min}}\frac{1}{N-1}\sum_{j=1}^N W_{ij} |g(x_j) - g(x_i) | \\
& = \begin{cases} 
O \Big( \sigma^{4} + \sigma \sqrt{ \frac{\log N}{N \sigma^{d}} } \Big)\|  g \|_{C^1}, & \quad g \in C^1(\calM), \\
O \Big( \sigma^{3+\beta} + \sigma^{\beta}\sqrt{ \frac{\log N}{N \sigma^{d}} } \Big) \|g \|_{0,\beta}, &\quad g \in C^{0,\beta}(\calM), \\
O \Big( \sigma^{3} + \sqrt{ \frac{\log N}{N \sigma^{d}} } \Big) \| g\|_\infty, &\quad g \in L^\infty(\calM),
\end{cases}
\end{align*}
and the constants in big-O depend on $(\calM, p)$.

Next, we analyze the main term $\textcircled{1}_i'$.
Conditioning on $x_i$, the summands 
$Y_j := W_{ij} (g(x_j) - g(x_i))/\tilde p_\epsilon(x_j)$ for $j \neq i$ are i.i.d., with
\[
\E [Y_j \mid x_i] 
= \int_\calM K_\epsilon(x_i, y)(g(y) - g(x_i))\frac{p(y)}{\tilde p_\epsilon(y)}\,dV(y).
\]
By definition of $\eta$, we have that for any $x$,
\begin{align*}
& \int_\calM K_\epsilon(x, y)(g(y) - g(x))\frac{p(y)}{\tilde p_\epsilon(y)}\,dV(y) \\
& = \int_\calM K_\epsilon(x, y)(g(y) - g(x)) \frac{1}{1+\epsilon \eta} (y) \,dV(y) \\
& = \underbrace{\int_\calM K_\epsilon(x, y)(g(y) - g(x))  \,dV(y) }_{= (Q_\epsilon g - g)(x) + r(x) }
 	+ \tilde r_1(x) + \tilde r_2(x),
\end{align*}
where
\begin{align*}
\tilde r_1(x) 
& = 	\frac{- \epsilon \eta}{1+\epsilon \eta}  (x)
	 \int_\calM K_\epsilon(x, y)(g(y) - g(x)) \,dV(y), \\
\tilde r_2(x) 
& =  \int_\calM K_\epsilon(x, y)(g(y) - g(x)) 
	 \big( \frac{1}{1+\epsilon \eta (y)}- \frac{1}{1+\epsilon \eta(x)}  \big) \,dV(y).
\end{align*}
We again assume large enough $N$ s.t. $\epsilon < t_1$, so that Lemma~\ref{lemma:Keps-g} applies.
Then,
$ \int_\calM K_\epsilon(x, y)(g(y) - g(x)) \,dV(y) = (Q_\epsilon g - g)(x) + r(x)$,
and  $\|r\|_\infty$ is bound as therein for the three regularity classes. 
Together with that $1 + \epsilon \eta \ge 1/2$ uniformly
and Lemma~\ref{lemma:Qeps-g-estimate} to bound $\|Q_\epsilon g - g\|_\infty$,
we have
\[
\| \tilde r_1 \|_\infty
\le 2 \epsilon \| \eta \|_\infty 
	(  \| Q_\epsilon g - g \|_\infty + \|r \|_\infty )
= \begin{cases}
O( \epsilon^2) \|  g\|_{C^2}, 			& g \in C^2(\calM), \\
O( \epsilon^{1+\beta/2}) \|  g\|_{0,\beta}, 	& g \in C^{0,\beta}(\calM), \\
O( \epsilon) \|g\|_\infty, 			& g \in L^\infty(\calM),
\end{cases}
\]
noting that $\| \eta\|_\infty$ is an $O(1)$ constant depending on $(\calM, p)$
and we also use that $\epsilon <1$.

To bound $| \tilde r_2(x) |$, we adopt the same technique in proving Lemma~\ref{lemma:Keps-g}
by truncating on the geodesic ball $B_{2\delta(\epsilon)}(x)$.
When $y$ is within the ball,  $y = \exp_x (s\theta)$, 
then since $\eta \in C^1(\calM)$, we have
\[
 \big| \frac{1}{1+\epsilon \eta (y)}- \frac{1}{1+\epsilon \eta(x)}  \big|
 \le 4 \epsilon \| \nabla \eta\|_\infty s,
\]
where we used that $ \nabla (1+\epsilon \eta)^{-1}
= -\epsilon (1+\epsilon \eta)^{-2} \nabla \eta  $ and $1+\epsilon \eta > 1/2$ uniformly.
This allows the bound the integral within the ball to be 
$ O(\epsilon^2)  \| \nabla \eta\|_\infty  \| \nabla g\|_\infty$ when $g$ is $C^1$, and 
$ O(\epsilon^{3/2})  \| \nabla \eta\|_\infty  \|  g\|_\infty$ when $g \in L^{\infty}(\calM)$, including when $g$ is in $C^{0,\beta}(\calM)$.
The integral outside $B_{2\delta(\epsilon)}(x)$ can be bounded by $O(\epsilon^2 \| g\|_\infty)$
by that 
$K_\epsilon(x,y) \le (4\pi)^{-d/2} \epsilon^2.$
Thus, we have
\[
\| \tilde r_2 \|_\infty
= \begin{cases}
O( \epsilon^2) \|  g\|_{C^1}, 			& g \in C^1(\calM), \\
O( \epsilon^{3/2})  \|g\|_\infty, 			& g \in L^\infty(\calM).
\end{cases}
\]
Collecting the bounds, we have
\[
\int_\calM K_\epsilon(x, y)(g(y) - g(x))\frac{p(y)}{\tilde p_\epsilon(y)}\,dV(y)  
= (Q_\epsilon g - g)(x) + \tilde r(x), 
\quad \tilde r = r + \tilde r_1 + \tilde r_2,
\]
where
\[
\| \tilde r \|_\infty = 
\begin{cases}
O( \epsilon^2) \| g\|_{C^2}, 			& g \in C^2(\calM), \\
O( \epsilon^{1+\beta/2}) \| g\|_{0,\beta}, 	& g \in C^{0,\beta}(\calM), \\
O( \epsilon) \| g\|_\infty, 			& g \in L^\infty(\calM),
\end{cases}
\]
with the constants in big-O now depending on $(\calM, p)$.

The variance and uniform bounds on $Y_j$ follow the same arguments as in the proof 
of Lemma~\ref{lemma:one-step-C2}, modified by the bounded factor $1/\tilde p_\epsilon \le 1/\tilde p_{\min}$:
\[
\mathrm{Var}(Y_j \mid x_i) \le \nu_Y, \quad |Y_j| \le L_Y,
\]
with $\nu_Y$ and $L_Y$ of the same orders as therein, and constants depending on $(\calM, p)$.
By Bernstein (Lemma~\ref{lemma:bernstein}) with $\tau = \sqrt{40 \log N\, \nu_Y / (N-1)}$, 
the condition $\tau L_Y \le 3 \nu_Y$ reduces to a threshold on $\log N/(N-1)$ depending 
only on $(\calM, p)$ and independent of $g$, satisfied for large $N$ by Assumption~\ref{assump:sigma-large-N}. 
Then, we have $\textcircled{1}_i' = \E [Y_j \mid x_i] + v_i$, where, w.p. $> 1 - 2N^{-10}$, 
\[
|v_i| = 
\begin{cases}
O\Big(\|g\|_{C^1}\,\sigma\sqrt{\frac{\log N}{N\sigma^d}}\Big), & g \in C^2(\calM),\\[2pt]
O\Big(L_{0,\beta}(g)\,\sigma^\beta\sqrt{\frac{\log N}{N\sigma^d}}\Big), & g \in C^{0,\beta}(\calM),\\[2pt]
O\Big(\|g\|_\infty\sqrt{\frac{\log N}{N\sigma^d}}\Big), & g \in L^\infty(\calM),
\end{cases}
\]
matching  the deviation bound in Lemma~\ref{lemma:one-step-C2}
and  all constants in big-O here also depend on $p$. 
By union bound 
over $i$, w.p. $> 1 - 2N^{-9}$ uniformly in $i$, and we call this event $E_4$,
\[
\textcircled{1}_i' = (Q_\epsilon g - g)(x_i) + \tilde r (x_i) + v_i,
\]
with $\| \tilde r\|_\infty$ and $ |v_i|$ uniformly bounded as above.

Combining \eqref{eq:Pij-ratio} with $\textcircled{1}_i = \textcircled{1}_i' + \textcircled{3}_i$ 
and $\textcircled{2}_i = 1 + \varepsilon_i^D$, under $E_{1,2} \cap E_3 \cap E_4$ 
(total tail $\le 8N^{-9}$),
\[
\frac{\textcircled{1}_i}{\textcircled{2}_i}
= (Q_\epsilon g - g)(x_i) + O(|\varepsilon_i^D|)\|Q_\epsilon g - g\|_\infty 
+ O( | \tilde r (x_i) | + |v_i | + |\textcircled{3}_i| ).
\]
The bound $O(|\varepsilon_i^D|)\|Q_\epsilon g - g\|_\infty$ is handled same as in the 
proof of Lemma~\ref{lemma:one-step-C2}, splitting $|\varepsilon_i^D|$ into its $O(\sigma^2)$ 
and $\sqrt{\log N/(N\sigma^{d})}$ parts and bounding $\|Q_\epsilon g - g\|_\infty$ by 
Lemma~\ref{lemma:Qeps-g-estimate}(i) and (ii) respectively for the $C^2$ case, and 
analogously for the other two cases. Together with the bound on $ | \textcircled{3}_i |$ 
from \eqref{eq:claim-E3-bound}, this yields the same final bounds as in 
Lemma~\ref{lemma:one-step-C2} for the three cases of $g$, with constants depending 
on $(\calM, p)$. 

It remains to prove the claim \eqref{eq:claim-E3-bound}. 
For each $i$, conditioning on $x_i$, 
let  $Z_j := W_{ij}|g(x_j) - g(x_i)| \ge 0$ for $j \neq i$, which are i.i.d.\ conditioning on  $x_i$. 
We can bound the expectation
\[
\E[ Z_j \mid x_i] = \int_\calM K_\epsilon(x_i, y)|g(y) - g(x_i)| p(y)\, dV(y)
\]
again via the $B_{2\delta(\epsilon)}(x_i)$ truncation: outside the ball, 
$K_\epsilon (x_i,y) \le (4\pi)^{-d/2}\epsilon^2$ contributes $O(\|g\|_\infty p_{\max}\epsilon^2)$ 
in all cases; inside the ball, the change of variable $u = s/\sqrt\epsilon$ together 
with $|g(y) - g(x_i)| \le \|\nabla g\|_\infty s$, $L_{0,\beta}(g)s^\beta$, or $2\|g\|_\infty$ 
respectively yields
\[
\E [Z_j \mid x_i] = 
\begin{cases}
O( \sigma) \|  g\|_{C^1}, & g \in C^1(\calM),\\
O( \sigma^{\beta}) \| g\|_{0,\beta}, & g \in C^{0,\beta}(\calM),\\
O(1) \|g\|_\infty, & g \in L^\infty(\calM),
\end{cases}
\]
with constants depending on $(\calM, p)$ through $p_{\max}$, $b_V$, $|S^{d-1}|$.

The variance $\mathrm{Var}(Z_j \mid x_i) \le \E[Z_j^2 \mid x_i]$ and uniform bound 
$|Z_j| \le L_Z$ follow same as in the proof of Lemma~\ref{lemma:one-step-C2}: 
$\nu_Z \asymp \| g\|_{C_1}^2 \epsilon^{-d/2+1}$, 
$L_Z \asymp \| g\|_{C^1} \epsilon^{-d/2+1/2}$ for the $C^1$ case, and analogously for 
the other two cases. 
Note that $\nu_Z/L_Z^2 \asymp \epsilon^{d/2}$ with the implicit constant depending 
only on $(\calM, p)$ and the $g$-norm factors cancel.
Bernstein with $\tau = \sqrt{40 \log N\, \nu_Z/(N-1)}$ then requires
$\log N/(N-1) \le (9/40) \nu_Z/L_Z^2 \asymp \epsilon^{d/2}$ for large $N$, satisfied 
under Assumption~\ref{assump:sigma-large-N} with threshold independent from $g$.
Then, w.p. $> 1 - 2N^{-10}$, the deviation is at most $O( \sigma \sqrt{\log N/(N\sigma^{d})}) \| g\|_{C_1}$ 
for $g \in C^1$ and analogous for the other cases,
and each dominated by the corresponding $\E [Z_j \mid x_i]$  under Assumption~\ref{assump:sigma-large-N}. 

By union bound over $i$, w.p.  $> 1 - 2N^{-9}$ uniformly in $i$,
which we call the good event $E_3$, 
 the claim \eqref{eq:claim-E3-bound} holds.
The large-$N$ threshold only depends on $(\calM, p)$ and not on $g$.
\end{proof}

\begin{theorem}\label{thm:pC3}
Under Assumptions \ref{assump:M}, \ref{assump:iid-data}, \ref{assump:sigma-large-N}, \ref{assump:p-C3},
suppose $n \ll N^2$, let $ t = n\sigma^2$.
Then, with large $N$,
we have the same bounds for $ \| \tilde P^n \rho_X f - \rho_X Q_t f \|_\infty$ as in Theorem \ref{thm:p-uniform} under the two cases respectively,
with the same high probability for each case.  
The  constants in big-O only depend on $(\calM, p)$.
\end{theorem}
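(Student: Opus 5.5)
The plan is to repeat the telescoping argument from the proof of Theorem \ref{thm:p-uniform} verbatim, with $P$ replaced by $\tilde P$ and Lemma \ref{lemma:one-step-C2} replaced by Lemma \ref{lemma:one-step-C2-pC3}.

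\textbf{Step 1: $\tilde P$ is a contraction in vector $\infty$-norm.} By construction $\tilde P = D_s^{-1} W D^{-1}$ with $s = W D^{-1}\mathbf{1}_N$. The matrix $W D^{-1}$ has nonnegative entries, so $\tilde P$ has nonnegative entries. Its $i$-th row sums to $s_i^{-1}\sum_j W_{ij}/D_j = 1$. Hence $\tilde P$ is row-stochastic, and $\|\tilde P^{k} v\|_\infty \le \|v\|_\infty$ for all $k$. This is the only structural property of $P$ used in the multi-step argument.

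\textbf{Step 2: telescoping.} Set $f_m = Q_{m\sigma^2} f$ and $\vec g = \rho_X g$, and write
\[
\tilde P^n \vec f_0 - \vec f_n = \sum_{m=1}^n \tilde P^{n-m} r_m, \qquad r_{m+1} = \tilde P \vec f_m - \vec f_{m+1}.
\]
\begin{itemize}
\item At the first step, bound $\|r_1\|_\infty$ by case (ii) or case (iii) of Lemma \ref{lemma:one-step-C2-pC3}, according to whether $f \in C^{0,\beta}(\calM)$ or $f \in L^\infty(\calM)$.
\item For $m \ge 1$ we have $f_m \in C^\infty(\calM)$, so bound $\|r_{m+1}\|_\infty$ by case (i), giving $O\big(\|f_m\|_{C^2}\sigma^4 + \|f_m\|_{C^1}\sigma\sqrt{\log N/(N\sigma^d)}\big)$.
\item Bound $\|f_m\|_{C^1}$ and $\|f_m\|_{C^2}$ using Lemma \ref{lemma:smoothing-estimates} together with $L^\infty$-contractivity of $Q_t$.
\end{itemize}
The smoothing estimates concern only the manifold semigroup $Q_t$, which is defined through the Laplace--Beltrami operator and $dV$, not through $p$. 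They therefore still hold with constants depending only on $\calM$. The sums $\sum_m (m\sigma^2)^{-1+\beta/2}$, $\sum_m (m\sigma^2)^{-1/2+\beta/2}$, $\sum_m (m\sigma^2)^{-1}$ and $\sum_m (m\sigma^2)^{-1/2}$ are evaluated exactly as before. This produces the same expressions in $t$, $\beta$, $\log n$, $\sigma^2$ and $\sqrt{\log N/(N\sigma^{d+2})}$; only the constants now depend on $(\calM,p)$, inherited from Lemma \ref{lemma:one-step-C2-pC3}.

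\textbf{Step 3: uniformity and probability.} This is the only point needing care, though it is not hard. The one-step lemma is applied $n$ times, to the different test functions $f_0, \dots, f_{n-1}$. Two things are needed.
\begin{itemize}
\item \emph{Uniform threshold.} The large-$N$ threshold in Lemma \ref{lemma:one-step-C2-pC3} depends only on $(\calM,p)$ and not on $g$; the proof of that lemma states this explicitly. So a single threshold works for all $f_m$.
\item \emph{Failure probability.} Each application fails with probability less than $8N^{-9}$. A union bound over the $n$ applications gives total failure probability at most $8nN^{-9}$, which is $\ll N^{-7}$ because $n \ll N^2$.
\end{itemize}
This yields the same high-probability statement as in Theorem \ref{thm:p-uniform}.

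The two cases then finish as before. In the $C^{0,\beta}$ case, absorb $\sigma^\beta \le t^{\beta/2}$ and $\sigma^{\beta+1} \le t^{(\beta+1)/2}$. In the $L^\infty$ case, absorb $\sigma \le \sqrt t$.
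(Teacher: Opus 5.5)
Your proposal is correct and matches the paper's proof essentially step for step. The paper likewise reruns the telescoping argument of Theorem~\ref{thm:p-uniform}, replacing $P$ by the row-stochastic (hence $\infty$-norm contractive) $\tilde P$ and Lemma~\ref{lemma:one-step-C2} by Lemma~\ref{lemma:one-step-C2-pC3}, with constants now depending on $(\calM,p)$ and a per-step tail of $8N^{-9}$, so the union bound gives $8nN^{-9}\ll N^{-7}$.
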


\begin{proof}[Proof of Theorem \ref{thm:pC3}]
The proof much follows that of Theorem \ref{thm:p-uniform},
replacing $P$ with $\tilde P$,
and Lemma \ref{lemma:one-step-C2} with Lemma \ref{lemma:one-step-C2-pC3}.
In particular, the constant $C_8$ in the proof, which absorbs all big-O terms, now depends on $(\calM, p)$.
Because $\tilde P$ is also row-stochastic, it contracts vector-$\infty$ norm. 
The other parts of the proof are the same as before. 
The tail probability per step changes  from $4N^{-9}$  to $8 N^{-9}$,
giving total tail $8 n N^{-9}$ which is still $\ll N^{-7} $. 
\end{proof}

\subsection{Out-of-sample extension}

We construct a functional estimator of $Q_t f$ on $\calM$ and prove uniform approximation error in $L^\infty(\calM) $.
Given a function $f$ on $\calM$, for any $z \in \calM$, we define 
\begin{equation}
\hat F_n(z) := 
	\frac{ \sum_{j=1}^N \frac{ K_\epsilon( z, x_j) }{D_j}  (\tilde P^{n-1} \rho_X f )_j}
		{\sum_{j=1}^N \frac{ K_\epsilon( z, x_j) }{D_j}},
\end{equation}
and when $z = x_i$, 
\[
\hat F_n(x_i) = \frac{\sum_j \frac{W_{ij}}{D_j} ( \tilde P^{n-1} \rho_X f)_j}{\sum_j \frac{W_{ij}}{D_j} } = ( \tilde P^{n} \rho_X f)_i,
\]
which recovers the on-sample estimator. 
In other words, the estimator $\hat F_n(z)$ extends to the whole manifold using kernel $K_\epsilon(z, x_j)$ in the last step of the $n$-step multiplication. 

The analysis is by standard extension technique: 
utilizing the Lipschitz continuity of $\hat F_n$,
it suffices to prove the pointwise approximation to $Q_t f$ on a covering net on $\calM$, 
and this only asks to extend the one-step analysis in the $n$-th step to be out-of-sample. 
Note that our proof of the one-step analysis, which is based on concentration conditioning on $x_i$, naturally applies when the evaluation location is an arbitrary point $z$ instead of one of the data samples.
Thus, our technique readily derives the out-of-sample error to satisfy the same bound.
In the following theorem, we consider the non-uniform $p$ case for generality.

\begin{theorem}\label{thm:out-of-sample}
Under the same assumption as Theorem \ref{thm:pC3}, $t = n \sigma^2$,
when $N$ is large enough, 
we have the same bounds for
$\| Q_t f  -\hat F_n   \|_\infty $
 as in Theorem \ref{thm:p-uniform} under the two cases respectively,
 w.p.$> 1-  N^{-4}$ for each case.
The  constants in big-O only depend on $(\calM, p)$.
\end{theorem}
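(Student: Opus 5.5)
We split the error at the last step. Let $\vec u := \tilde P^{n-1}\rho_X f$, so $\hat F_n(z) = \sum_j \frac{K_\epsilon(z,x_j)}{D_j} u_j \big/ \sum_j \frac{K_\epsilon(z,x_j)}{D_j}$. By Theorem \ref{thm:pC3} (with $n-1$ in place of $n$), under a good event of probability $>1-N^{-7}$ we have $\vec u = \rho_X f_{n-1} + e$, where $f_{n-1} = Q_{(n-1)\sigma^2} f$ and $\|e\|_\infty$ is bounded by the claimed rate. For $n=1$ we take $e=0$ and $f_0=f$. Since the weights $K_\epsilon(z,x_j)/D_j$ are positive and normalized, $\hat F_n$ is a convex combination. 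Hence
\[
|\hat F_n(z) - Q_t f(z)| \le \|e\|_\infty + |\hat G(z) - Q_{\sigma^2} f_{n-1}(z)|,\qquad \hat G(z):=\frac{\sum_j \frac{K_\epsilon(z,x_j)}{D_j} f_{n-1}(x_j)}{\sum_j \frac{K_\epsilon(z,x_j)}{D_j}}.
\]
It therefore suffices to prove the out-of-sample one-step bound: the conclusion of Lemma \ref{lemma:one-step-C2-pC3} holds uniformly in $z\in\calM$ for the fixed function $g=f_{n-1}$. This function is $C^\infty$ when $n\ge 2$ with $\|g\|_{C^2}$ and $\|g\|_{C^1}$ controlled by Lemma \ref{lemma:smoothing-estimates}; when $n=1$, $g=f$ is in $C^{0,\beta}$ or $L^\infty$. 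The resulting one-step error is of the same order as the $m=n$ term already summed in the proof of Theorem \ref{thm:p-uniform}, so the total matches that theorem.

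For a fixed $z$, we rerun the proof of Lemma \ref{lemma:one-step-C2-pC3} with $x_i$ replaced by $z$. The sums $\sum_j K_\epsilon(z,x_j)(g(x_j)-g(z))/\tilde p_\epsilon(x_j)$ and $\sum_j K_\epsilon(z,x_j)/\tilde p_\epsilon(x_j)$ are now sums of $N$ fully i.i.d.\ terms, so no conditioning is needed. The replacement $D_j/N \approx \tilde p_\epsilon(x_j)$ is the same event $E_{1,2}$, which concerns only the samples. The bias bounds from Lemma \ref{lemma:Keps-g} and Lemma \ref{lemma:Qeps-g-estimate} are already uniform in the base point, and the Bernstein variance and boundedness constants are unchanged. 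Hence for each fixed $z$ the bound holds with probability $>1-cN^{-10}$. We take a $\gamma$-net $\{z_k\}$ of $\calM$ with $\gamma = N^{-a}$ for a suitable fixed $a$; its cardinality is $O(\gamma^{-d})$. Choosing $a$ so that $\gamma^{-d} N^{-10}\le N^{-5}$, a union bound gives the bound simultaneously on the net with probability $>1-N^{-5}$. Combined with the $N^{-7}$ event from Theorem \ref{thm:pC3}, the total tail is below $N^{-4}$.

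The main obstacle is passing from the net to all $z\in\calM$. Both $\hat G$ and $Q_{\sigma^2} g$ must be Lipschitz in $z$ with constants polynomial in $N$, uniformly over the data. For $Q_{\sigma^2} g$ this follows from Lemma \ref{lemma:smoothing-estimates}, giving a Lipschitz constant of order $\sigma^{-1}\|g\|_\infty$. For $\hat G$, write it as $\sum_j w_j(z) a_j$ with normalized weights $w_j$ and $|a_j|\le\|g\|_\infty$. Then $|\nabla_z w_j| \lesssim w_j\,\|z-x_j\|/\epsilon$ plus the corresponding term from the denominator. Since $\|z-x_j\|\le \mathrm{diam}(\calM)$, the Lipschitz constant is at most $O(\epsilon^{-1})\|g\|_\infty$, and this holds deterministically even when some weights are tiny, because the normalization makes the bound independent of the denominator's size. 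With $\gamma=N^{-a}$ and $\epsilon^{-1}\le N$ (by Assumption \ref{assump:sigma-large-N}), the discretization error is $O(N^{1-a})\|f\|_\infty$. Taking $a$ large makes this negligible relative to $\sigma^2$, which is at least polynomially small in $N$; this is the only place where care is needed, and we choose $a$ depending only on $d$.
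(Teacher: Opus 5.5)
The structure of your proof matches the paper's: the convex-combination split at the last step, the fixed-$z$ rerun of Lemma~\ref{lemma:one-step-C2-pC3} for the deterministic function $g=f_{n-1}$, the deterministic $O(\epsilon^{-1})\|f\|_\infty$ gradient bound for the kernel-weighted average, and a net argument.

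\textbf{The gap: the net scale.} The gap is in the step you yourself flag as delicate, the choice $\gamma=N^{-a}$. You state two requirements on $a$ but never check that they can hold together, and as written they cannot.
\begin{itemize}
\item \emph{Upper bound on $a$.} The union bound over $\asymp N^{ad}$ net points, each with tail $\asymp N^{-10}$, forces $ad\le 5$ (at best $ad<6$).
\item \emph{Lower bound on $a$.} You use $\epsilon^{-1}\le N$, so the discretization error is $N^{1-a}\|f\|_\infty$. This must be compared with $\sigma^2$, and Assumption~\ref{assump:sigma-large-N} allows $\sigma^2$ to be roughly as small as $N^{-2/d}$. So even $N^{1-a}\lesssim\sigma^2$ needs $a\ge 1+2/d$.
\item \emph{Conflict.} $a\ge 1+2/d$ is incompatible with $ad\le 5$ (or $ad<6$) as soon as $d\ge 4$.
\item \emph{The inequality $\epsilon^{-1}\le N$ itself.} Assumption~\ref{assump:sigma-large-N} does not give it when $d=1$; there $\epsilon^{-1}$ can be of order $(N/\log N)^2$.
\end{itemize}

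\textbf{The target is also too weak.} Making the error ``negligible relative to $\sigma^2$'' is not enough. For $n=1$ (so $t=\sigma^2$), the bias part of the case (i) bound is only $\sigma^{2+\beta}/\beta+\sigma^4$. For example, with $\beta=1$ and $\sigma\sim N^{-1/(d+6)}$, the whole bound is of order $\sigma^3$. So the discretization error must be $O(\sigma^{2+\beta})$ uniformly in $n\ge 1$, for instance $O(\sigma^4)\le O(t\sigma^2)$. An error such as $\sigma^2/\log N$ would not be absorbed.

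\textbf{Repair.} Tie the net scale to $\epsilon$ rather than to $N$, as the paper does, by taking $\gamma=\epsilon^3$.
\begin{itemize}
\item The discretization error becomes $O(\epsilon^{-1})\,\epsilon^3\|f\|_\infty=O(\sigma^4)\|f\|_\infty\le O(t\sigma^2)\|f\|_\infty$, since $t\ge\sigma^2$. This is absorbed in both cases.
\item The net has $O(\epsilon^{-3d})=O\big((\epsilon^{-d/2})^6\big)\ll (N/\log N)^6$ points by Assumption~\ref{assump:sigma-large-N}.
\item Hence the union bound with per-point tail $O(N^{-10})$ still gives total tail $\ll N^{-4}$.
\end{itemize}

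\textbf{Alternative repair.} You can keep $\gamma=N^{-a}$ if you replace $\epsilon^{-1}\le N$ by the sharp consequence $\epsilon^{-1}\ll (N/\log N)^{2/d}$ of Assumption~\ref{assump:sigma-large-N}.
\begin{itemize}
\item Take $a=5/d$. The net then has $O(N^5)$ points.
\item The discretization error is $\ll N^{-3/d}(\log N)^{-2/d}\ll \sigma^3\le\sigma^{2+\beta}$.
\end{itemize}

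With either choice, the rest of your argument goes through unchanged.
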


\begin{proof}[Proof of Theorem \ref{thm:out-of-sample}]
Recall that $f_m = Q_{m \sigma^2} f$, and $\vec f_m = \rho_X f_m$, $f_0 = f$.
Let $\hat f_0 = \vec f$,
then $\hat f_{k+1} = \tilde P \hat f_{k}$,
then  $\hat f_m = \tilde P^m \vec f$.
We then have 
\[
\hat F_n(z) = 
	\frac{ \sum_{j=1}^N \frac{ K_\epsilon( z, x_j) }{D_j}  ( \hat f_{n-1} )_j}
		{\sum_{j=1}^N \frac{ K_\epsilon( z, x_j) }{D_j}}.
\]
The bound for $\|\hat f_{n-1}  - \vec f_{n-1}  \|_\infty$ inherits from Theorem \ref{thm:pC3}, and we only need to bound the error in the last step. 
Specifically, for any $z$,
\[
\hat F_n(z)  - f_n(z) = 
 \underbrace{\Big( \frac{ \sum_{j=1}^N \frac{ K_\epsilon( z, x_j) }{D_j}  ( \vec f_{n-1} )_j}
		{\sum_{j=1}^N \frac{ K_\epsilon( z, x_j) }{D_j}} - f_n(z) \Big)}_{=: B(z)}
	+ \frac{ \sum_{j=1}^N \frac{ K_\epsilon( z, x_j) }{D_j}  ( \hat f_{n-1} - \vec f_{n-1} )_j}
		{\sum_{j=1}^N \frac{ K_\epsilon( z, x_j) }{D_j}},
\]
and thus
\[
| \hat F_n(z)  - f_n(z) |
\le |B(z)| + \|  \hat f_{n-1} - \vec f_{n-1}\|_\infty.
\]
We claim that 
\begin{equation}\label{eq:claim-Lip-generalization}
\sup_{z \in \calM} |B(z)| = O(\epsilon^2 ) \| f\|_\infty + \text{ the one-step error as in Lemma \ref{lemma:one-step-C2-pC3} applied to $ f_{n-1} $},
\end{equation}
for both cases (i)(ii) of $f$, with tail probability $\ll N^{-4}$; Meanwhile, we can bound
$
\|  \hat f_{n-1}  - \vec f_{n-1}   \|_\infty
\le  \sum_{m=1}^{n-1} \| r_{m}\|_\infty
$ same as before.
This would add another $O(\sigma^4 \| f\|_\infty)$ to the final bound, 
 which is dominated by  $O(t \sigma^2 \| f\|_\infty)$ since $n \ge 1$, for both cases of $f$.
This would prove the theorem.

We first consider the Lipschitz property of $B(z)$ over $\calM$. 
We write $B = \bar F_n - f_n$, where
\[
\bar F_n(z) = 
\frac{ \sum_{j=1}^N \frac{ K_\epsilon( z, x_j) }{D_j}  ( \vec f_{n-1} )_j}
		{\sum_{j=1}^N \frac{ K_\epsilon( z, x_j) }{D_j}}.
\]
First $\| \vec f_{n-1}\|_\infty \le \| f_{n-1}\|_\infty \le \|f\|_\infty $ by Lemma \ref{lemma:Qt-contractive-Linf}.
Again by the row normalization, we have $| \bar F_n(z)| \le \| \vec f_{n-1}\|_\infty  \le \| f \|_\infty$.
By definition, one can verify that 
\[
\nabla_z \bar F_n(z)=
\frac{\sum_{j=1}^N \frac{ \nabla_z K_\epsilon( z, x_j) }{D_j}   ( (\vec f_{n-1} )_j - \bar F_n(z))}
	{  \sum_{j=1}^N \frac{ K_\epsilon( z, x_j) }{D_j}},
\]
this gives
\begin{align*}
\| \nabla_z \bar F_n(z) \|_{op} 
& \le  2 \| f\|_\infty
\frac{\sum_{j=1}^N \frac{ \|  \nabla_z  K_\epsilon( z, x_j) \|_{op} }{D_j}   }
	{  \sum_{j=1}^N \frac{ K_\epsilon( z, x_j) }{D_j}} \\
& \le  2 \| f\|_\infty
\frac{\sum_{j=1}^N  \frac{\| z - x_j\|}{2\epsilon}\frac{  K_\epsilon( z, x_j)  }{D_j}   }
	{  \sum_{j=1}^N \frac{ K_\epsilon( z, x_j) }{D_j}} 
	 =   O(\epsilon^{-1})\| f\|_\infty,
\end{align*}
where in the 2nd inequality used  that 
\[
\| \nabla_z K_\epsilon(z ,x_j)  \|_{op}
= \frac{\| {\rm Proj}_{T_z \calM} (x_j-z) \| }{2\epsilon} K_\epsilon(z ,x_j) 
\le \frac{\| z - x_j\|}{2\epsilon}  K_\epsilon(z ,x_j)
\]
by that $K_\epsilon (z, x_j) = \frac{1}{(4\pi \epsilon)^{d/2}}e^{- \| z - x_j \|^2/4 \epsilon}$.
This shows that $\| \nabla \bar F_n \|_\infty = O(\epsilon^{-1})\| f\|_\infty$.
A refined analysis can bound $\| \nabla \bar F_n \|_\infty$ to be $O(\epsilon^{-1/2})$ instead of $O(\epsilon^{-1})$,
but the latter suffices for our purpose.
At the same time, Lemma \ref{lemma:smoothing-estimates}(ii) gives that 
$
\| \nabla f_n \|_\infty \le C_7 (n \epsilon)^{-1/2} \| f\|_\infty = O(\epsilon^{-1/2}) \| f\|_\infty
$
by that $n \ge 1$. Putting together, we have
\[
\| \nabla B \|_\infty 
\le \| \nabla \bar F_n \|_\infty  + \| \nabla f_n \|_\infty = O( \epsilon^{-1}) \| f\|_\infty.
\]

We aim for $O(\epsilon^2)$ control when comparing $B(z)$ to an anchor point $B(z')$, 
and thus we construct an $\delta$-net $\calN$ of $\calM$ (under geodesic distance), where $\delta = \epsilon^3$.
The cardinal number of $\calN$ can be bounded to be $O(\delta^{-d}) = O(\epsilon^{-3d})$.
Under Assumption \ref{assump:sigma-large-N}, we further have
\[
|\calN| = O(\epsilon^{-3d}) \ll ({N}/{\log N})^6 \ll N^6.
\]
At every point $ z' \in \calN$, 
one can prove the concentration 
$
\sum_{j=1}^N \frac{ K_\epsilon(z', x_j)}{  D_j} 
 =  1+  O \Big( \epsilon + \sqrt{ \frac{\log N}{N \epsilon^{d/2}} } \Big)
$ 
in the same way as Lemma \ref{lemma:Di-concen-eps2}(ii), 
and the analogue of Lemma \ref{lemma:one-step-C2-pC3} to bound 
\[
|\bar F_n(z') - f_{n}(z')|=
\Big| \frac{ \sum_{j=1}^N \frac{ K_\epsilon( z', x_j) }{D_j}  g (x_j)}
		{\sum_{j=1}^N \frac{ K_\epsilon( z', x_j) }{D_j}}
		 - Q_{\sigma^2} g (z') \Big|
\]
in the same way for $g=f_{n-1}$ for all the three cases of $g$, 
with tail probability $O(N^{-10})$ at the point $z'$.
Specifically, when $n \ge 2$, $f_{n-1} \in C^\infty(\calM)$ and use the $g \in C^2$ case;
if $n=1$, use the $g \in C^{0,\beta}(\calM) $ or $L^\infty(\calM)$ case.
Take a union bound, we prove the uniform error bound in the last step with tail probability $\ll N^{-4} $. 
This gives that  $\max_{z \in \calN }|B(z')|$ satisfies the same one-step error bound as in Lemma \ref{lemma:one-step-C2-pC3}.
Then, $\forall z \in \calM$, $\exists z' \in \calN$ s.t. $d_\calM(z, z') \le \delta$, and then
\[
|B(z) | \le |B(z')| + \|\nabla B\|_\infty \delta 
\le \max_{z \in \calN }|B(z')| +  O( \epsilon^2) \| f\|_\infty,
\]
that is, $\sup_{z \in \calM} |B(z)|$ is bounded with another added  $O( \epsilon^2) \| f\|_\infty$.
This proves the claim \eqref{eq:claim-Lip-generalization}.
The tail probability of the first $n-1$ steps is bounded by $N^{-7}$ same as before. The overall tail probability is thus bounded by $N^{-4}$ with large $N$.
\end{proof}

\subsection{Manifold heat kernel estimator}\label{subsec:heat-kernel-estimator}

The previous results also suggest a natural estimator for the manifold heat kernel. 
For each sample point \(x_j\), let
\[
f_{\sigma,j}(x)=K_\epsilon(x,x_j), \qquad \epsilon=\sigma^2.
\]
Then \(W_{\cdot, j}=\rho_X f_{\sigma,j}\). Since \(\tilde P\) approximates the heat semigroup over one time step \(\epsilon\), applying \(\tilde P^{n-1}\) to \(W_{\cdot j}\) gives
\[
\tilde P^{n-1}W_{\cdot j}
=
\tilde P^{n-1}\rho_X f_{\sigma,j}
\approx
\rho_X Q_{(n-1)\epsilon}f_{\sigma,j}.
\]
The ambient Gaussian kernel \(K_\epsilon(\cdot,x_j)\) is a short-time approximation of the manifold heat kernel \( \calH_\epsilon(\cdot,x_j)\), and hence, formally,
\[
Q_{(n-1)\epsilon}f_{\sigma,j}
\approx
Q_{(n-1)\epsilon} \calH_\epsilon(\cdot,x_j)
=
\calH_{n\epsilon}(\cdot,x_j).
\]
This motivates the asymmetric estimator of $\calH_t(x_i, x_j)$ on the $N$ points as
\begin{equation}\label{eq:def-hatKn}
\hat K_n
:=
\tilde P^{n-1}W
=
(D_s^{-1}P^T)^{n-1}W.
\end{equation}

However, Theorem \ref{thm:pC3} does not directly provide an entrywise convergence rate for 
\(\hat K_n \). The theorem applies to fixed test functions \(f\) with bounded regularity norms, whereas here the test functions \(f_{\sigma,j}= K_\epsilon(\cdot,x_j)\) depend on \(\sigma\) and become increasingly localized as \(\sigma \to0 \). 
The function norms of $f_{\sigma,j}$ grow polynomially in \(\sigma^{-1}\), so a direct application of our techniques gives bounds that are too crude for pointwise heat-kernel estimation, particularly in dimensions \(d\ge2\). 
Establishing sharp entrywise convergence rates for heat kernel estimation is left to future work.

Since the population heat kernel is symmetric, we also consider the symmetrized estimator
\begin{equation}\label{eq:def-hatHn}
\hat H_n  = (WD^{-1} D_s^{-1})^{n-1}   W.
\end{equation}
Because \(D^{-1}D_s^{-1}\) is diagonal and \(W=W^T\), this matrix is symmetric.
We numerically evaluate the validity of the proposed estimator in Section \ref{sec:experiment}.

\section{Numerical experiments}\label{sec:experiment}

We numerically compute the estimator of $Q_t f$ and the manifold heat kernel $\calH_t$ using simulated data.
Codes are available at \url{https://github.com/xycheng/heat_semigroup}.

\paragraph{$Q_t f $ from discontinuous $f$}

\begin{figure}[t]
\hspace{-10pt}
\begin{subfigure}{0.25\linewidth}
\hspace{-8pt}
\includegraphics[width=1.05\linewidth]{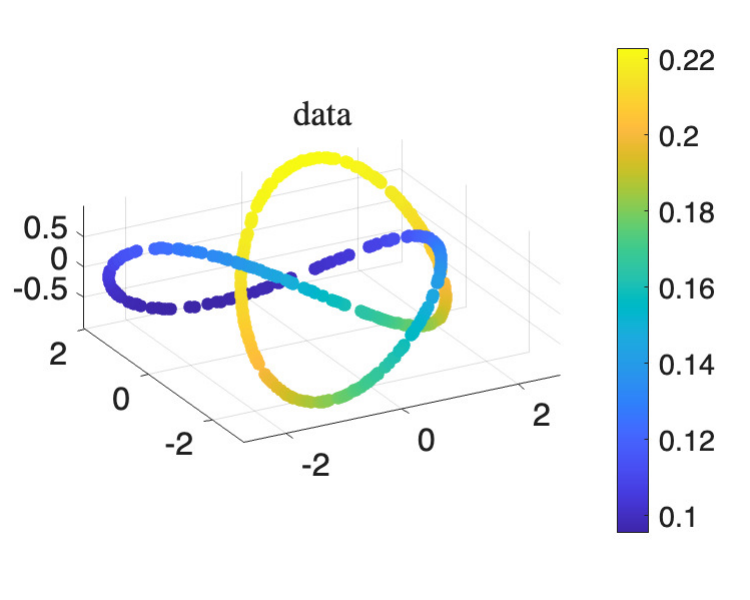}
\subcaption{}
\end{subfigure}
\begin{subfigure}{0.25\linewidth}
\hspace{-8pt}
\includegraphics[width=\linewidth]{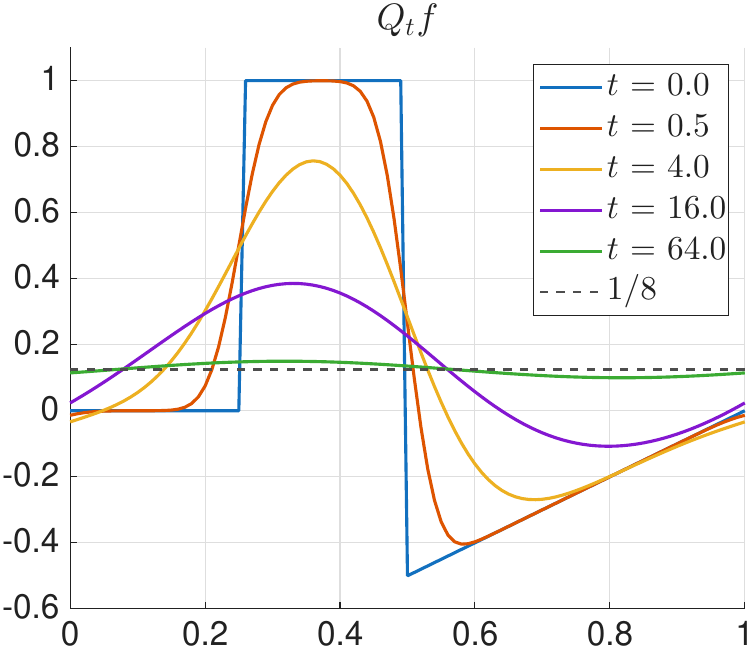}
\subcaption{}
\end{subfigure}
\begin{subfigure}{0.25\linewidth}
\hspace{-8pt}
\includegraphics[width=\linewidth]{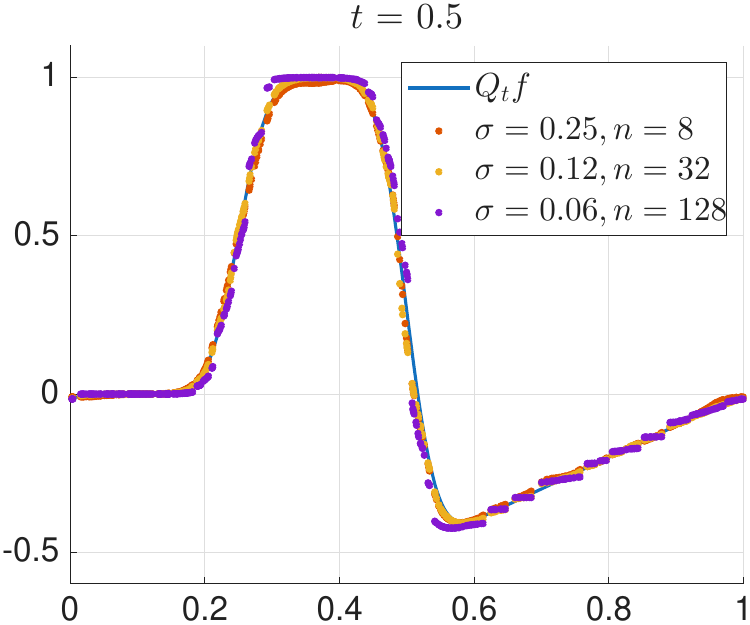}
\subcaption{}
\end{subfigure}
\begin{subfigure}{0.25\linewidth}
\hspace{-8pt}
\includegraphics[width=\linewidth]{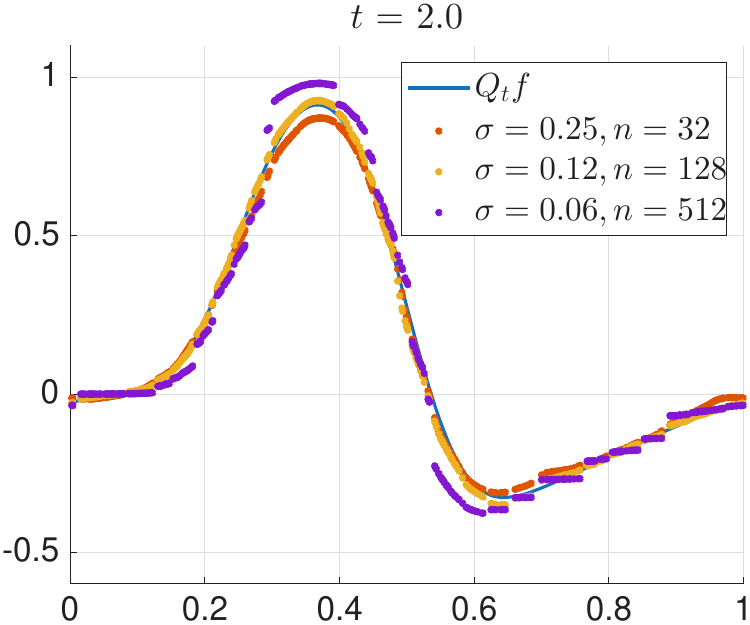}
\subcaption{}
\end{subfigure}
\caption{
(a) Data samples of 1D curve embedded in $\R^3$, colored by sampled density $p$. $N = 500$.
(b) True solutions of $Q_t f$ from an initial value $f$ that has discontinuity. 
(c)(d) Comparison of the estimated $Q_t f $ with the ground truth, at different values of $\sigma$ (and $n$).
}
\label{fig:S1-Qtf}
\end{figure}

The manifold $\calM$ is a closed 1D curve embedded in $\R^3$, sampled at $N=500$ data points from a non-uniform density $p$. 
Figure \ref{fig:S1-Qtf}(a) depicts the data samples colored by the sampling density $p$ evaluated at each point.
The initial function $f$, expressed in the intrinsic (arc-length) coordinate $s \in [0,1]$, is given by
\[
f(s) = \begin{cases}
0, \quad 0 \le s \le 1/4, \\
1, \quad 1/4 < s < 1/2, \\
s-1, \quad 1/2 \le s \le 1. \\
\end{cases}
\]
This $f$ has discontinuity, see (b), and thus $f \in L^\infty(\calM) \backslash C(\calM)$.  
The estimator is computed as $\tilde P^n \rho_X f$ based on Section \ref{subsec:non-uniform-p},
with diffusion time $t = n\sigma^2$ fixed across three $(\sigma,n)$ pairs. 
Plots (c) and (d) show the results for two values of $t$.

The analytical solution of $Q_t f$ can be computed by Fourier series on $S^1$.
As $t$ increases, the solution evolves smoothly from the discontinuous initial $f$ toward the weighted mean $\int_\calM f p dV = 1/8 $ (dashed line in b).
Across the kernel bandwidth choices $\sigma \in \{1/4, 1/8, 1/16\}$ (with $n = t/\sigma^2$), the estimator closely matches the true solution. 
When $\sigma$ is small, the graph approaches the connectivity threshold, leading to local irregularities due to insufficient connectivity. 
In addition, smaller $\sigma$ leads to higher variance across runs (not shown here, as the figure presents a single realization). 
When $t$ increases from $0.5$ to $2.0$, the result with larger $\sigma$ ($\sigma = 0.25$) exhibits more visible bias error.
This is in agreement with the rate bound derived in our theory.

\paragraph{Heat kernel estimator}

\begin{figure}[t]
\centering
\begin{subfigure}{0.275\linewidth}
\includegraphics[width=\linewidth]{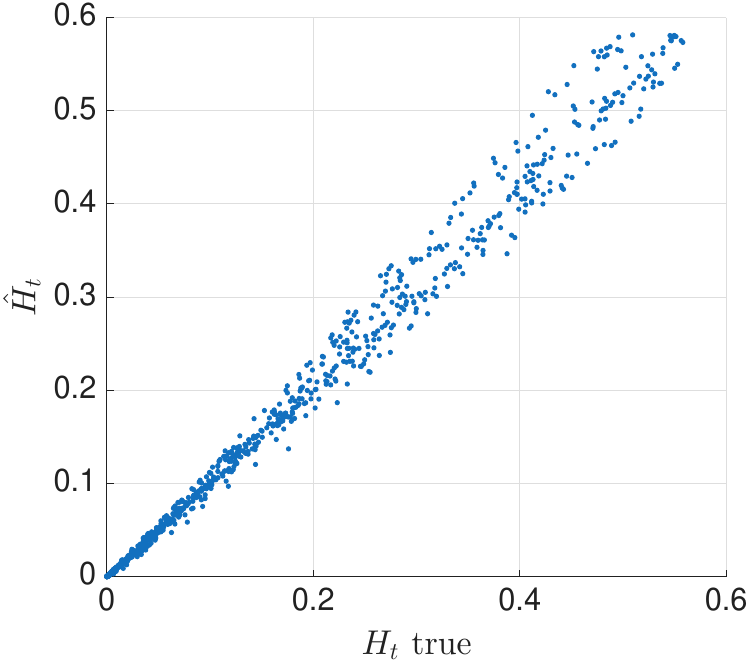}
\subcaption{}
\end{subfigure}
\begin{subfigure}{0.3\linewidth}
\includegraphics[width=\linewidth]{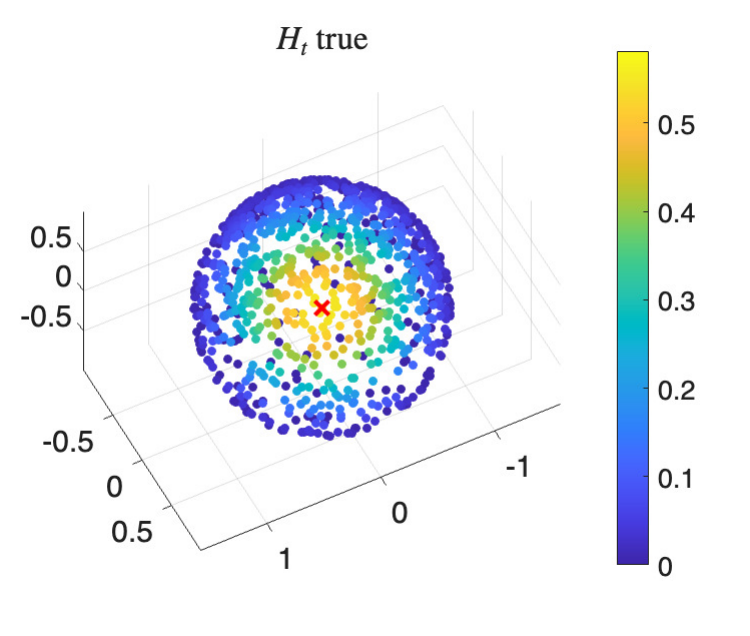}
\subcaption{}
\end{subfigure}
\begin{subfigure}{0.3\linewidth}
\includegraphics[width=\linewidth]{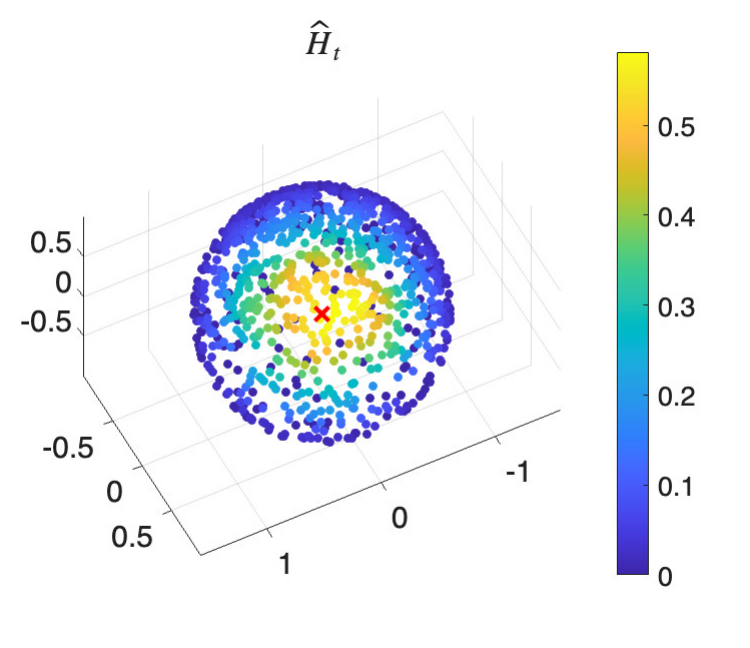}
\subcaption{}
\end{subfigure}
\caption{
Heat kernel estimator on $S^2$ using  $N = 1000$ data points sampled non-uniformly.
The estimator $\hat H_n$ is as in Section \ref{subsec:heat-kernel-estimator}.
}
\label{fig:S2-Ht}
\end{figure}

We sample $N = 1000$ data points on the unit sphere $S^2$ embedded in $\R^3$ from a density $p \propto e^{c x_3 } $, with $c= 1.5$.
The true manifold heat kernel $\calH_t$ is computed by spherical harmonic series.
We set $t = 0.15$ and compute the estimator $\hat H_n$ by equation \eqref{eq:def-hatHn}, where $\sigma = 0.1$ and $n =15$.
The results are shown in Figure \ref{fig:S2-Ht},
where (b) and (c) compare $\rho_X \calH_t(\cdot, x_0)$ 
and the corresponding row of $\hat H_n$ at a single locations $x_0$.
The vector $\infty$-norm error  and 2-norm error are \texttt{9.62e-02}
and \texttt{1.85e-02} respectively, for this case. 

\section*{Acknowledgments}

XC was partially supported by NSF DMS-2237842,
DMS-2031849,
and the Simons Foundation MPS-MODL-00814643.
NW was partially supported by the Simons Foundation MPS-TSM-00002707.

\bibliographystyle{plain}
\bibliography{kernel.bib}

@article{kovachki2023neural,
  title={Neural operator: Learning maps between function spaces with applications to pdes},
  author={Kovachki, Nikola and Li, Zongyi and Liu, Burigede and Azizzadenesheli, Kamyar and Bhattacharya, Kaushik and Stuart, Andrew and Anandkumar, Anima},
  journal={Journal of Machine Learning Research},
  volume={24},
  number={89},
  pages={1--97},
  year={2023}
}

@article{li2021fourier,
  title={Fourier neural operator for parametric partial differential equations},
  author={Li, Zongyi and Kovachki, Nikola and Azizzadenesheli, Kamyar and Liu, Burigede and Bhattacharya, Kaushik and Stuart, Andrew and Anandkumar, Anima},
  journal={International Conference on Learning Representations},
  year={2021}
}

@article{calder2022lipschitz,
  title={Lipschitz regularity of graph {Laplacians} on random data clouds},
  author={Calder, Jeff and Garcia Trillos, Nicolas and Lewicka, Marta},
  journal={SIAM Journal on Mathematical Analysis},
  volume={54},
  number={1},
  pages={1169--1222},
  year={2022},
  publisher={SIAM}
}

@article{cheng2022convergence,
  title={Convergence of graph {Laplacian} with {kNN} self-tuned kernels},
  author={Cheng, Xiuyuan and Wu, Hau-Tieng},
  journal={Information and Inference: A Journal of the IMA},
  volume={11},
  number={3},
  pages={889--957},
  year={2022},
  publisher={Oxford University Press}
}

@article{cheng2024improved,
      title={Improved convergence rate of kNN graph Laplacians: differentiable self-tuned affinity}, 
      author={Xiuyuan Cheng and Yixuan Tan and Nan Wu},
      journal={arXiv preprint arXiv:2410.23212},
      year={2024}
}

@article{shen2022scalability,
  title={Scalability and robustness of spectral embedding: landmark diffusion is all you need},
  author={Shen, Chao and Wu, Hau-Tieng},
  journal={Information and Inference: A Journal of the IMA},
  volume={11},
  number={4},
  pages={1527--1595},
  year={2022},
  publisher={Oxford University Press}
}

@article{cheng2024bi,
  title={Bi-stochastically normalized graph Laplacian: convergence to manifold Laplacian and robustness to outlier noise},
  author={Cheng, Xiuyuan and Landa, Boris},
  journal={Information and Inference: A Journal of the IMA},
  volume={13},
  number={4},
  pages={iaae026},
  year={2024},
  publisher={Oxford University Press}
}

@article{trillos2025minimax,
  title={Minimax Rates for the Estimation of Eigenpairs of Weighted Laplace-Beltrami Operators on Manifolds},
  author={Trillos, Nicol{\'a}s Garc{\'\i}a and Li, Chenghui and Venkatraman, Raghavendra},
  journal={arXiv preprint arXiv:2506.00171},
  year={2025}
}

@article{calder2022improved,
  title={Improved spectral convergence rates for graph {Laplacians} on $\varepsilon$-graphs and {$k$-NN} graphs},
  author={Calder, Jeff and Trillos, Nicolas Garcia},
  journal={Applied and Computational Harmonic Analysis},
  volume={60},
  pages={123--175},
  year={2022},
  publisher={Elsevier}
}

@article{singer2017spectral,
  title={Spectral convergence of the connection {Laplacian} from random samples},
  author={Singer, Amit and Wu, Hau-Tieng},
  journal={Information and Inference: A Journal of the IMA},
  volume={6},
  number={1},
  pages={58--123},
  year={2017},
  publisher={Oxford University Press}
}

@article{hein2007graph,
  title={Graph {Laplacians} and their convergence on random neighborhood graphs.},
  author={Hein, Matthias and Audibert, Jean-Yves and Luxburg, Ulrike von},
  journal={Journal of Machine Learning Research},
  volume={8},
  number={6},
  year={2007}
}

@article{belkin2008towards,
  title={Towards a theoretical foundation for {Laplacian}-based manifold methods},
  author={Belkin, Mikhail and Niyogi, Partha},
  journal={Journal of Computer and System Sciences},
  volume={74},
  number={8},
  pages={1289--1308},
  year={2008},
  publisher={Elsevier}
}

@book{lunardi2012analytic,
  title={Analytic semigroups and optimal regularity in parabolic problems},
  author={Lunardi, Alessandra},
  year={2012},
  publisher={Springer Science \& Business Media}
}

@article{hsu1999estimates,
  title={Estimates of derivatives of the heat kernel on a compact Riemannian manifold},
  author={Hsu, Elton},
  journal={Proceedings of the american mathematical society},
  volume={127},
  number={12},
  pages={3739--3744},
  year={1999}
}

@article{tang2026adaptive,
  title={Adaptive Bayesian regression on data with low intrinsic dimensionality},
  author={Tang, Tao and Wu, Nan and Cheng, Xiuyuan and Dunson, David},
  journal={The Annals of Statistics},
  volume={54},
  number={2},
  pages={1080--1099},
  year={2026},
  publisher={Institute of Mathematical Statistics}
}

@article{cheng2022eigen,
  title={Eigen-convergence of Gaussian kernelized graph Laplacian by manifold heat interpolation},
  author={Cheng, Xiuyuan and Wu, Nan},
  journal={Applied and Computational Harmonic Analysis},
  volume={61},
  pages={132--190},
  year={2022},
  publisher={Elsevier}
}

@article{grigor1997gaussian,
  title={Gaussian upper bounds for the heat kernel on arbitrary manifolds},
  author={Grigor'yan, Alexander},
  journal={Journal of Differential Geometry},
  year={1997},
  volume={45},
  pages={33-52}
}

@book{rosenberg1997laplacian,
  title={The {Laplacian} on a {Riemannian} manifold: An introduction to analysis on manifolds},
  author={Rosenberg, Steven},
  number={31},
  year={1997},
  publisher={Cambridge University Press}
}

@article{dunson2021spectral, 
  title = {Spectral convergence of graph {Laplacian} and heat kernel reconstruction in {$L^\infty$} from random samples},
  author={Dunson, David B and Wu, Hau-Tieng and Wu, Nan},
  journal = {Applied and Computational Harmonic Analysis},
  volume = {55},
  pages = {282-336},
  year = {2021}
}

@article{marshall2019manifold,
  title={Manifold learning with bi-stochastic kernels},
  author={Marshall, Nicholas F and Coifman, Ronald R},
  journal={IMA Journal of Applied Mathematics},
  volume={84},
  number={3},
  pages={455--482},
  year={2019},
  publisher={Oxford University Press}
}

@article{wormell2021spectral,
title = {Spectral convergence of diffusion maps: {Improved} error bounds and an alternative normalization},
author = {Wormell, Caroline L. and Reich, Sebastian},
journal = {SIAM Journal on Numerical Analysis},
volume = {59},
number = {3},
pages = {1687-1734},
year = {2021}
}

@article{long2017landmark,
  title={Landmark diffusion maps (L-dMaps): Accelerated manifold learning out-of-sample extension},
  author={Long, Andrew W and Ferguson, Andrew L},
  journal={Applied and Computational Harmonic Analysis},
  year={2017},
  publisher={Elsevier}
}

@article{berry2016local,
  title={Local kernels and the geometric structure of data},
  author={Berry, Tyrus and Sauer, Timothy},
  journal={Applied and Computational Harmonic Analysis},
  volume={40},
  number={3},
  pages={439--469},
  year={2016},
  publisher={Elsevier}
}

@article{bermanis2016measure,
  title={Measure-based diffusion grid construction and high-dimensional data discretization},
  author={Bermanis, Amit and Salhov, Moshe and Wolf, Guy and Averbuch, Amir},
  journal={Applied and Computational Harmonic Analysis},
  volume={40},
  number={2},
  pages={207--228},
  year={2016},
  publisher={Elsevier}
}

@article{singer2009detecting,
  title={Detecting intrinsic slow variables in stochastic dynamical systems by anisotropic diffusion maps},
  author={Singer, Amit and Erban, Radek and Kevrekidis, Ioannis G and Coifman, Ronald R},
  journal={Proceedings of the National Academy of Sciences},
  volume={106},
  number={38},
  pages={16090--16095},
  year={2009},
  publisher={National Acad Sciences}
}

@article{talmon2013empirical,
  title={Empirical intrinsic geometry for nonlinear modeling and time series filtering},
  author={Talmon, Ronen and Coifman, Ronald R},
  journal={Proceedings of the National Academy of Sciences},
  volume={110},
  number={31},
  pages={12535--12540},
  year={2013},
  publisher={National Acad Sciences}
}

@article{belkin2003laplacian,
  title={{Laplacian} eigenmaps for dimensionality reduction and data representation},
  author={Belkin, Mikhail and Niyogi, Partha},
  journal={Neural Computation},
  volume={15},
  number={6},
  pages={1373--1396},
  year={2003},
  publisher={MIT Press}
}

@article{trillos2020error,
  title={Error estimates for spectral convergence of the graph {Laplacian} on random geometric graphs toward the {Laplace--Beltrami} operator},
  author={Trillos, Nicol{\'a}s Garc{\'\i}a and Gerlach, Moritz and Hein, Matthias and Slep{\v{c}}ev, Dejan},
  journal={Foundations of Computational Mathematics},
  volume={20},
  number={4},
  pages={827--887},
  year={2020},
  publisher={Springer}
}

@article{coifman2006diffusion,
  title={Diffusion maps},
  author={Coifman, Ronald R and Lafon, St{\'e}phane},
  journal={Applied and Computational Harmonic Analysis},
  volume={21},
  number={1},
  pages={5--30},
  year={2006},
  publisher={Elsevier}
}

@article{singer2006graph,
  title={From graph to manifold {Laplacian}: The convergence rate},
  author={Singer, Amit},
  journal={Applied and Computational Harmonic Analysis},
  volume={21},
  number={1},
  pages={128--134},
  year={2006},
  publisher={Elsevier}
}

@inproceedings{ting2010analysis,
  title={An analysis of the convergence of graph {Laplacians}},
  author={Ting, Daniel and Huang, Ling and Jordan, Michael I},
  booktitle={Proceedings of the 27th International Conference on International Conference on Machine Learning},
  pages={1079--1086},
  year={2010}
}

@article{hormander1968spectral,
  title={The spectral function of an elliptic operator},
  author={H{\"o}rmander, Lars},
  journal={Acta mathematica},
  volume={121},
  number={1},
  pages={193--218},
  year={1968},
  publisher={Springer}
}

@article{donnelly2006eigenfunctions,
  title={Eigenfunctions of the {Laplacian} on compact {Riemannian} manifolds},
  author={Donnelly, Harold},
  journal={Asian Journal of Mathematics},
  volume={10},
  number={1},
  pages={115--126},
  year={2006},
  publisher={International Press of Boston}
}

@article{hassannezhad2016eigenvalue,
  title={Eigenvalue inequalities on {Riemannian} manifolds with a lower {Ricci} curvature bound},
  author={Hassannezhad, Asma and Kokarev, Gerasim and Polterovich, Iosif},
  journal={Journal of Spectral Theory},
  volume={6},
  number={4},
  pages={807--835},
  year={2016}
}

@book{gilbarg1998elliptic,
  title={Elliptic partial differential equations of second order},
  author={Gilbarg, David and Trudinger, Neil S},
  volume={2},
  number={1},
  year={1998},
  publisher={Springer}
}

\appendix
\setcounter{table}{0}
\setcounter{figure}{0}
\renewcommand{\thetable}{A.\arabic{table}}
\renewcommand{\thefigure}{A.\arabic{figure}}
\renewcommand{\thelemma}{A.\arabic{lemma}}

\setcounter{assumption}{0} \renewcommand{\theassumption}{A.\arabic{assumption}}

\section{Proofs}

\section{Supporting lemmas}\label{app:lemmas}

\subsection{ Heat kernel parametrix and decay}

The following lemma is based on Chapter 3 of \cite{rosenberg1997laplacian}
and was reproduced as Theorem 2.1 in \cite{cheng2022eigen}.
We largely follow the statement therein. 

\begin{lemma}
\label{lemma:Heat-short-time}
Suppose $\calM$ is as in Assumption \ref{assump:M}, and $m > d/2+2 $ is a positive integer.
Let $G_t(x,y)$ be defined as in \eqref{eq:def-Geps}.
There are  positive constants $t_0 < 1$, $\delta_0 < \xi$ the injective radius of $\calM$, 
both depending on $\calM$, and 
\begin{itemize}
\item[(i)] 
There are positive constants $C_1$, $C_2$  which depending on $\calM$, 
 and  $u_0, \cdots, u_m$ $\in C^{\infty}(\calM  \times \calM )$, such that, 
 when $ t < t_0$,  $\forall x \in \calM$, 
\begin{equation}\label{eq:parametrix-m}
\Big| \calH_t( x,y) - G_t(x,y) \Big( \sum_{l=0}^m t^l u_l(x,y) \Big) \Big| \le C_2 t^{m-d/2+1}, 
\quad
\forall y \in \calM, \, d_\calM( y,x) < \delta_0;
\end{equation}
In addition, using normal coordinates at $x$ to write $y=\exp_x(s\theta)$, where $s = d_\calM(x,y)$ and $\theta \in S^{d-1} \subset T_x \calM$, 
\[
u_0(x,y) 
    = 1+ \frac{1}{12} \texttt{Ric}_{x}(\theta,\theta) s^2
    + r_U(s), 
    \quad |r_U(s)| \le C_1 s^3,
\]
and the bound is uniform in $x$ and $\theta$.

\item[(ii)]
There is positive constant $C_3$ depending on $\calM$ such that, when $ t < t_0$, 
\begin{equation}\label{eq:H-decay}
\calH_t( x,y ) \le  C_3 t^{-d/2} e^{- \frac{ d_\calM( x,y)^2}{ 5 t}},
\quad 
\forall x, y \in \calM.
\end{equation}
\end{itemize}
\end{lemma}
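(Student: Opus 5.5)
This lemma is the standard heat-kernel parametrix of \cite[Chapter 3]{rosenberg1997laplacian}, reproduced as Theorem 2.1 of \cite{cheng2022eigen}, so the plan is to rebuild the construction rather than anything new. Part (i) comes from the Minakshisundaram--Pleijel construction. Part (ii) comes from a Gaussian upper bound, for which I would combine the parametrix near the diagonal with Duhamel/Levi iteration, or cite \cite{grigor1997gaussian}, away from it.

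For (i), I would work in normal coordinates at $x$ and seek a formal solution $G_t(x,y)\sum_{l\ge0} t^l u_l(x,y)$ of $(\partial_t-\Delta_y)(\cdot)=0$. Substituting and using $\Delta_y$ of $d_\calM(x,y)^2$ together with the volume density $\theta(x,y)=\sqrt{\det g_{ij}}$ in normal coordinates gives the transport equations
\[
s\,\partial_s u_0+\tfrac{s}{2}\theta^{-1}\partial_s\theta\, u_0=0, \qquad s\,\partial_s u_l + (l+\tfrac{s}{2}\theta^{-1}\partial_s\theta) u_l=\Delta_y u_{l-1},
\]
normalized by $u_0(x,x)=1$.

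These equations have smooth solutions: $u_0=\theta^{-1/2}$, and each $u_l$ is a radial integral of $\Delta u_{l-1}$. Cutting off by a bump function supported in $d_\calM<\delta_0$ gives functions $u_l\in C^\infty(\calM\times\calM)$. The truncated parametrix $H^{(m)}$ then satisfies $(\partial_t-\Delta)H^{(m)} = G_t\, t^m \Delta u_m$ plus cutoff terms. The cutoff terms are exponentially small, of size $O(e^{-c/t})$.

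The error is corrected by Duhamel's formula $\calH_t - H^{(m)}_t = \int_0^t \int \calH_{t-\tau}(x,z)\,R_\tau(z,y)\,dz\,d\tau$. Using $\|R_\tau\|_\infty\lesssim \tau^{m-d/2}$ and the fact that $\calH$ is a sub-probability kernel, this gives the bound $C_2 t^{m-d/2+1}$. The stated expansion of $u_0$ follows from the normal-coordinate expansion $\theta = 1-\tfrac16 \texttt{Ric}_x(\theta,\theta)s^2+O(s^3)$. Indeed, $\theta^{-1/2}=1+\tfrac1{12}\texttt{Ric}\,s^2+O(s^3)$, uniformly by compactness.

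For (ii), the main obstacle is the global Gaussian envelope with the constant $1/5$, which is strictly less than $1/4$. Near the diagonal, (i) gives $\calH_t\le G_t(\|u\|_\infty+\dots)+C_2t$. Since $m>d/2+2$, the term $C_2 t$ is dominated by $t^{-d/2}e^{-d^2/(5t)}$ when $d_\calM(x,y)<\delta_0$ and $t<t_0$ is small.

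Away from the diagonal, $d_\calM\ge\delta_0$, I would invoke the Grigor'yan/Li--Yau upper bound $\calH_t(x,y)\le C t^{-d/2}e^{-d^2/((4+\eta) t)}$ for compact manifolds, with $\eta=1$. Shrinking $t_0$ if necessary and taking the maximum of the two constants then yields $C_3$.
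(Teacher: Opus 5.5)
Part (i) of your plan is sound and follows the route the paper relies on. The paper gives no proof of its own; it only cites Chapter~3 of \cite{rosenberg1997laplacian} and Theorem~2.1 of \cite{cheng2022eigen}. Your Duhamel comparison of the cut-off parametrix against the existing positive, mass-one kernel $\calH_t$ uses only $\sup|R_\tau|\lesssim\tau^{m-d/2}$ and $\int_\calM\calH_s(y,z)\,dV(z)=1$, and it correctly gives $C_2t^{m-d/2+1}$. Likewise $u_0=\theta^{-1/2}$ gives the Ricci term.

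The gap is in (ii). Your near-diagonal step asserts that for $d_\calM(x,y)<\delta_0$ the polynomial remainder $C_2t^{m-d/2+1}$ (or $C_2t$) is dominated by $t^{-d/2}e^{-d_\calM(x,y)^2/(5t)}$ once $t<t_0$ is small. This is false. At, say, $d_\calM(x,y)=\delta_0/2$ the envelope is $t^{-d/2}e^{-\delta_0^2/(20t)}$, which tends to zero faster than every power of $t$. So the ratio of remainder to envelope is unbounded as $t\to0$; shrinking $t_0$ makes it worse, and a larger $m$ does not help. The absorption works only in the shrinking zone $d_\calM(x,y)^2\le 5(m+1)\,t\log(1/t)$, where $e^{-d_\calM(x,y)^2/(5t)}\ge t^{m+1}$. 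The intermediate range $\sqrt{t\log(1/t)}\lesssim d_\calM(x,y)<\delta_0$ is therefore covered by neither half of your split. A parametrix with a polynomial remainder can never certify Gaussian decay at distances of order one.

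The repair is not to split at all. Use (i) only on the diagonal. There, since $G_t(x,x)=(4\pi t)^{-d/2}$ and $t^{m-d/2+1}\le t^{-d/2}$ for $t<1$, it gives $\sup_x\calH_t(x,x)\le Ct^{-d/2}$ for $t<t_0$. Feed this on-diagonal bound into Grigor'yan's theorem \cite{grigor1997gaussian}. It upgrades the bound to $\calH_t(x,y)\le C\,t^{-d/2}e^{-d_\calM(x,y)^2/(Dt)}$ for \emph{all} $x,y\in\calM$ and $t<t_0$, with any constant $D>4$; take $D=5$. Alternatively, use Li--Yau under $\texttt{Ric}\ge -K$ together with $\Vol(B_{\sqrt t}(x))\gtrsim t^{d/2}$ for $t<1$ on compact $\calM$; this gives the same global estimate. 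Either way, the Gaussian bound you planned to invoke only for $d_\calM\ge\delta_0$ is already global, so you should simply drop the near-diagonal argument.
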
 

The verification of \eqref{eq:parametrix-m} and \eqref{eq:H-decay} was carried out in Theorem 2.1 in \cite{cheng2022eigen}.
The expression of $u_0(x,y)$ follows standard calculation, see e.g. Chapter 3.2 of \cite{rosenberg1997laplacian}.

\subsection{Heat kernel derivative bounds}\label{subsec:Ht-der-bounds}

The goal of this section is to prove Lemma \ref{lemma:long-t-grad-Ht}. 
In this section, the constants $c_1, c_2, \cdots$ are local and stand for manifold-dependent constants,
and $g$ stands for the  Riemannian metric tensor.
Recall that $\Delta$  is Laplace-Beltrami operator for manifold functions.
Let $\{ \lambda_l, \phi_l \}_{l=0}^\infty$ be the eigenpairs of $-\Delta$ with $\phi_l$ normalized in $L^2(\mathcal{M})$.  
For manifold tensors, $\Delta$  is the rough Laplacian: 
 $\Delta T = \sum g^{ij}\nabla_i \nabla_j T$ maps a $k$ tensor to a $k$ tensor.

We introduce some notations about tensor innerproduct and norms. 
Suppose that $S$ and  $S'$ are $k$ tensor fields  on $\mathcal{M}$. Then, 
\[
(S,S')=\sum_{i_1, \cdots, i_k, j_1, \cdots, j_k = 1}^d g^{i_1 j_1} \cdots g^{i_kj_k}S_{i_1, \cdots, i_k} S'_{j_1, \cdots, j_k}
\]
 and $|S|^2=(S,S)$. Note that in normal coordinates at $x \in \mathcal{M}$, $g^{ij}(x)=\delta_{ij}$. Then, we have $|S|^2(x)=\sum_{i_1, \cdots, i_k= 1}^d S_{i_1, \cdots, i_k}(x)^2$.
For example, when $k=0$, $| T(x) |$ recovers the absolute value.
When $k=1$, $| T(x)| = (\sum_j T_j(x)^2)^{1/2}$ equals the 2-norm of the vector.
When $k=2$, $| T(x)| = (\sum_{j_1, j_2=1}^d T_{j_1, j_2}(x)^2 )^{1/2}$ equals the Frobenius norm of the matrix. 
We also have  $|T(x) | \ge \| T(x)\|_{op}$.

At last, the Bochner-Weitzenbock equality has that $\Delta|T|^2=2|\nabla T|^2+2(T, \Delta T)$.

\begin{lemma}\label{lemma:long-t-grad-Ht}
Under Assumption \ref{assump:M},
for any $t_2>0$, there exists a constant $C(t_2)>0$  such that $\forall t \geq t_2$,
\begin{align*} 
|\nabla^k_x \mathcal{H}_t(x,y) | \leq C(t_2) e^{- \lambda_1 t /2},
\quad   \forall x, y \in \mathcal{M}, 
\quad k = 1,2,
\end{align*}
where $\lambda_1$ is the first non-zero eigenvalue of $-\Delta$.
The constant $C(t_2)$ depends on $t_2$ and also manifold geometry ($d$, the diameter, volume, injectivity radius, bounds of the Ricci curvature and its covariant derivative,  and the sectional curvature bounds of $\mathcal{M}$).
\end{lemma}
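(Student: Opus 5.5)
We use the spectral expansion $\calH_t(x,y)=\sum_{l\ge 0} e^{-\lambda_l t}\phi_l(x)\phi_l(y)$. The $l=0$ term is the constant $\Vol(\calM)^{-1}$, so it is annihilated by $\nabla_x$. Hence for $k=1,2$,
\[
|\nabla^k_x \calH_t(x,y)| \le \sum_{l\ge 1} e^{-\lambda_l t}\,|\nabla^k\phi_l(x)|\,|\phi_l(y)|,
\]
provided the series converges absolutely and uniformly; this will follow from the bounds below. For $t\ge t_2$ we factor $e^{-\lambda_l t}\le e^{-\lambda_1 t/2}\,e^{-\lambda_l t_2/2}$, using $\lambda_l\ge\lambda_1$ and $t/2\ge t_2/2$. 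It then suffices to show that
\[
S(t_2):=\sum_{l\ge1} e^{-\lambda_l t_2/2}\,\|\nabla^k\phi_l\|_\infty\|\phi_l\|_\infty<\infty ,
\]
and to set $C(t_2)=S(t_2)$.

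\textbf{Polynomial bounds on eigenfunctions.} Next we need bounds of the form
\[
\|\phi_l\|_\infty\le c_1\lambda_l^{(d-1)/4}, \qquad \|\nabla\phi_l\|_\infty\le c_2\lambda_l^{(d+1)/4}, \qquad \|\nabla^2\phi_l\|_\infty\le c_3\lambda_l^{a}
\]
for some power $a$, say $a=(d+3)/4$ or any polynomial power. The first is H\"ormander's sup-norm bound \cite{hormander1968spectral}. For the derivative bounds we would follow \cite{donnelly2006eigenfunctions} or argue directly with the Bochner--Weitzenb\"ock identity. Concretely, set $T=\nabla\phi_l$ (resp.\ $\nabla^2\phi_l$). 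Commuting $\Delta$ with $\nabla$ yields curvature terms, giving $\Delta T=-\lambda_l T+\mathrm{Ric}\ast T$ (plus $\nabla\mathrm{Ric}\ast\nabla\phi_l$ for $k=2$). Then
\[
\Delta|T|^2\ge -c(\lambda_l+1)\,|T|^2-(\text{lower-order terms}).
\]
A Moser iteration (local mean-value inequality, \cite{gilbarg1998elliptic}) on balls of radius $\sim\lambda_l^{-1/2}$ converts this into an $L^\infty$ bound by $L^2$ norms. The $L^2$ norms are controlled by integration by parts: $\|\nabla\phi_l\|_2^2=\lambda_l$, and $\|\nabla^2\phi_l\|_2^2\le \lambda_l^2+c\lambda_l$ via Bochner. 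This is how the dependence on the curvature bounds and the injectivity radius enters.

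\textbf{Weyl growth.} We also need $\lambda_l\ge c_4 l^{2/d}$ for $l\ge1$, e.g.\ from \cite{hassannezhad2016eigenvalue} or Weyl's law with explicit lower-Ricci-bound constants. With these inputs, $S(t_2)\le\sum_l c\,l^{b}e^{-c_4 l^{2/d}t_2/2}<\infty$ for some power $b$, with a constant depending only on $t_2$ and the geometry. The same estimate justifies differentiating the series term by term, since the differentiated series converges uniformly.

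\textbf{Main obstacle.} The hard step is obtaining the polynomial sup-norm bounds on $\nabla^2\phi_l$ with explicit geometric dependence. The commutator terms involve $\nabla\mathrm{Ric}$, and the Moser iteration must be run on tensors rather than functions. We would first try the subsolution approach via Kato's inequality: $|T|$ satisfies $\Delta|T|\ge-c(\lambda_l+1)|T|-c|\nabla\phi_l|$, which reduces the problem to a scalar mean-value inequality at scale $\lambda_l^{-1/2}$. Any polynomial power of $\lambda_l$ suffices, because the exponential weight absorbs it.
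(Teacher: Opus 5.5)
Your proposal is correct. Its outer skeleton coincides with the paper's: the spectral expansion, discarding the constant mode, the splitting $e^{-\lambda_l t}\le e^{-\lambda_1 t/2}e^{-\lambda_l t_2/2}$, H\"ormander's sup-norm bound, Weyl-type eigenvalue growth, and summation. You also remark that uniform convergence of the differentiated series justifies term-by-term differentiation, which the paper leaves implicit.

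The genuine difference is how you bound $\nabla\phi_l$ and $\nabla^2\phi_l$. The paper uses no local elliptic theory.
\begin{itemize}
\item It applies Bochner's formula to the auxiliary function $u=|\nabla\phi_l|^2+B\lambda_l\phi_l^2$, and the Weitzenb\"ock formula (with its curvature and $\nabla\mathrm{Ric}$ commutator terms) to $u=|\nabla^2\phi_l|^2+B\lambda_l|\nabla\phi_l|^2$.
\item It chooses $B$ of the form $2+c/\lambda_l$ so that the bad terms are absorbed, and reads off $\Delta u(x_0)\le 0$ at a global maximum.
\item This gives, using only compactness and with explicit constants, the relative bounds $\|\nabla\phi_l\|_\infty\le c_4\sqrt{\lambda_l}\,\|\phi_l\|_\infty$ and $\sup|\nabla^2\phi_l|\le c_5\sqrt{\lambda_l}\,\sup|\nabla\phi_l|$.
\item Combined with H\"ormander, these give exactly the powers $\lambda_l^{(d+1)/4}$ and $\lambda_l^{(d+3)/4}$ you wrote down.
\end{itemize}

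Your Moser/mean-value route at scale $\min\{\lambda_l^{-1/2},r_0\}$ also works. It rests on:
\begin{itemize}
\item the subsolution inequality for $|T|^2$, or for $|T|$ via Kato, with the inhomogeneous $|\nabla\phi_l|$ term when $k=2$;
\item the integrated Bochner bounds $\|\nabla\phi_l\|_2^2=\lambda_l$ and $\|\nabla^2\phi_l\|_2^2\le\lambda_l^2+K\lambda_l$.
\end{itemize}
It is more robust: the same argument applied to $\phi_l^2$ even replaces H\"ormander by the weaker bound $\lambda_l^{d/4}$. The cost is that it needs uniform local Sobolev/mean-value constants on small balls, which is where the injectivity radius and curvature bounds enter for you. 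With global $L^2$ inputs it also yields somewhat worse exponents than you stated, e.g.\ $\lambda_l^{(d+2)/4}$ rather than $\lambda_l^{(d+1)/4}$ for the gradient. As you note, any polynomial power is absorbed by the exponential, so this does not matter.

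One small bookkeeping point. Writing $S(t_2)\le\sum_l c\,l^b e^{-c_4 l^{2/d}t_2/2}$ implicitly uses the upper Weyl bound $\lambda_l\le c\,l^{2/d}$ as well; the paper takes both sides from Hassannezhad--Kokarev--Polterovich. Alternatively, use only the lower bound and absorb the polynomial into half of the exponential, $\lambda_l^a e^{-\lambda_l t_2/4}\le C(a,t_2)$; the remaining factor $e^{-\lambda_l t_2/4}$ is then summable.
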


\begin{lemma}[\cite{hormander1968spectral,donnelly2006eigenfunctions}]\label{lemma hormander}
For any integer $l>0$, we have the following bound:  
\begin{align}
\|\phi_l\|_\infty \leq c_1 \lambda_l^{\frac{d-1}{4}} \|\phi_l\|_2= c_1 \lambda_l^{\frac{d-1}{4}}\,,   \nonumber 
\end{align}
where $c_1$ is a constant depending on the injectivity radius and the sectional curvature bounds of the manifold $\mathcal{M}$.
\end{lemma}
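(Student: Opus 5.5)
The statement to prove is the Hörmander-type sup-norm bound $\|\phi_l\|_\infty \le c_1 \lambda_l^{(d-1)/4}$. The plan is the classical local Weyl law argument. We compare the spectral projector onto a window of width one around $\sqrt{\lambda_l}$ with the pointwise spectral function. For $\mu>0$, define
\[
e(x,x,\mu)=\sum_{\sqrt{\lambda_k}\le \mu}\phi_k(x)^2 .
\]
The key input is Hörmander's local Weyl law with sharp remainder:
\[
e(x,x,\mu)=c_d\,\mu^d+O(\mu^{d-1}),
\]
uniformly in $x\in\calM$. Here $c_d=(2\pi)^{-d}|B^d|$, and the constant in $O(\cdot)$ depends only on bounds for the injectivity radius and the curvature. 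Donnelly's version gives exactly this dependence, so we would cite it as in the lemma.

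Given this, we take $\mu=\sqrt{\lambda_l}$ and difference the Weyl law between $\mu$ and $\mu-1$. Since every summand is nonnegative,
\[
\phi_l(x)^2\le e(x,x,\mu)-e(x,x,\mu-1) = c_d\big(\mu^d-(\mu-1)^d\big)+O(\mu^{d-1}) = O(\mu^{d-1}).
\]
Taking square roots gives $|\phi_l(x)|\le c_1\mu^{(d-1)/2}=c_1\lambda_l^{(d-1)/4}$ uniformly in $x$. The normalization $\|\phi_l\|_2=1$ gives the equality in the statement. For small $\lambda_l$, i.e. $\mu<1$, we would use the cruder bound $e(x,x,1)=O(1)$; since $\lambda_l\ge\lambda_1>0$, this is absorbed into $c_1$.

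The main obstacle is the $O(\mu^{d-1})$ remainder, which carries all the content. One route follows Hörmander: build a parametrix for the half-wave group $e^{it\sqrt{-\Delta}}$ for $|t|$ smaller than the injectivity radius, and pair it with a Schwartz function $\rho$ whose Fourier transform is supported there. This gives $\sum_k\rho(\mu-\sqrt{\lambda_k})\phi_k(x)^2 = O(\mu^{d-1})$ directly, with $\rho\ge0$ and $\rho\ge c>0$ on $[-1,1]$. That already bounds $\phi_l(x)^2$ without the full Weyl law, so in fact only this smoothed estimate is needed, and we would state the proof that way. An alternative is to use heat kernel bounds, $e(x,x,\mu)\le e\,\calH_{1/\mu^2}(x,x)\lesssim\mu^d$. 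This only yields the weaker $\lambda_l^{d/4}$, which would also suffice for later summability uses but is not the stated bound. Since the paper cites \cite{hormander1968spectral,donnelly2006eigenfunctions}, we would quote the smoothed wave-kernel estimate from there, with constants tracked as in Donnelly, rather than reprove the parametrix.
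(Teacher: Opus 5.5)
Your proposal is correct and matches the paper's approach. The paper gives no proof of its own and simply cites Hörmander and Donnelly, and your argument is exactly how the $\lambda_l^{(d-1)/4}$ bound is derived in those sources: you bound $\phi_l(x)^2$ by a unit-width spectral window of the sharp local Weyl law, or more economically by the smoothed half-wave estimate $\sum_k \rho(\mu-\sqrt{\lambda_k})\phi_k(x)^2=O(\mu^{d-1})$ with $\rho\ge 0$ and $\rho(0)>0$. One caveat concerns how you absorb the range $\lambda_l<1$ through $\lambda_l\ge\lambda_1$: this makes $c_1$ depend on $\lambda_1$, hence on the diameter and a Ricci lower bound, not only on the injectivity radius and sectional curvature. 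That extra dependence is unavoidable for $d\ge 3$, as the long flat torus $\mathbb{R}^3/(L\mathbb{Z}\times\mathbb{Z}^2)$ at its first eigenvalue shows, and it is harmless here because all constants in the paper may depend on $\calM$.
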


{\begin{lemma}[\cite{hassannezhad2016eigenvalue}]\label{laplace eigenvalue lower bound}
For any integer $l \geq 0$, we have
\begin{align}
c_2 l^{2/d} \leq \lambda_l \leq c_3 l^{2/d}, \nonumber 
\end{align}
 where $c_2>0$ depends on $d$, the Ricci curvature lower bound, and the diameter of $\mathcal{M}$ and  $c_3>0$ depends on $d$, the Ricci curvature lower bound, and the volume of $\mathcal{M}$. 
\end{lemma}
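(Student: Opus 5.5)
The statement is a two-sided Weyl-type bound, and I would prove the two inequalities separately. The lower bound comes from the heat trace; the upper bound comes from min--max with disjointly supported test functions. For $l=0$ we have $\lambda_0=0$, so the statement is meant for $l\ge 1$. I will write $\kappa\ge 0$ for the constant with $\texttt{Ric}\ge -(d-1)\kappa$, and $\mathrm{diam}(\calM)$ for the diameter.

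\textbf{Lower bound.} First use the Li--Yau on-diagonal estimate: for $0<t\le \mathrm{diam}(\calM)^2$ one has $\calH_t(x,x)\le c\,\Vol(B_{\sqrt t}(x))^{-1}$, with $c$ depending on $d$ and $\kappa\,\mathrm{diam}(\calM)^2$. Then apply Bishop--Gromov relative volume comparison with the whole manifold:
\[
\Vol(B_{\sqrt t}(x))\ \ge\ c'\,\Vol(\calM)\,\big(\sqrt t/\mathrm{diam}(\calM)\big)^{d}.
\]
Integrating over $\calM$ gives the heat trace bound
\[
\sum_{j\ge 0} e^{-\lambda_j t}=\int_\calM \calH_t(x,x)\,dV(x)\le c''\,\big(\mathrm{diam}(\calM)^2/t\big)^{d/2}.
\]
Since the eigenvalues are ordered, the left side is at least $(l+1)e^{-\lambda_l t}$. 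Now fix $l$:

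\emph{Large $\lambda_l$.} If $\lambda_l\ge \mathrm{diam}(\calM)^{-2}$, choose $t=1/\lambda_l$. This gives $l+1\le e\,c''\,(\mathrm{diam}(\calM)^2\lambda_l)^{d/2}$, i.e.\ $\lambda_l\ge c_2\, l^{2/d}$.

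\emph{Small $\lambda_l$.} In the remaining range the trace bound at $t=\mathrm{diam}(\calM)^2$ gives $l\le e\,c''$, so only a bounded number of indices are left. For these, the Li--Yau / Zhong--Yang lower bound $\lambda_1\ge c(d,\kappa,\mathrm{diam}(\calM))>0$ suffices, after shrinking $c_2$.

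Every constant here depends only on $d$, the Ricci lower bound and the diameter, which is the claimed dependence for $c_2$.

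\textbf{Upper bound.} By min--max it suffices to produce $l+1$ functions with pairwise disjoint supports, each with Rayleigh quotient $\int|\nabla u|^2/\int u^2\le c_3\, l^{2/d}$. The natural choice is tent functions $u_i(y)=(r-d_\calM(y,z_i))_+$ on $l+1$ disjoint balls $B_r(z_i)$. Using volume doubling from the Ricci lower bound, comparing $\Vol(B_{r/2}(z_i))$ with $\Vol(B_r(z_i))$, each $u_i$ has Rayleigh quotient at most $C/r^2$. If the balls can be taken with $r\asymp(\Vol(\calM)/l)^{1/d}$, we get $\lambda_l\le C\,l^{2/d}$.

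The main obstacle is this last step: finding $l+1$ disjoint balls with radius of order $(\Vol(\calM)/l)^{1/d}$ while keeping the constants free of the diameter. A naive maximal $r$-packing together with Bishop--Gromov introduces diameter dependence, and volume can concentrate unevenly. To handle this I would follow Grigor'yan--Netrusov--Yau and Hassannezhad--Kokarev--Polterovich. On the metric-measure space $(\calM,d_\calM,dV)$, which is locally doubling at scales $\le 1$ with doubling constant depending on $d$ and $\kappa$, their construction gives $l+1$ disjoint annuli $A_i$. Each annulus has measure at least $c\,\Vol(\calM)/l$, and its enlarged annulus $2A_i$ is still pairwise disjoint from the others. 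Cutoff functions adapted to these annuli have Rayleigh quotients bounded by $C(d,\kappa)\,(l/\Vol(\calM))^{2/d}$, up to an additive $C\kappa$ from the scale cap $r\le 1$. This gives the upper bound with $c_3$ depending on $d$, $\kappa$ and $\Vol(\calM)$, absorbing the additive term using $l\ge 1$.
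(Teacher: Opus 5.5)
Your argument is sound, but it does not follow the paper's route. The paper gives no proof; it only cites Hassannezhad--Kokarev--Polterovich. There the lower bound is Gromov's inequality and the upper bound is Buser's inequality $\lambda_l\le C(d)\kappa+C(d)\,(l/\Vol(\calM))^{2/d}$, which yields $c_3$ once you use $l\ge 1$.

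\textbf{Lower bound.} Your heat-trace argument is a complete, self-contained alternative to citing Gromov, and it gives exactly the claimed dependence of $c_2$. It combines the Li--Yau on-diagonal bound, relative volume comparison against the whole manifold, and a first-eigenvalue bound for the boundedly many $l$ with $\lambda_l<\mathrm{diam}(\calM)^{-2}$. It also fits this paper well, which relies on heat kernel bounds throughout.

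\textbf{Upper bound.} Here you defer to the cited machinery, but the obstacle that sends you there is not real.
\begin{itemize}
\item Bishop's \emph{absolute} comparison gives $\Vol(B_r(z))\le V_{-\kappa}(r)\le A(d,\kappa)\,r^d$ for every $z$ and every $r\le 1$. No diameter enters.
\item Uneven distribution of volume is irrelevant, because only this upper bound is used. A maximal $r$-separated set $\{z_i\}$ satisfies $\calM\subset\bigcup_i B_r(z_i)$, so it has at least $\Vol(\calM)/(A r^d)$ points, and the balls $B_{r/2}(z_i)$ are pairwise disjoint.
\item The tent functions $(r/2-d_\calM(\cdot,z_i))_+$ have Rayleigh quotient at most $16\,r^{-2}\,\Vol(B_{r/2}(z_i))/\Vol(B_{r/4}(z_i))\le C(d,\kappa)\,r^{-2}$, by same-center Bishop--Gromov.
\item Taking $r=\min\{1,(\Vol(\calM)/(A(l+1)))^{1/d}\}$ then gives $\lambda_l\le C\max\{1,((l+1)/\Vol(\calM))^{2/d}\}\le c_3\,l^{2/d}$ for $l\ge 1$, with $c_3=c_3(d,\kappa,\Vol(\calM))$.
\item Capping at $r\le\kappa^{-1/2}$ instead of $r\le 1$ even recovers Buser's additive form, up to constants.
\end{itemize}

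The Grigor'yan--Netrusov--Yau annulus decomposition is what HKP need for Neumann and Dirichlet problems on domains and for general measures; here it buys nothing. If you keep it, note that ``cutoffs adapted to the annuli have Rayleigh quotient $\lesssim (l/\Vol(\calM))^{2/d}$'' does not follow from $\Vol(A_i)\gtrsim\Vol(\calM)/l$ alone. For $d\ge 2$ you additionally need three ingredients:
\begin{itemize}
\item the scale-invariant bound on $\int|\nabla u_i|^d$;
\item H\"older's inequality, $\int|\nabla u_i|^2\le\big(\int|\nabla u_i|^d\big)^{2/d}\Vol(2A_i)^{1-2/d}$;
\item a pigeonhole over about $2l$ annuli, using disjointness of the $2A_i$, to keep those with $\Vol(2A_i)\lesssim\Vol(\calM)/l$.
\end{itemize}
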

 
The next two lemmas are elliptic regularity results for $-\Delta$. When the domain of $-\Delta$ is an open set or the closure of an open set in Euclidean space, such regularity results are developed in \cite[Chapter 6]{gilbarg1998elliptic}. When the domain is an abstract manifold, we prove the following two lemmas using the Bochner-Weitzenbock's identity technique. This approach also allows us to determine explicitly the dependence of the constants appearing in the estimates.

\begin{lemma}\label{elliptic regularity 1}
For any integer $l \geq 0$, we have
\begin{align}
\|\nabla \phi_l \|_\infty \leq c_4 \sqrt{\lambda_l} \| \phi_l\|_\infty,
\end{align}
 where $c_4>0$ depends on $d$, the Ricci curvature lower bound, and the diameter of $\mathcal{M}$.
\end{lemma}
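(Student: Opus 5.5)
The plan is to use the Bochner--Weitzenb\"ock identity applied to the eigenfunction $\phi_l$ together with a maximum principle argument. This is the classical Li--Yau / Cheng-type gradient estimate for eigenfunctions. The case $l=0$ is trivial since $\phi_0$ is constant, so we assume $\lambda_l>0$.

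First, we apply the scalar Bochner formula $\tfrac12\Delta|\nabla\phi|^2 = |\nabla^2\phi|^2 + (\nabla\phi,\nabla\Delta\phi) + \mathrm{Ric}(\nabla\phi,\nabla\phi)$ with $\Delta\phi_l=-\lambda_l\phi_l$. This gives
\[
\tfrac12\Delta|\nabla\phi_l|^2 \ge |\nabla^2\phi_l|^2 - (\lambda_l + K)|\nabla\phi_l|^2,
\]
where $-K\le 0$ is a lower bound for the Ricci curvature. We also have $\tfrac12\Delta\phi_l^2 = |\nabla\phi_l|^2 - \lambda_l\phi_l^2$.

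Next, we consider the auxiliary function $F := |\nabla\phi_l|^2 + a\lambda_l \phi_l^2$ for a constant $a$ to be chosen. Then
\[
\tfrac12\Delta F \ge |\nabla^2\phi_l|^2 + (a\lambda_l - \lambda_l - K)|\nabla\phi_l|^2 - a\lambda_l^2\phi_l^2 .
\]
Since $\calM$ is compact, $F$ attains its maximum at some $x_0$, where $\Delta F\le 0$ and $\nabla F=0$.

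The main step is extracting a useful inequality at $x_0$. If $\nabla\phi_l(x_0)=0$, then $F\le a\lambda_l\|\phi_l\|_\infty^2$ and we are done. Otherwise, $\nabla F(x_0)=0$ gives $\nabla^2\phi_l(\nabla\phi_l,\cdot) = -a\lambda_l\phi_l\nabla\phi_l$. Taking the inner product with $\nabla\phi_l$ and applying Cauchy--Schwarz yields $|\nabla^2\phi_l|^2 \ge a^2\lambda_l^2\phi_l^2$. Substituting into $0\ge\tfrac12\Delta F(x_0)$ gives
\[
0 \ge (a^2-a)\lambda_l^2\phi_l^2 + \big((a-1)\lambda_l - K\big)|\nabla\phi_l|^2 .
\]
We then choose $a = 2 + K/\lambda_1$. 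With this choice both coefficients are positive, since $\lambda_l\ge\lambda_1$. This forces $\nabla\phi_l(x_0)=0$, which is the favorable case above. Hence
\[
\|\nabla\phi_l\|_\infty^2 \le \max F \le a\lambda_l\|\phi_l\|_\infty^2 ,
\]
and so $c_4=\sqrt{2+K/\lambda_1}$.

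Finally, we remove the dependence on $\lambda_1$. Lemma~\ref{laplace eigenvalue lower bound} (or the Li--Yau/Zhong--Yang bound) gives $\lambda_1 \ge c_2$, with $c_2$ depending only on $d$, the Ricci lower bound, and the diameter. Therefore $c_4$ depends only on those quantities, as claimed.

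The main obstacle is choosing $a$ so that the curvature term $-K|\nabla\phi_l|^2$ is absorbed uniformly in $l$. The gradient condition at the maximum and the lower bound on $\lambda_1$ are exactly what handle this.
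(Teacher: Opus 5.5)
Your proposal is correct and follows essentially the paper's own argument: the Bochner formula, the auxiliary function $|\nabla\phi_l|^2 + B\lambda_l\phi_l^2$, the maximum principle at a maximizer $x_0$, and $\lambda_1\ge c_2$ to make the constant uniform in $l$. The one difference is at $x_0$: you also use $\nabla F(x_0)=0$ to show $x_0$ must be a critical point of $\phi_l$. The paper instead discards $|\nabla^2\phi_l|^2\ge 0$ and takes $B=2(1+K/\lambda_l)$, which costs an extra factor $\sqrt{2}$ in $c_4$ compared with your $\sqrt{2+K/c_2}$.
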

\begin{proof}
We only need to prove the case when $l>0$ as $\phi_0$ is a constant. Suppose  that $\mathcal{M}$ has a Ricci curvature lower bound $\texttt{Ric} \geq -Kg$ with $K>0$.
By the Bochner's formula and $-\Delta \phi_l=\lambda_l \phi_l$, we have
\begin{align*}
\frac{1}{2}\Delta |\nabla \phi_l|^2=&|\nabla^2\phi_l|^2+g(\nabla\phi_l, \nabla \Delta \phi_l)+ \texttt{Ric}(\nabla\phi_l, \nabla\phi_l) \\
=&|\nabla^2\phi_l|^2- \lambda_l | \nabla \phi_l(x)|^2+ \texttt{Ric}(\nabla\phi_l, \nabla\phi_l).
\end{align*}
Hence,
\begin{align*}
|\nabla^2\phi_l|^2=& \frac{1}{2}\Delta |\nabla \phi_l|^2+ \lambda_l| \nabla \phi_l(x)|^2- \texttt{Ric}(\nabla\phi_l, \nabla\phi_l)\leq \frac{1}{2}\Delta |\nabla \phi_l|^2+ \lambda_l| \nabla \phi_l(x)|^2+K g(\nabla\phi_l, \nabla\phi_l) \nonumber \\
=& \frac{1}{2}\Delta |\nabla \phi_l|^2+ (\lambda_l+K)| \nabla \phi_l|^2.
\end{align*}
We conclude that
\begin{align}\label{proof: Bochner bound 2}
|\nabla^2\phi_l|^2-(\lambda_l+K)| \nabla \phi_l|^2 \leq \frac{1}{2}\Delta |\nabla \phi_l|^2. 
\end{align}
Moreover, by the product rule and $-\Delta \phi_l=\lambda_l \phi_l$,
\begin{align}\label{proof: Bochner bound 3}
\Delta (\phi_l^2) =2 \phi \Delta \phi_l +2 g( \nabla\phi_l , \nabla\phi_l)=-2\lambda_l \phi_l^2+2|\nabla\phi_l|^2.
\end{align}
Define $u= |\nabla \phi_l|^2+ B \lambda_l \phi_l^2$ with the constant $B$ to be chosen. Then, by \eqref{proof: Bochner bound 2} and \eqref{proof: Bochner bound 3}, we have
\begin{align*}
\Delta u= \Delta |\nabla \phi_l|^2+ B \lambda_l \Delta (\phi_l^2) \geq& 2|\nabla^2\phi_l|^2-2 (\lambda_l+K)| \nabla \phi_l|^2+ B \lambda_l (-2\lambda_l \phi_l^2+2|\nabla\phi_l|^2) \\
=&2|\nabla^2\phi_l|^2+2 (B\lambda_l-\lambda_l-K)| \nabla \phi_l|^2-2B \lambda_l^2 \phi_l^2
\end{align*}
Since $\mathcal{M}$ is compact, $u$ achieves maximum at $x_0 \in \mathcal{M}$ with $\Delta u (x_0) \leq 0$. Hence, by the above inequality,
\begin{align*}
2|\nabla^2\phi_l|^2(x_0)+2 (B\lambda_l-\lambda_l-K)| \nabla \phi_l|^2(x_0)-2B \lambda_l^2 \phi_l^2(x_0)\leq \Delta u(x_0) \leq 0.
\end{align*}
Since $2|\nabla^2\phi_l|^2(x_0) \geq 0$, we have
\begin{align}\label{proof: Bochner bound 4}
2 (B\lambda_l-\lambda_l-K)| \nabla \phi_l|^2(x_0)-2B \lambda_l^2 \phi_l^2(x_0) \leq 0.
\end{align}
Since $l>0$, $\lambda_l >0$. We specify $B>0$ so that $B\lambda_l-\lambda_l-K = \frac{B\lambda_l}{2}>0$ which is equivalent to $B =2( 1+\frac{K}{\lambda_l})$.  By \eqref{proof: Bochner bound 4}, we have $|\nabla \phi_l|^2(x_0) \leq \frac{ 2B\lambda_l^2 \phi_l^2(x_0)}{2 (B\lambda_l-\lambda_l-K)} \leq 2 \lambda_l \phi_l^2(x_0)$. 

Since $u$ achieves maximum at $x_0$, for any $x \in \mathcal{M}$
\begin{align*}
|\nabla \phi_l|^2(x) \leq |\nabla \phi_l|^2(x)+ B \lambda_l \phi_l^2(x) =u(x) \leq u(x_0)= |\nabla \phi_l|^2(x_0)+ B \lambda_l \phi_l^2(x_0) \leq  2 \lambda_l \phi_l^2(x_0)+ B \lambda_l \phi_l^2(x_0).
\end{align*}
Hence, $|\nabla \phi_l|(x) \leq \sqrt{2+B} \sqrt{\lambda_l} |\phi_l|(x_0)\leq  \sqrt{2+B} \sqrt{\lambda_l} \|\phi_l\|_\infty$. Note that  by Lemma \ref{laplace eigenvalue lower bound},
$$\sqrt{2+B}=\sqrt{2+2( 1+\frac{K}{\lambda_l})} \leq \sqrt{2 (2+\frac{K}{\lambda_1})} \leq \sqrt{2 (2+\frac{K}{c_2})} =c_4.$$
This proves the lemma.
\end{proof}

\begin{lemma}\label{elliptic regularity 2}
For any integer $l \geq 0$, we have
\begin{align}
\sup_{x \in \mathcal{M}}| \nabla^2 \phi_l | (x)
\leq c_5 \sqrt{\lambda_l} 
\sup_{x \in \mathcal{M}} | \nabla \phi_l | (x),
\end{align}
where $c_5>0$ depends on $d$, the diameter, the bounds of the Ricci curvature and its covariant derivative,  and the sectional curvature bounds of $\mathcal{M}$.
\end{lemma}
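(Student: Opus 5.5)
The plan mirrors the Bochner maximum-principle argument of Lemma \ref{elliptic regularity 1}, now applied one derivative higher. The case $l=0$ is trivial because $\phi_0$ is constant, so we take $l>0$, hence $\lambda_l\ge\lambda_1>0$. Set $T=\nabla\phi_l$, viewed as a 1-tensor. By the Bochner--Weitzenb\"ock identity stated above,
\[
\Delta|\nabla^2\phi_l|^2=2|\nabla^3\phi_l|^2+2(\nabla^2\phi_l,\Delta\nabla^2\phi_l).
\]
We then commute the rough Laplacian past the two covariant derivatives. Ricci identities give
\[
\Delta\nabla^2\phi_l=\nabla^2\Delta\phi_l+\mathcal{R}_1*\nabla^2\phi_l+\mathcal{R}_2*\nabla\phi_l,
\]
where $\mathcal{R}_1$ is a contraction of the curvature tensor and $\mathcal{R}_2$ a contraction of its covariant derivative. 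Since $\Delta\phi_l=-\lambda_l\phi_l$, the leading term equals $-\lambda_l\nabla^2\phi_l$. Bounding the curvature terms by the sectional curvature bounds and the bound on the covariant derivative of Ricci, and using $2ab\le a^2+b^2$, we obtain
\[
\tfrac12\Delta|\nabla^2\phi_l|^2\ \ge\ |\nabla^3\phi_l|^2-(\lambda_l+c)|\nabla^2\phi_l|^2-c'|\nabla\phi_l|^2 .
\]

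Next we need a companion inequality for $|\nabla\phi_l|^2$ that produces a positive multiple of $|\nabla^2\phi_l|^2$. Rearranging the Bochner computation in the proof of Lemma \ref{elliptic regularity 1} gives
\[
\tfrac12\Delta|\nabla\phi_l|^2=|\nabla^2\phi_l|^2-\lambda_l|\nabla\phi_l|^2+\texttt{Ric}(\nabla\phi_l,\nabla\phi_l)\ \ge\ |\nabla^2\phi_l|^2-(\lambda_l+K)|\nabla\phi_l|^2 .
\]
Consider the auxiliary function $u=|\nabla^2\phi_l|^2+B\lambda_l|\nabla\phi_l|^2$. Combining the two inequalities and dropping the nonnegative $|\nabla^3\phi_l|^2$ term yields
\[
\tfrac12\Delta u\ \ge\ (B\lambda_l-\lambda_l-c)|\nabla^2\phi_l|^2-\big(c'+B\lambda_l(\lambda_l+K)\big)|\nabla\phi_l|^2 .
\]
Choose $B=2(1+c/\lambda_l)$, which is bounded by $2(1+c/c_2)$ via Lemma \ref{laplace eigenvalue lower bound}. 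Then the coefficient of $|\nabla^2\phi_l|^2$ equals $B\lambda_l/2$.

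At a maximum point $x_0$ of $u$ we have $\Delta u(x_0)\le0$. This gives
\[
|\nabla^2\phi_l|^2(x_0)\le \frac{2\big(c'+B\lambda_l(\lambda_l+K)\big)}{B\lambda_l}|\nabla\phi_l|^2(x_0)\le c''\lambda_l|\nabla\phi_l|^2(x_0),
\]
using again that $\lambda_l\ge c_2$ to absorb the constants $c'/\lambda_l$ and $K$. Hence, for every $x$,
\[
|\nabla^2\phi_l|^2(x)\le u(x)\le u(x_0)\le (c''+B)\lambda_l\sup|\nabla\phi_l|^2 .
\]
Taking square roots gives the claim with $c_5=\sqrt{c''+B}$. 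This constant depends only on $d$, the curvature bounds, the bound on the covariant derivative of Ricci, and (through $c_2$) the diameter.

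The main obstacle is the commutation formula for $\Delta\nabla^2\phi_l$. Keeping track of which curvature terms appear, and checking that only the curvature, its first covariant derivative, $\nabla^2\phi_l$ and $\nabla\phi_l$ enter (with no $\phi_l$ term left uncontrolled), is the delicate bookkeeping. I would do it in normal coordinates at a point, commuting $\nabla_k\nabla_k$ past $\nabla_i\nabla_j$ one index at a time.
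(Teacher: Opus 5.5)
Your proposal is correct and follows the paper's proof essentially step for step. The shared steps are: Bochner--Weitzenb\"ock applied to $\nabla^2\phi_l$, the commutation formula for $\Delta\nabla^2\phi_l$, the auxiliary function $u=|\nabla^2\phi_l|^2+B\lambda_l|\nabla\phi_l|^2$ combined with the Bochner inequality from Lemma \ref{elliptic regularity 1}, the maximum principle, and $\lambda_l\ge c_2$ to absorb constants. In the commutation formula, the contracted derivative of the curvature reduces to $\nabla\texttt{Ric}$ terms via the contracted second Bianchi identity, as in the paper's formula. The only differences are cosmetic: you split the cross term $|\nabla\phi_l||\nabla^2\phi_l|$ before combining, and you take $B=2(1+c/\lambda_l)$ rather than the paper's $B=(c_1'+c_2'+2\lambda_l)/\lambda_l$.
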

\begin{proof}
We only need to prove the case when $l>0$ as $\phi_0$ is a constant.  By applying the Bochner-Weitzenbock's identity to $\nabla^2 \phi_l$, we have 
\begin{align}\label{Weitzenbock bound 0}
\Delta|\nabla^2 \phi_l|^2=2|\nabla( \nabla^2 \phi_l)|^2+2(\nabla^2 \phi_l, \Delta \nabla^2 \phi_l)=2| \nabla^3 \phi_l|^2+2(\nabla^2 \phi_l, \Delta\nabla^2 \phi_l).
\end{align}
A straight forward calculation shows that 
\begin{align}\label{Weitzenbock bound 01}
\Delta \nabla_i\nabla_j \phi_l= \nabla_i\nabla_j \Delta\phi_l +\sum_{p,k=1}^d(R_{jp}g_{ik}+R_{ip}g_{jk}-2R_{kipj})\nabla_k\nabla_p\phi_l+\sum_{p=1}^d(\nabla_i R_{jp}+\nabla_j R_{pi}-\nabla_p R_{ij})\nabla_p\phi_l.
\end{align}
where $R_{kipj}$ is the curvature tensor and $R_{pi}$ is the Ricci curvature tensor.

We define the following $2$ tensor fields.  Let $S$ be a $2$ tensor field depending on $\nabla^2 \phi_l$ with $$S_{ij}=\sum_{p,k=1}^d(R_{jp}g_{ik}+R_{ip}g_{jk}-2R_{kipj})\nabla_k\nabla_p\phi_l.$$ 
 Let $S'$ be a $2$ tensor field depending on $\nabla \phi_l$ with 
 $$S'_{ij}=\sum_{p=1}^d(\nabla_i R_{jp}+\nabla_j R_{pi}-\nabla_p R_{ij})\nabla_p\phi_l.$$ 
Substituting \eqref{Weitzenbock bound 01} into \eqref{Weitzenbock bound 0} and using $-\Delta \phi_l=\lambda_l \phi_l$, we have

\begin{align*}
\Delta |\nabla^2 \phi_l|^2=& 2 |\nabla^3\phi_l|^2 +2(\nabla^2 \phi_l, \nabla^2\Delta\phi_l)+2(\nabla^2 \phi_l, S)+2(\nabla^2 \phi_l,S')\\
=& 2 |\nabla^3\phi_l|^2 - 2\lambda_l | \nabla^2 \phi_l|^2+ \mathcal{E}_1(\nabla^2 \phi_l, \nabla^2 \phi_l)+\mathcal{E}_2(\nabla^2 \phi_l, \nabla \phi_l),
\end{align*}
where $\mathcal{E}_1(\nabla^2 \phi_l, \nabla^2 \phi_l)=(\nabla^2 \phi_l, S)$ is a quadratic form on symmetric 2-tensor fields whose coefficients can be expressed by the curvature tensor  and  $\mathcal{E}_2(\nabla^2 \phi_l, \nabla \phi_l)=(\nabla^2 \phi_l, S')$ is a bilinear form whose coefficients can be expressed by the covariant derivative of the Ricci curvature tensor. Since the sectional curvatures determine the curvature tensor, there are constants $c'_1>0$ depending on $d$  and the sectional curvature bounds and  $c'_2>0$ depending on $d$ and the bounds of covariant derivative of the Ricci curvature such that 
\begin{align}\label{Weitzenbock bound 1}
\Delta |\nabla^2 \phi_l|^2 \geq & 2 |\nabla^3\phi_l|^2 - 2\lambda_l | \nabla^2 \phi_l|^2-c'_1 |\nabla^2 \phi_l|^2-c'_2|\nabla \phi_l||\nabla^2 \phi_l| \nonumber \\
=& 2 |\nabla^3\phi_l|^2 - (2\lambda_l +c'_1)| \nabla^2 \phi_l|^2-c'_2|\nabla \phi_l||\nabla^2 \phi_l|.
\end{align}
Define $u= |\nabla^2 \phi_l|^2+ B \lambda_l |\nabla\phi_l|^2$ with the constant $B>0$ to be chosen.  Suppose  that $\mathcal{M}$ has a Ricci curvature lower bound $\texttt{Ric} \geq -Kg$ with $K>0$. Since $ |\nabla^3\phi_l|^2 \geq 0$, by applying \eqref{proof: Bochner bound 2} and \eqref{Weitzenbock bound 1},  we have
\begin{align*}
\Delta u \geq & 2 |\nabla^3\phi_l|^2 - (2\lambda_l +c'_1)| \nabla^2 \phi_l|^2 -c'_2|\nabla \phi_l||\nabla^2 \phi_l|+2 B \lambda_l (|\nabla^2\phi_l|^2-(\lambda_l+K)| \nabla \phi_l|^2) \\
 \geq & (2 B \lambda_l-2\lambda_l-c'_1)|\nabla^2\phi_l|^2 -c'_2|\nabla \phi_l||\nabla^2 \phi_l|- 2 B \lambda_l(\lambda_l+K)| \nabla \phi_l|^2.
\end{align*}
Since $|\nabla \phi_l||\nabla^2 \phi_l| \leq |\nabla \phi_l|^2+|\nabla^2 \phi_l|^2$, we have
$$\Delta u \geq (2 B \lambda_l-2\lambda_l-c'_1-c'_2)|\nabla^2\phi_l|^2- [2 B \lambda_l(\lambda_l+K)+c'_2]| \nabla \phi_l|^2.$$
We will choose $B>0$ so that $2 B \lambda_l-2\lambda_l-c'_1-c'_2>0$ and we have $2 B \lambda_l(\lambda_l+K)+c'_2>0$. Since $\mathcal{M}$ is compact, $u$ achieves maximum at $x_0 \in \mathcal{M}$ with $\Delta u (x_0) \leq 0$. Hence, by the above inequality,
$$ (2 B \lambda_l-2\lambda_l-c'_1-c'_2)|\nabla^2\phi_l|^2(x_0)- [2 B \lambda_l(\lambda_l+K)+c'_2]| \nabla \phi_l|^2(x_0)\leq \Delta u (x_0) \leq 0$$
Then, we have
\begin{align}\label{Weitzenbock bound 2}
|\nabla^2\phi_l|^2(x_0) \leq \frac{2 B \lambda_l(\lambda_l+K)+c'_2}{2 B \lambda_l-2\lambda_l-c'_1-c'_2}| \nabla \phi_l|^2(x_0).
\end{align}
Therefore, by \eqref{Weitzenbock bound 2}, for any $x \in \mathcal {M}$ we have
\begin{align}
|\nabla^2\phi_l|^2(x) \leq &  |\nabla^2 \phi_l|^2(x)+ B \lambda_l |\nabla\phi_l|^2(x)=u(x) \leq u(x_0)= |\nabla^2 \phi_l|^2(x_0)+ B \lambda_l |\nabla\phi_l|^2(x_0) \nonumber\\
\leq& \Big ( \frac{2 B \lambda_l(\lambda_l+K)+c'_2}{2 B \lambda_l-2\lambda_l-c'_1-c'_2}+B \lambda_l \Big) | \nabla \phi_l|^2(x_0)  .\label{Weitzenbock bound 3}
\end{align}
 We choose $B=\frac{c'_1+c'_2+2\lambda_l}{\lambda_l}>0$ so that $2 B \lambda_l-2\lambda_l-c'_1-c'_2=B \lambda_l>0$. For $l>0$, by Lemma \ref{laplace eigenvalue lower bound}, $c_2 \leq \lambda_l$. Therefore,
 \begin{align*} 
\frac{2 B \lambda_l(\lambda_l+K)+c'_2}{2 B \lambda_l-2\lambda_l-c'_1-c'_2}+B \lambda_l =& 2(\lambda_l+K)+\frac{c'_2}{c'_1+c'_2+2\lambda_l}+c'_1+c'_2+2\lambda_l \\
\leq &\lambda_l \left(4+\left(\frac{c'_2}{c'_1+c'_2+2\lambda_l}+c'_1+c'_2+2K\right)/c_2\right)\\
\leq &\lambda_l \left(4+\left(\frac{c'_2}{c'_1+c'_2}+c'_1+c'_2+2K\right)/c_2\right)
\end{align*}
By the above simplification,  \eqref{Weitzenbock bound 3} implies that $\sup_{x \in \mathcal{M}}| \nabla^2 \phi_l  | (x)\leq c_5 \sqrt{\lambda_l} \sup_{x \in \mathcal{M}} | \nabla \phi_l | (x)$ with 
$$c_5=\sqrt{ 4+\left(\frac{c'_2}{c'_1+c'_2}+c'_1+c'_2+2K\right)/c_2}.
$$
\end{proof}

We are ready to prove Lemma \ref{lemma:long-t-grad-Ht}

\begin{proof}[Proof of Lemma \ref{lemma:long-t-grad-Ht}]
Recall that for any $x, y \in \mathcal{M}$ and any $t>0$,
\begin{align*}
\mathcal{H}_t(x, y)=\sum_{i=0}^{\infty}e^{-\lambda_i t} \phi_i(x)\phi_i(y). 
\end{align*}
Therefore for  any $t>0$,
\begin{align}\label{proof: spectral series bound 1}
\sup_{x, y \in \mathcal{M}} |\nabla^k_x \mathcal{H}_t(x,y) | \leq \sum_{i=0}^{\infty}e^{-\lambda_i t/2} e^{-\lambda_i t/2}  \sup_{x \mathcal{M}} |\nabla^k \phi_i |(x) \|\phi_i\|_\infty=&\sum_{i=1}^{\infty}e^{-\lambda_i t/2} e^{-\lambda_i t/2}  \sup_{x \mathcal{M}} |\nabla^k \phi_i |(x) \|\phi_i\|_\infty \nonumber \\
\leq& e^{-\lambda_1 t/2} \sum_{i=1}^{\infty}e^{-\lambda_i t/2}  \sup_{x \mathcal{M}} |\nabla^k \phi_i |(x) \|\phi_i\|_\infty.
\end{align}
where we use that $\phi_0$ is a constant in the second last step and $e^{-\lambda_i t/2} \leq e^{-\lambda_1 t/2} $ for all $i \geq 1$ in the last step.

If we prove that for $t\geq t_2>0$
\begin{align}\label{proof: spectral series bound 3}
\sum_{i=1}^{\infty}e^{-\lambda_i t/2} \sup_{x \in\mathcal{M}} |\nabla^k \phi_i |(x) \|\phi_i\|_\infty \leq C (t_2),
\end{align}
then the lemma is proved by substituting  \eqref{proof: spectral series bound 3} into \eqref{proof: spectral series bound 1}.

\

\noindent
\underline{Proof of \eqref{proof: spectral series bound 3}}: 
By Lemma \ref{lemma hormander}, Lemma \ref{laplace eigenvalue lower bound}, Lemma \ref{elliptic regularity 1}, and Lemma \ref{elliptic regularity 2},
\begin{align*}
&
e^{-\lambda_i t/2} \leq e^{-\frac{1}{2}c_2t_2  i^{2/d}};  
&& \|\phi_i\|_\infty \leq  c_1c_3^{\frac{d-1}{4}}  i^{\frac{d-1}{2d}}; \\
& \sup_{x \mathcal{M}} |\nabla \phi_i |(x) \leq  c_1c_3^{\frac{d+1}{4}}c_4  i^{\frac{d+1}{2d}}; & &  \sup_{x \mathcal{M}} |\nabla^2 \phi_i |(x) \leq  c_1c_3^{\frac{d+3}{4}} c_4c_5  i^{\frac{d+3}{2d}}.
\end{align*}
Therefore, we have
\begin{align*}
& \sum_{i=1}^{\infty}e^{-\lambda_i t/2}  \sup_{x \mathcal{M}} |\nabla \phi_i |(x) \|\phi_i\|_\infty \leq c_1^2 c_3^{d/2} c_4 \sum_{i=1}^{\infty}e^{-\frac{1}{2}c_2t_2  i^{2/d}} i; \\
& \sum_{i=1}^{\infty}e^{-\lambda_i t/2}  \sup_{x \mathcal{M}} |\nabla^2 \phi_i |(x)\|\phi_i\|_\infty \leq c_1^2 c_3^{(d+1)/2} c_4 c_5 \sum_{i=1}^{\infty}e^{-\frac{1}{2}c_2t_2  i^{2/d}} i^{(d+1)/d};
\end{align*}
By the integral test, we have
\begin{align*}
 \sum_{i=1}^{\infty}e^{-\frac{1}{2}c_2t_2  i^{2/d}} i^{(d+1)/d} \leq& e^{-\frac{1}{2}c_2t_2} +\int_{1}^\infty e^{-\frac{1}{2}c_2t_2 u^{2/d}} u^{(d+1)/d} du \leq  1+\int_{0 }^\infty e^{-\frac{1}{2}c_2t_2 u^{2/d}} u^{(d+1)/d} du \\
 =& 1+\frac{d}{2}(c_2 t_2/2)^{-(2d+1)/2} \int_{0}^\infty e^{-v}v^{d-\frac{1}{2}}dv \quad\quad  (\text{Let  } v=\frac{c_2t_2 u^{2/d}}{2}.)\\
 =& 1+\frac{d\Gamma(d+\frac{1}{2})}{2} (\frac{2}{c_2})^{(2d+1)/2} t_2^{-(2d+1)/2} =c'_2(t_2).
\end{align*}
Similarly,  $\sum_{i=1}^{\infty}e^{-\frac{1}{2}c_2t_2  i^{2/d}} i \leq 1+ \frac{d}{2} \Gamma(d) (\frac{2}{c_2})^d t_2^{-d}=c'_1(t_2).$ Finally, let 
$$C(t_2)= \max(c_1^2 c_3^{d/2} c_4 c'_1(t_2), c_1^2 c_3^{(d+1)/2} c_4 c_5 c'_2(t_2))$$
 which proves \eqref{proof: spectral series bound 3}.

 Based on the expression of $C(t_2)$, $C(t_2)$ depends on $t_2$, $d$,  the diameter of $\mathcal{M}$,  the volume of $\mathcal{M}$,  the injectivity radius, bounds of the Ricci curvature and its covariant derivative,  and the sectional curvature bounds of $\mathcal{M}$. 
\end{proof}

\subsection{Heat smoothing estimates on manifold}

For the zero-th derivative, we recall that $Q_t$ is contractive in $L^\infty$ norm.
\begin{lemma}\label{lemma:Qt-contractive-Linf}
For any $f \in L^\infty(\calM)$, $\forall t > 0$
$
\| Q_t f \|_\infty \le \| f \|_\infty.$
\end{lemma}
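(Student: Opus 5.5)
The plan is to use the integral representation $Q_t f(x) = \int_\calM \calH_t(x,y) f(y)\, dV(y)$ together with two standard properties of the heat kernel on a compact manifold without boundary: $\calH_t(x,y) \ge 0$, and $\int_\calM \calH_t(x,y)\, dV(y) = 1$ for all $x$ and $t>0$. With these, the contraction is immediate:
\[
|Q_t f(x)| \le \int_\calM \calH_t(x,y) |f(y)|\, dV(y) \le \|f\|_\infty \int_\calM \calH_t(x,y)\, dV(y) = \|f\|_\infty .
\]
Taking the supremum over $x$ then gives the claim. For $f \in L^\infty$ the supremum is understood as an essential supremum, but $Q_t f$ is continuous for $t>0$, so this makes no difference.

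The main step is to justify the two kernel properties, and I would argue them as follows.

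\emph{Mass conservation.} Use the spectral representation $\calH_t(x,y) = \sum_l e^{-\lambda_l t}\phi_l(x)\phi_l(y)$ together with the orthogonality $\int_\calM \phi_l\, dV = 0$ for $l \ge 1$, where $\phi_0 = \Vol(\calM)^{-1/2}$ is the constant eigenfunction. Integrating in $y$ kills every term except $l=0$, which gives $\int_\calM \calH_t(x,y)\, dV(y) = \phi_0(x)\phi_0 \Vol(\calM) = 1$. Equivalently, $Q_t 1 = 1$ because $\Delta 1 = 0$ and solutions of the heat equation are unique. Interchanging the sum and the integral is legitimate because the series converges uniformly for $t>0$, by the eigenfunction bounds in Lemma \ref{lemma hormander} and Weyl's law (Lemma \ref{laplace eigenvalue lower bound}).

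\emph{Positivity.} This follows from the parabolic maximum principle. If $g \ge 0$ is smooth, then $u = Q_t g$ solves the heat equation on the closed manifold $\calM$, so $\min_x u(\cdot,t)$ is nondecreasing in $t$; in particular $Q_t g \ge 0$. Approximating a delta mass by nonnegative smooth functions then yields $\calH_t \ge 0$. Alternatively, one can cite the standard fact that $\calH_t$ is the transition density of Brownian motion on $\calM$.

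A shortcut avoids the kernel altogether. Apply the maximum principle directly to $u = Q_t f$ for smooth $f$ to get $\min f \le u \le \max f$, which already gives the contraction. Then extend to $f \in L^\infty$ by approximating $f$ with smooth $f_k$ satisfying $\|f_k\|_\infty \le \|f\|_\infty$ and $f_k \to f$ in $L^1$ (for instance by mollification). Since $\calH_t(x,\cdot)$ is bounded for fixed $t>0$, we have $Q_t f_k(x) \to Q_t f(x)$ pointwise, and the bound passes to the limit.

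The only point that needs any care is this density and continuity argument for nonsmooth $f$, and it is routine because $\calH_t$ is smooth and bounded for every fixed $t>0$.
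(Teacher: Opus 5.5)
Your proof is correct and follows the paper's argument. Both bound $|Q_t f(x)|$ by $\|f\|_\infty\int_\calM \calH_t(x,y)\,dV(y)$ using $\calH_t\ge 0$ and unit mass; the paper takes these two kernel properties as standard, whereas you also justify them (spectral expansion for unit mass, maximum principle for positivity), which is sound.
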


\begin{proof}
$|Q_t f(x) | \le  \int_\calM \calH_t(x,y) |f(y)|dV(y)  \le \| f\|_\infty \int_\calM \calH_t(x,y) dV(y)$.
\end{proof}

\vspace{5pt}
\noindent
$\bullet$ {\bf Proof of Lemma \ref{lemma:smoothing-estimates}}
\vspace{5pt}

To derive the smoothing estimates for $k$-th derivatives, 
we will utilize  standard Gaussian bounds for the heat kernel and its first two spatial derivatives \cite{grigor1997gaussian,hsu1999estimates} .
This will prove the bound up to some $O(1)$ time, and in the short time (near zero) the power in $t$ is sharp. 
For the long-time decay, since manifold is compact, we utilize a separate analysis based on the bounds in Appendix \ref{subsec:Ht-der-bounds}.

\begin{proof}[Proof of Lemma \ref{lemma:smoothing-estimates}]
For case (i), observe that $\int_\calM \calH_t(x,y) dV(y)=1$, and thus 
\[
\int_\calM \nabla_x \calH_t(x,y) \, dV(y) = 0, 
\quad 
\int_\calM \nabla_x^2 \calH_t(x,y)\, dV(y) = 0.
\]
As a result, for $k=1,2$,
\[
\nabla^k Q_t f(x) 
= \int_\calM \nabla^k_x \calH_t(x,y) f(y) dV(y)
= \int_\calM \nabla^k_x \calH_t(x,y) (f(y) - f(x))dV(y).
\]
Recall that $| T(x)|$ stands for the Frobenius norm of the tensor $T(x)$.
Then, we have 
\begin{equation}\label{eq:bound-der-Qtf-proof-1}
| \nabla^k Q_t f(x) | 
\le \int_\calM  | \nabla^k_x \calH_t(x,y) | | f(y) - f(x)| dV(y), \quad k = 1,2.
\end{equation}
Recall the  constant $t_1$ as defined in \eqref{eq:def-t1M}, $t_1 \le t_0 < 1$. We will consider when $t < t_1$ and $t \ge t_1$ separately.

a) $t < t_1$.
By \cite[Corollary  1.2]{hsu1999estimates}, there are constants $D_1$, $D_2$ that depend on $\calM$ s.t. 
\[
|\nabla_x^k \calH_t(x,y)| 
\le D_k  \big( \frac{d_\calM(x,y)}{t}  + \frac{1}{\sqrt{t}} \big)^k \calH_t(x,y), 
\quad \forall x, y \in \calM, \, \forall 0< t \le 1, \quad  k=1,2.
\]
Combined with the Gaussian bound of $\calH_t$ in Lemma \ref{lemma:Heat-short-time}(ii), 
we have
\[
|\nabla_x^k \calH_t(x,y)| 
\le D_k C_3 
	\big( \frac{d_\calM(x,y)}{t}  + \frac{1}{\sqrt{t}} \big)^k 
	 t^{-d/2} e^{- \frac{ d_\calM( x,y)^2}{ 5 t}}.
\]
Using $(a+b)^2 \leq 2(a^2+b^2)$ and $\sup_{u \geq 0} u^k e^{-u^2/30}  = (15k)^{k/2}e^{-k/2} \leq 12$ for $k=1,2$, one verifies that 
$\big(\frac{s}{t}+\frac{1}{\sqrt{t}}\big)^k e^{-s^2/(5t)} \leq 25\, t^{-k/2} 
e^{-s^2/(6t)}$
 for all $s \geq 0$, $t > 0$.
 As a result, let $C_3':= 25 \max \{ D_1, D_2\} C_3$, we have
\begin{equation}\label{eq:gaussian-envelope-der-Ht-proof-1}
|\nabla_x^k \calH_t(x,y)| \le C_3' t^{-(d+k)/2} e^{- \frac{ d_\calM( x,y)^2}{ 6 t}},
 \quad \forall x, y \in \calM, \, \forall t \le t_0 < 1, \, k=1,2,
\end{equation}
where $C_3'$ is a constant depending on $\calM$.

Because  $ t_1 \le t_0 < 1$, this holds when $t < t_1$.
Back to \eqref{eq:bound-der-Qtf-proof-1}, we then have
\[
| \nabla^k Q_t f(x) | 
\le C_3'  
\int_\calM  t^{-(d+k)/2} e^{- \frac{ d_\calM( x,y)^2}{ 6 t}}
			| f(y) - f(x)| dV(y). 
\]
By definition of $t_1$, when $ t < t_1$, $2 \delta(t) < \delta_0 < \xi$.
We then decompose the integral into when $y \in B_{2\delta(t)}(x)$ and  $y \notin B_{2\delta(t)}(x)$
and bound them respectively, as in the proof of Lemma \ref{lemma:Qeps-g-estimate}(ii).
Specifically, 
for any $y \notin B_{2\delta(t)}(x)$, 
one can verify that $t^{-d/2} e^{- { d_\calM( x,y)^2}/{( 6 t)}} \le t^5$, and then
\[
\int_{\calM \backslash B_{2 \delta(t)}(x)}   t^{-d/2} e^{- \frac{ d_\calM( x,y)^2}{ 6 t}} | f(y) - f(x) | dV(y)
\le 2 \|f\|_\infty \Vol(\calM) t^5;
\]
In addition, similarly as before,
\[
 \int_{ B_{2 \delta(t)}(x)}   t^{-d/2} e^{- \frac{ d_\calM( x,y)^2}{ 6 t}} | f(y) - f(x) | dV(y) 
 \le L_{0,\beta}(f) b_V |S^{d-1}|  m_{d,4} t^{\beta/2},
\]
where $m_{d,4}:= \int_0^{\infty}  e^{-u^2/6} 
	(1+u)u^{d-1} du$ is a constant determined by $d$.
Putting together, this gives
\[
\int_\calM  t^{-d/2} e^{- \frac{ d_\calM( x,y)^2}{ 6 t}}
			| f(y) - f(x)| dV(y)
	\le C_{1,a} \| f\|_{0,\beta}(t^5 + t^{\beta/2}) 
	\le 2 C_{1,a} \| f\|_{0,\beta} t^{\beta/2}, 
\]
since $t<1$, where $C_{1,a}$ is a constant depending on $\calM$ (and $d$). This proves that 
\[
| \nabla^k Q_t f(x) | \le 2  C_3'  C_{1,a}  \| f\|_{0,\beta} t^{-k/2 + \beta/2}, \quad \forall t < t_1, \quad k =1,2.
\]

b) $t \ge t_1$. We apply Lemma \ref{lemma:long-t-grad-Ht} with $t_2= t_1$. 
Note that $\lambda_1 > 0$ is a constant depending on $\calM$. By that for any $c>0$ and $k=1,2$,
 $\sup_{t > 0} t^{k/2} e^{- c t } \le c'$ for some $c'$,
 we have $\forall t \ge t_1$,
\begin{equation}\label{eq:upper-bound-der-Ht-proof-2}
    \left| \nabla_x^k \calH_t(x, y) \right| 
    	\leq C_{b} t^{-k/2},
    \quad 
     \forall x, y \in \calM, 
    \quad k = 1, 2, 
\end{equation}
where $C_{b} $ is a constant depending on $\calM$ (including the dependence on $t_1$).
Then, back to \eqref{eq:bound-der-Qtf-proof-1}, we have
\[
| \nabla^k Q_t f(x) | 
\le 2 \Vol(\calM) C_{b} \|f\|_\infty   t^{-k/2}.
\]
Note that when $t \ge t_1$, $t^{-\beta/2} \le t_1^{-\beta/2} \le t_1^{-1/2}$, we then have 
\[
| \nabla^k Q_t f(x) | 
\le 2 \Vol(\calM) C_{b}  t_1^{-1/2} \|f\|_\infty   t^{-k/2+\beta/2}, \quad \forall t \ge t_1, \quad k = 1,2.
\]

Combining a) and b), 
and by that the Frobenius norm upper bounds the operator norm of a tensor, we have that for any $t>0$,
\[
\|  \nabla^k Q_t f(x) \|_{op}
\le 
| \nabla^k Q_t f(x) |  \le C_{7,1} \| f\|_{0,\beta} t^{-k/2 + \beta/2}, \quad k=1,2,
\]
where $C_{7,1} = 2  \max\{ C_3'  C_{1,a} ,  \Vol(\calM) C_{b}  t_1^{-1/2} \}$ is a constant depending on $\calM$ only.

\vspace{5pt}

For case (ii), by that $\nabla^k Q_t f(x) 
= \int_\calM \nabla^k_x \calH_t(x,y) f(y) dV(y)$, we have
\[
| \nabla^k Q_t f(x) |
\le \| f\|_\infty 
\int_\calM | \nabla^k_x \calH_t(x,y)| dV(y).
\]
Similarly as above, when a) $ t < t_1$, we have  the Gaussian envelope of $|\nabla_x^k \calH_t(x,y)|$
as in \eqref{eq:gaussian-envelope-der-Ht-proof-1},
 and then
\[
\int_\calM | \nabla^k_x \calH_t(x,y)| dV(y)
\le C_3' \int_\calM  t^{-(d+k)/2} e^{- \frac{ d_\calM( x,y)^2}{ 6 t}}
		dV(y),
\]
where by the $B_{2 \delta(t)}$ ball truncation one can verify that
\[
\int_\calM  t^{-d/2} e^{- \frac{ d_\calM( x,y)^2}{ 6 t}}
		dV(y)
\le \Vol(\calM) t^5 + C_{2,a}',
\]
and $C_{2,a}'=b_V |S^{d-1}|  m_{d,4}$ is a constant depending on $\calM$ (and $d$). Putting together, and by that $t < 1$, we have
\[
| \nabla^k Q_t f(x) |
\le C_3' (\Vol(\calM)+ C_{2,a}')  \| f\|_\infty t^{-k/2}, 
\quad \forall t < t_1;
\]
When b) $t> t_1$, we still have \eqref{eq:upper-bound-der-Ht-proof-2}, and then
\[
| \nabla^k Q_t f(x) |
\le  \Vol(\calM) C_{b} \| f\|_\infty  t^{-k/2}, \quad \forall t \ge t_1.
\]
Putting together, we have that for any $t>0$,
\[
\| \nabla^k Q_t f(x) \|_{op} \le 
| \nabla^k Q_t f(x) |  \le C_{7,2} \| f\|_{\infty} t^{-k/2}, \quad k=1,2,
\]
where $C_{7,2} =   \max\{ C_3' (\Vol(\calM)+ C_{2,a}') ,  \Vol(\calM) C_{b} \}$ is a constant depending on $\calM$ only.

\vspace{5pt}

Finally, let $C_7 = \max\{ C_{7,1}, C_{7,2} \}$ finishes the proof of the lemma.
\end{proof}

\subsection{Other lemmas}

\vspace{5pt}
\noindent
$\bullet$ Bernstein inequality
\vspace{5pt}

\begin{lemma}[Bernstein inequality]\label{lemma:bernstein}
	Let $\{Y_j\}_{j=1}^N$ be i.i.d. random variables, $\E Y_j = 0$, 
	and for positive constants $L$ and $\nu$,
	$|Y_j| \leq L$,  $\E Y_j^2 \leq \nu$.
	Then, for any $\tau > 0$,
	\[ 
 \Pr \Big[ \frac{1}{N} \sum_{j=1}^N Y_j  \ge \tau \Big],  
 \Pr \Big[ \frac{1}{N} \sum_{j=1}^N Y_j  \le -\tau  \Big] 
 	\leq \exp \{  - \frac{N \tau^2}{ 2( \nu + {\tau L}/{3}) } \}.   \]
	In particular, when $\tau L \leq 3 \nu$, the right hand side is further bounded by $\exp\{-  \frac{N \tau^2}{4 \nu}\}$.
\end{lemma}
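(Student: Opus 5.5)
The statement to prove is the Bernstein inequality (Lemma \ref{lemma:bernstein}). The plan is the standard Chernoff/moment-generating-function argument, using boundedness to control higher moments by the variance.

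By symmetry (apply the result to $-Y_j$, which satisfies the same hypotheses), it suffices to bound $\Pr[\frac{1}{N}\sum_j Y_j \ge \tau]$. For any $\lambda>0$, Markov's inequality applied to $e^{\lambda \sum_j Y_j}$, together with independence, gives
\[
\Pr\Big[\sum_j Y_j \ge N\tau\Big] \le e^{-\lambda N \tau} \big(\E e^{\lambda Y_1}\big)^N .
\]

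The key step is the moment-generating-function bound. Expand $\E e^{\lambda Y_1} = 1 + \sum_{k\ge 2} \lambda^k \E Y_1^k / k!$, where the linear term vanishes because $\E Y_1 = 0$. Use $|\E Y_1^k| \le \nu L^{k-2}$ and $k! \ge 2\cdot 3^{k-2}$. For $0<\lambda<3/L$ this yields a geometric series:
\[
\E e^{\lambda Y_1} \le 1 + \frac{\lambda^2 \nu}{2}\sum_{k\ge 2}\Big(\frac{\lambda L}{3}\Big)^{k-2} = 1+ \frac{\lambda^2\nu}{2(1-\lambda L/3)} \le \exp\Big\{\frac{\lambda^2\nu}{2(1-\lambda L/3)}\Big\}.
\]

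Now choose $\lambda = \tau/(\nu + \tau L/3)$. This choice automatically satisfies $\lambda L/3 < 1$. Then $1-\lambda L/3 = \nu/(\nu+\tau L/3)$, so the exponent becomes
\[
-\lambda N\tau + \frac{N\lambda^2 \nu}{2(1-\lambda L/3)} = -\frac{N\tau^2}{\nu+\tau L/3} + \frac{N\tau^2}{2(\nu+\tau L/3)} = -\frac{N\tau^2}{2(\nu+\tau L/3)},
\]
which is the claimed bound.

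For the final clause, if $\tau L\le 3\nu$ then $\nu + \tau L/3 \le 2\nu$, so the bound is at most $\exp\{-N\tau^2/(4\nu)\}$.

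The only step requiring care is the moment bound. The factorial comparison $k!\ge 2\cdot 3^{k-2}$ is checked by induction on $k$. Beyond that, the argument is routine.
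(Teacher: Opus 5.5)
The paper gives no proof of this lemma; it cites it as the classical Bernstein inequality, so there is nothing to compare against. Your Chernoff/moment-generating-function argument is the standard proof and is correct as written. The moment bound $|\E Y_1^k|\le \nu L^{k-2}$ combined with $k!\ge 2\cdot 3^{k-2}$ holds. The termwise expansion of $\E e^{\lambda Y_1}$ is legitimate because $|Y_1|\le L$. The choice $\lambda=\tau/(\nu+\tau L/3)$ satisfies $\lambda L/3<1$ since $\nu>0$. The final reduction $\nu+\tau L/3\le 2\nu$ under $\tau L\le 3\nu$ gives the last clause.
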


\vspace{5pt}
\noindent
$\bullet$ Metric and volume form comparisons
\vspace{5pt}

The following lemma establishes a crude comparison between Euclidean distance and manifold geodesic distance. 
We follow the statement of Lemma D.11 in \cite{tang2026adaptive}.
\begin{lemma}\label{lemma:manifold-reach}
    Suppose $\calM $ is a $C^2$ manifold isometrically embedded in $\R^D$ with reach $\tau > 0$.
    For any $x, y \in \calM$ with $\| x -  y \|< \tau/2$,
    we have $ d_\calM(x,y) \le 2 \| x  -  y \|$.
\end{lemma}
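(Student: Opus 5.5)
The statement to prove is Lemma \ref{lemma:manifold-reach}: for $x,y \in \calM$ with $\|x-y\| < \tau/2$, we have $d_\calM(x,y) \le 2\|x-y\|$. I would derive it from the classical reach-based comparison between chord length and geodesic length, due to Federer and Niyogi--Smale--Weinberger. The target intermediate bound is
\[
d_\calM(x,y) \le \tau - \tau\sqrt{1 - 2\|x-y\|/\tau},
\]
valid whenever $\|x-y\| < \tau/2$. Granting this, set $a := \|x-y\|/\tau \in [0,1/2)$. We need $1-\sqrt{1-2a} \le 2a$. This is equivalent to $\sqrt{1-2a} \ge 1-2a$, which holds because $0 \le 1-2a \le 1$. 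Multiplying by $\tau$ gives the claim.

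To establish the intermediate bound, I would use two standard consequences of the reach. The first is a curvature bound: for any unit-speed geodesic $\gamma$ in $\calM$, the ambient acceleration satisfies $\|\gamma''\| \le 1/\tau$. This is because $\gamma'' = \Second(\gamma',\gamma')$, and positive reach $\tau$ forces the second fundamental form to have operator norm at most $1/\tau$. The second is a tangent-angle bound: for $p,q \in \calM$, the angle $\phi$ between $T_p\calM$ and $T_q\calM$ satisfies $\cos\phi \ge 1 - d_\calM(p,q)/\tau$, and more usefully the local geometry gives the Niyogi--Smale--Weinberger chord estimate. Concretely, let $\gamma$ be a minimizing geodesic from $x$ to $y$ of length $\ell = d_\calM(x,y)$, which exists by compactness and Hopf--Rinow. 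Integrating $\|\gamma''\|\le 1/\tau$ gives $\langle \gamma'(s), \gamma'(0)\rangle \ge \cos(s/\tau)$ for $s \le \pi\tau/2$. Hence $\|x-y\| \ge \int_0^\ell \cos(s/\tau)\,ds = \tau\sin(\ell/\tau)$.

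This chord bound alone does not invert cleanly for all $\ell$, since $\sin$ is not monotone. The main obstacle is therefore ruling out that the minimizing geodesic is long, say $\ell \ge \pi\tau/2$, while the chord is still shorter than $\tau/2$. For this step I would invoke the NSW/Federer result directly: if $\|x-y\| < \tau/2$, then $\ell \le \tau(1-\sqrt{1-2\|x-y\|/\tau})$. The proof of that result uses a continuity argument along a shrinking family of chords, combined with the fact that the normal injectivity of the tubular neighborhood of radius $\tau$ prevents far-apart portions of $\calM$ from being close in $\R^D$.

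Alternatively, I can stay self-contained using $\tau\sin(\ell/\tau)$. On the branch $\ell \le \pi\tau/2$, the inequality $\sin u \ge 2u/\pi \cdot$ is too weak to give the constant $2$ directly. Using $\sin u \ge u - u^3/6$ together with $\|x-y\| < \tau/2$ forces $\ell/\tau < \pi/6$. On that range, $\sin u \ge u/2$ holds comfortably, which gives $\ell \le 2\|x-y\|$. Excluding the other branch still requires the global reach property, so citing Lemma D.11 of \cite{tang2026adaptive} (equivalently NSW Proposition 6.3) for that step is what I would ultimately do.
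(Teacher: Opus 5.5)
Your proposal is correct and follows the paper's route: the paper gives no independent argument and simply imports the statement from Lemma D.11 of \cite{tang2026adaptive}, and the Niyogi--Smale--Weinberger bound $d_\calM(x,y)\le \tau-\tau\sqrt{1-2\|x-y\|/\tau}$ combined with your inequality $1-\sqrt{1-2a}\le 2a$ on $[0,1/2)$ is exactly what that citation supplies (the factor $2$ is what this bound gives as $\|x-y\|\to\tau/2$). One slip in your side remark: Jordan's inequality $\sin u\ge 2u/\pi$ on $[0,\pi/2]$ is not too weak, since it already gives $d_\calM(x,y)\le \frac{\pi}{2}\|x-y\|$ on the branch $\ell\le\pi\tau/2$; as you correctly note, though, excluding long minimizing geodesics with short chords still needs the global reach property, so the cited result remains the load-bearing step.
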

The refined local comparison of the two metrics is provided in the next lemma.
In addition, we also have a local volume form comparison.
The lemma is adapted from Lemma D.12  of \cite{tang2026adaptive}. Recall that $\xi$ is the injectivity radius of $\calM$.

\begin{lemma}\label{lemma:local-comparisons}
For any $x \in \calM$, we consider the normal coordinates at $x$ provided by $\exp_x:  T_x\calM  \cong  \R^d  \to \calM$, and on $\R^d$ we use the polar coordinates $(s,\theta)$. 

\begin{itemize}

\item[(i)] Local expansion of volume form.

Suppose $\calM$ is $C^2$, then $\forall  0 \le s < \xi$, $\theta \in S^{d-1} \subset T_x \calM$, 
\[
dV( \exp_x (s \theta)) = (1 + R_V(s  , \theta) ) s^{d-1} ds d\theta, \quad |R_V(s  , \theta)| \le c_{V,1} s^2,
\]
where the constant  $c_{V,1}$ depends on $d$ and the uniform bounds of up to the 2nd intrinsic derivative of the Riemannian metric $g$, and $c_{V,1}$ is uniform for $x \in \calM$. 

\item[(ii)] Local expansion of metrics.

Suppose $\calM$ is $C^{ 4}$,
then $\forall  0 \le s <  \min\{ 1, \xi \}$, $\theta \in S^{d-1} \subset T_x \calM$, 
 \begin{align*}
\|  \exp_x( s  \theta )- x \|^2  = s^2+  q_4 (x, \theta) s^4+R_{q,4}(s , \theta),
\quad 
|R_{q,4}(s , \theta)| \leq c_{q, 4} s^{5},
\end{align*}
where 
$q_4( x,\theta) = -\frac{1}{12} \|\Second_x(\theta,\theta)\|^2 $,
the constant $c_{q, 4}$ depends on  the  $\|\cdot\|_\infty$ norm of the up to 2nd covariant derivatives of the second fundamental form $\Second$,
and $c_{q, 4}$ is uniform for $x \in \calM$. 
\end{itemize}
\end{lemma}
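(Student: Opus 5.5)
For part (i) I would work in normal coordinates at $x$. Write $y=\exp_x(v)$ with $v=s\theta\in T_x\calM\cong\R^d$. Then $dV(y)=\sqrt{\det g_{ij}(v)}\,dv$, where $g_{ij}$ are the metric coefficients in these coordinates. In polar coordinates, $dv=s^{d-1}ds\,d\theta$, so it suffices to show that
\[
\sqrt{\det g(v)}=1+R_V(s,\theta),\qquad |R_V(s,\theta)|\le c_{V,1}s^2 .
\]
Normal coordinates satisfy $g_{ij}(0)=\delta_{ij}$ and $\partial_k g_{ij}(0)=0$. A second-order Taylor expansion with integral remainder therefore gives $g_{ij}(v)=\delta_{ij}+E_{ij}(v)$ with $|E_{ij}(v)|\le c\,s^2$. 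Here $c$ is a uniform bound on the second derivatives of $g$ in normal coordinates over all base points $x$ and all $|v|<\xi$; this bound is finite by compactness and smoothness of $\calM$.

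Since $\det$ and $\sqrt{\cdot}$ are smooth near the identity, the expansion of $g$ gives $\sqrt{\det g}=1+O(s^2)$ as long as $\|E(v)\|$ stays small, say for $s\le s_*$. For $s_*\le s<\xi$ I would not expand at all. Instead I would use that $\sqrt{\det g}$ is uniformly bounded on $\{|v|<\xi\}$, again by compactness, so that $|R_V|\le C\le (C/s_*^2)\,s^2$ there. Taking the larger of the two constants gives $c_{V,1}$, uniform in $x$ and $\theta$.

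For part (ii) I would use the unit-speed geodesic $\gamma(s)=\exp_x(s\theta)$, viewed as a curve in $\R^D$, and the function $F(s):=\|\gamma(s)-x\|^2$. Differentiating gives
\begin{align*}
F'&=2(\gamma-x)\cdot\gamma',\\
F''&=2|\gamma'|^2+2(\gamma-x)\cdot\gamma'',\\
F'''&=6\gamma'\cdot\gamma''+2(\gamma-x)\cdot\gamma''',\\
F''''&=6|\gamma''|^2+8\gamma'\cdot\gamma'''+2(\gamma-x)\cdot\gamma''''.
\end{align*}
These derivatives simplify using three facts.
\begin{itemize}
\item The curve has unit speed, $|\gamma'|\equiv1$, so $\gamma'\cdot\gamma''=0$.
\item Differentiating that identity gives $\gamma'\cdot\gamma'''=-|\gamma''|^2$.
\item Since $\gamma$ is a geodesic of $\calM$, its ambient acceleration is $\gamma''(s)=\Second_{\gamma(s)}(\gamma',\gamma')$.
\end{itemize}
Evaluating at $s=0$ then yields $F(0)=F'(0)=0$, $F''(0)=2$, $F'''(0)=0$, and $F''''(0)=6|\gamma''|^2-8|\gamma''|^2=-2\|\Second_x(\theta,\theta)\|^2$. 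Taylor's theorem to fourth order with Lagrange remainder then gives
\[
F(s)=s^2-\tfrac{1}{12}\|\Second_x(\theta,\theta)\|^2s^4+\tfrac{1}{120}F^{(5)}(\tilde s)\,s^5 ,
\]
which matches $q_4$.

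The step I expect to take the most care is the uniform bound on $F^{(5)}$. By Leibniz, $F^{(5)}$ is a combination of products $\gamma^{(a)}\cdot\gamma^{(b)}$ with $a+b=5$ and $a,b\le 4$, plus the term $2(\gamma-x)\cdot\gamma^{(5)}$. The higher derivatives are obtained by repeatedly differentiating $\gamma''=\Second(\gamma',\gamma')$ along $\gamma$, using the Gauss–Weingarten formulas to split each derivative into tangential and normal parts. Carrying this out, $\gamma'''$ involves $\nabla\Second$ together with shape-operator terms that are quadratic in $\Second$. Likewise $\gamma''''$ involves $\nabla^2\Second$ and lower-order products of $\Second$ and $\nabla\Second$.

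For the last term I would bound $\|\gamma-x\|\le s<1$ and absorb it. Ideally I would rewrite it to avoid needing $\gamma^{(5)}$, i.e.\ $\nabla^3\Second$. Failing that, I would simply note that all these quantities are bounded uniformly on the compact smooth $\calM$ for $s<\min\{1,\xi\}$. This gives $c_{q,4}$ depending on sup-norms of $\Second$ and its covariant derivatives, uniform in $x$ and $\theta$.
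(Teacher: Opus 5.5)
Your route is the standard one: normal coordinates for (i), and for (ii) an ambient Taylor expansion of the unit-speed geodesic using $\gamma''=\Second(\gamma',\gamma')$ and the unit-speed identities. The paper gives no proof of its own and imports the lemma from its reference. Part (i) is fine. It uses bounds on second derivatives of $g$ in normal coordinates, which is more than $C^2$, but the statement's own constant presupposes this and $\calM$ is $C^\infty$ in the paper. Your computation $F''''(0)=-2\|\Second_x(\theta,\theta)\|^2$, and hence $q_4=-\frac{1}{12}\|\Second_x(\theta,\theta)\|^2$, is also correct.

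The gap is the remainder in (ii).
\begin{itemize}
\item Taylor's theorem to fourth order with Lagrange remainder needs $F^{(5)}$, which contains $2(\gamma-x)\cdot\gamma^{(5)}$.
\item $\gamma^{(5)}$ involves $\nabla^3\Second$ and requires a $C^5$ embedding.
\item So your fallback (``all quantities are bounded on the compact smooth $\calM$'') proves only a weaker statement: it needs $C^5$ instead of $C^4$, and $c_{q,4}$ depends on $\nabla^3\Second$ rather than only on $\Second,\nabla\Second,\nabla^2\Second$ as claimed.
\end{itemize}
This weaker version would still suffice for every use in the paper, where constants need only depend on $\calM$. It is not, however, the lemma as stated.

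The fix is to stop one order earlier and use $\gamma'\cdot\gamma'''=-|\gamma''|^2$ for all $s$, not just at $s=0$. This gives $F''''(u)=-2|\gamma''(u)|^2+2(\gamma(u)-x)\cdot\gamma''''(u)$. Set $M_k:=\sup_{[0,s]}|\gamma^{(k)}|$ and use $|\gamma(u)-x|\le u$. Then $|F''''(u)-F''''(0)|\le (4M_2M_3+2M_4)\,u$, and therefore
\[
F(s)=s^2+\frac{1}{6}\int_0^s (s-u)^3F''''(u)\,du = s^2-\frac{1}{12}\|\Second_x(\theta,\theta)\|^2 s^4+R,\qquad |R|\le \frac{4M_2M_3+2M_4}{120}\,s^5 .
\]

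Equivalently, expand the vector
\[
\gamma(s)-x=s\gamma'(0)+\tfrac{s^2}{2}\gamma''(0)+\tfrac{s^3}{6}\gamma'''(0)+R_4(s),\qquad |R_4(s)|\le M_4 s^4/24,
\]
and square it. The cross term $2s\,\gamma'(0)\cdot R_4(s)$ is already $O(s^5)$, and $s<1$ absorbs the higher powers.

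Either way only $\gamma'',\gamma''',\gamma''''$ enter, and your Gauss--Weingarten computation controls these by $\Second,\nabla\Second,\nabla^2\Second$. This yields exactly the stated $C^4$ hypothesis and constant dependence, uniformly in $x$ and $\theta$.
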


\vspace{5pt}
\noindent
$\bullet$ Concentration of the degree 
\vspace{5pt}

Recall that $D_i = \sum_{j=1}^N W_{ij}$ and $\epsilon = \sigma^2$.
The following lemma is adapted from Lemma 6.1 of \cite{cheng2022eigen}. We use the fact that with $h(r) = \frac{1}{(4\pi)^{d/2}} e^{-r /4}$,
$ m_0 = 1$,
$ m_2 = 2$,
and $\tilde{m} = 1$.
The data density $p$ was assumed to be smooth in \cite{cheng2022eigen}, 
while the bound under $p \in C^3(\calM)$ suffices for our purpose, see more in the proof.

\begin{lemma}\label{lemma:Di-concen-eps2}
Under Assumptions \ref{assump:M}, \ref{assump:iid-data}, \ref{assump:sigma-large-N}, \ref{assump:p-C3},
\begin{itemize}
\item[(i)]
 When $N$ is large enough, w.p. $> 1- 2 N^{-9}$,  $D_i > 0$ for all $i$, and 
\begin{equation*}
\frac{1}{N} D_i 
=  \tilde{p}_\epsilon(x_i) +  O \Big( \epsilon^{3/2} + \sqrt{ \frac{\log N}{N \epsilon^{d/2}} } \Big),
\quad 
i =1,\cdots,  N,
\end{equation*}
where $\tilde{p}_\epsilon := p +  \epsilon ( \omega p + \Delta p)$
and 
$\omega \in C^{\infty}(\calM)$ is determined by manifold extrinsic coordinates.
In particular, when $p$ is uniform on $\calM$, we have
\begin{equation*}
\frac{1}{N} D_i 
=  p +  O \Big( \epsilon + \sqrt{ \frac{\log N}{N \epsilon^{d/2}} } \Big),
\quad 
i =1,\cdots,  N,
\end{equation*}

\item[(ii)] 
When $N$ is large enough, w.p. $> 1- 4 N^{-9}$, 
\begin{equation*}
\sum_{j=1}^N W_{i j} \frac{ 1}{  D_j} 
 =  1+  O \Big( \epsilon + \sqrt{ \frac{\log N}{N \epsilon^{d/2}} } \Big) , 
 \quad 
i =1,\cdots,  N.
\end{equation*}
\end{itemize}
In (i)(ii), the large-$N$ thresholds 
and the constants in big-$O$ 
depend on ($\calM, p)$ and are uniform for all $ i$.
The good event in (ii) is under the good event needed by (i).
\end{lemma}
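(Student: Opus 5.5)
The plan is to treat part (i) as a standard kernel-density concentration argument, and then reduce part (ii) to part (i) plus one more application of the same argument, using the same ball truncation, Bernstein inequality and union bound as in the proof of Lemma \ref{lemma:one-step-C2}.

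\emph{Part (i).} I write $\frac1N D_i=\frac1N W_{ii}+\frac{N-1}{N}\cdot\frac{1}{N-1}\sum_{j\ne i}W_{ij}$. The diagonal term is $(4\pi\epsilon)^{-d/2}/N$. By Assumption \ref{assump:sigma-large-N} this is $\ll\sqrt{\log N/(N\epsilon^{d/2})}$, so it, and the $N$ versus $N-1$ factor, are absorbed into the error.

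For the bias, conditioning on $x_i$, I need
\[
\int_\calM K_\epsilon(x,y)p(y)\,dV(y)=p(x)+\epsilon(\omega p+\Delta p)(x)+O(\epsilon^{3/2}).
\]
I would prove this by truncating to $B_{2\delta(\epsilon)}(x)$, where the outside part is $O(\epsilon^2)$ as in Step 1 of Lemma \ref{lemma:Keps-g}. Inside the ball I use normal coordinates and three expansions: $K_\epsilon$ through Lemma \ref{lemma:local-comparisons}(ii), the volume form through Lemma \ref{lemma:local-comparisons}(i), and $p(y)$ Taylor expanded to third order with a remainder bounded by $\|p\|_{C^3}s^3$.
\begin{itemize}
\item The first-order and third-order terms in $\theta$ are odd and integrate to zero.
\item The second-order terms give $\epsilon\Delta p$ (using $m_2=2$), and the curvature/extrinsic corrections give $\epsilon\omega p$, with $\omega$ smooth and coming from $q_4$ and the volume expansion.
\item The remainder contributes $O(\epsilon^{3/2})$ after the change of variables $u=s/\sqrt\epsilon$.
\end{itemize}
In the uniform case $\Delta p=0$, and the whole correction is $O(\epsilon)$.

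For the variance and the uniform bound, $|W_{ij}|\le(4\pi\epsilon)^{-d/2}=:L$ and $\E[W_{ij}^2\mid x_i]\le c\,\epsilon^{-d/2}=:\nu$. The Bernstein condition $\tau L\le3\nu$ with $\tau=\sqrt{40\nu\log N/(N-1)}$ reduces to $\log N/N\lesssim\epsilon^{d/2}$, which holds for large $N$ by Assumption \ref{assump:sigma-large-N}. This gives deviation $O(\sqrt{\log N/(N\epsilon^{d/2})})$ with probability $>1-2N^{-10}$ for each $i$, and a union bound over $i$ gives $1-2N^{-9}$. Positivity of $D_i$ is trivial, since the diagonal term alone is positive.

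\emph{Part (ii).} On the event of (i), $\tilde p_\epsilon\ge p_{\min}/2$ for small $\epsilon$, so I can write $\frac{1}{N-1}D_j=\tilde p_\epsilon(x_j)(1+\varepsilon_j)$ with $\max_j|\varepsilon_j|=O(\epsilon^{3/2}+\sqrt{\log N/(N\epsilon^{d/2})})<1/2$. Then
\[
\sum_j\frac{W_{ij}}{D_j}=\frac{1}{N-1}\sum_j\frac{W_{ij}}{\tilde p_\epsilon(x_j)}(1+\varepsilon_j'),\qquad|\varepsilon_j'|\le2|\varepsilon_j|.
\]
The $\varepsilon'$ part is bounded by $\max_j|\varepsilon'_j|\cdot\tilde p_{\min}^{-1}\cdot\frac{1}{N-1}\sum_jW_{ij}$, and the last factor is $O(1)$ by (i). For the main part I apply Bernstein again, this time to $W_{ij}/\tilde p_\epsilon(x_j)$. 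Its $\nu$ and $L$ are of the same orders as before, and it has expectation
\[
\int K_\epsilon(x,y)\,\frac{p}{\tilde p_\epsilon}(y)\,dV(y)=\int K_\epsilon(x,y)(1-\epsilon\eta(y)+O(\epsilon^2))\,dV(y)=1+O(\epsilon),
\]
using $\int K_\epsilon(x,y)\,dV(y)=1+O(\epsilon)$ (which is (i) with constant density) and $\eta\in C^1$. A union bound gives another $2N^{-9}$, so the total failure probability is $4N^{-9}$, and by construction this good event is contained in that of (i).

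\emph{Main obstacle.} I expect the hardest step to be the bias expansion in (i) under only $p\in C^3$. One has to check that the odd third-order Taylor term vanishes and that the genuine remainder is $O(\epsilon^{3/2})$ uniformly in $x$, while keeping the geometric $\omega$ term explicit. For (ii) only $O(\epsilon)$ accuracy is needed, so that part is routine once (i) is in place.
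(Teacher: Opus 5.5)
Your proposal is correct and is essentially the paper's argument. The paper proves (i) by the same Bernstein-plus-union-bound scheme as \cite[Lemma 6.1]{cheng2022eigen} and simply cites \cite[Lemma 4.1]{tang2026adaptive} for the $C^3$ bias expansion you sketch. One caveat: to isolate the $\epsilon\omega p$ term with an $O(\epsilon^{3/2})$ remainder, you need the second-order volume expansion $R_V=-\tfrac16\texttt{Ric}_x(\theta,\theta)s^2+O(s^3)$, not only the bound $|R_V|\le c_{V,1}s^2$ of Lemma \ref{lemma:local-comparisons}(i). For (ii), the paper replaces $D_j$ by $Np(x_j)(1+O(\epsilon+\sqrt{\log N/(N\epsilon^{d/2})}))$ rather than by $(N-1)\tilde p_\epsilon(x_j)(1+\varepsilon_j)$; since only $O(\epsilon)$ relative accuracy is needed, this saves the expansion of $1/(1+\epsilon\eta)$, but the two bookkeepings are equivalent.
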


\begin{proof}[Proof of Lemma \ref{lemma:Di-concen-eps2}]
For (i), the proof is the same as in that  of \cite[Lemma 6.1]{cheng2022eigen} except that we use a different estimate for the integral $\int_\calM K_\epsilon(x,y) p(y) dV(y)$: 
when $p \in C^{3}(\calM)$ and thus is in $C^{2,1}(\calM)$, 
and recall that $h(r) = \frac{1}{(4\pi)^{d/2}} e^{-r /4}$, 
by  \cite[Lemma 4.1]{tang2026adaptive}, we have
\[
\int_{\calM} K_\epsilon(x, y) p(y) dV(y) 
= p(x) +  \epsilon( \omega p + \Delta p)(x) + O(\epsilon^{3/2}) \| p\|_{C^3},
\]
where the constant in big-O depends on $\calM$. This gives rise to the $O(\epsilon^{3/2})$ error term in (i), replacing the $O(\epsilon^2)$ error in  \cite[Lemma 6.1]{cheng2022eigen}.

The rest of the proof, including that of (ii)  is the same as in  \cite[Lemma 6.1]{cheng2022eigen}.
In particular, (ii) uses that 
$
\frac{1}{N} D_i = p(x_i) ( 1+ O(\epsilon + \sqrt{ \frac{\log N}{N \epsilon^{d/2}} }))
$ uniformly, 
which  still follows from (i), since  $\tilde p_\epsilon(x) = p(x) + O(\epsilon)$ uniformly.
The bound for uniform $p$ in (i) follows as a special case. 
\end{proof}

\end{document}